\documentclass{article}

% if you need to pass options to natbib, use, e.g.:
% \PassOptionsToPackage{numbers, compress}{natbib}
% before loading nips_2018

% ready for submission
\usepackage[final,nonatbib]{nips_2018}

% to compile a preprint version, e.g., for submission to arXiv, add
% add the [preprint] option:
% \usepackage[preprint]{nips_2018}

% to compile a camera-ready version, add the [final] option, e.g.:
% \usepackage[final]{nips_2018}

% to avoid loading the natbib package, add option nonatbib:
% \usepackage[nonatbib]{nips_2018}

\usepackage[utf8]{inputenc} % allow utf-8 input
\usepackage[T1]{fontenc}    % use 8-bit T1 fonts
%\usepackage{hyperref}       % hyperlinks
%FV Our hyperref
\usepackage[colorlinks]{hyperref}
%---
\usepackage{url}            % simple URL typesetting
\usepackage{booktabs}       % professional-quality tables
\usepackage{amsfonts}       % blackboard math symbols
\usepackage{nicefrac}       % compact symbols for 1/2, etc.
\usepackage{microtype}      %microtypography
\usepackage{graphicx}
\usepackage[normalem]{ulem}

%\title{Formatting instructions for NIPS 2018}

% The \author macro works with any number of authors. There are two
% commands used to separate the names and addresses of multiple
% authors: \And and \AND.
%
% Using \And between authors leaves it to LaTeX to determine where to
% break the lines. Using \AND forces a line break at that point. So,
% if LaTeX puts 3 of 4 authors names on the first line, and the last
% on the second line, try using \AND instead of \And before the third
% author name.

%\author{
 % David S.~Hippocampus\thanks{Use footnote for providing further
  %  information about author (webpage, alternative
  %  address)---\emph{not} for acknowledging funding agencies.} \\
 % Department of Computer Science\\
 % Cranberry-Lemon University\\
 % Pittsburgh, PA 15213 \\
 % \texttt{hippo@cs.cranberry-lemon.edu} \\
  %% examples of more authors
  %% \And
  %% Coauthor \\
  %% Affiliation \\
  %% Address \\
  %% \texttt{email} \\
  %% \AND
  %% Coauthor \\
  %% Affiliation \\
  %% Address \\
  %% \texttt{email} \\
  %% \And
  %% Coauthor \\
  %% Affiliation \\
  %% Address \\
  %% \texttt{email} \\
  %% \And
  %% Coauthor \\
  %% Affiliation \\
  %% Address \\
  %% \texttt{email} \\
%}
%FV Our macro set
\usepackage{amsmath}
\usepackage{amssymb}
\usepackage{graphicx}
\usepackage{url}
\usepackage[small,bf]{caption}
\usepackage{subcaption}
\usepackage{array}

\usepackage[ruled,vlined]{algorithm2e}
\usepackage[usenames, dvipsnames]{color}
\usepackage{csquotes}
\newtheorem{theorem}{Theorem}
\newtheorem{lemma}{Lemma}
\newtheorem{fact}{Fact}

\newtheorem{remark}{Remark}
\newtheorem{definition}{Definition}

\newtheorem{corollary}{Corollary}
\newcommand{\BlackBox}{\rule{1.5ex}{1.5ex}}  % end of proof
\newenvironment{proof}{\par\noindent{\bf Proof\ }}{\hfill\BlackBox\\[2mm]}

\newcommand{\field}[1]{\mathbb{#1}}

\newcommand{\E}{\field{E}}

\newcommand{\defeq}{\stackrel{\rm def}{=}}

\newcommand{\degree}{\mathrm{deg}}

\newcommand{\scF}{\mathcal{F}}
\newcommand{\scB}{\mathcal{B}}

\newcommand{\scM}{\mathcal{M}}
\newcommand{\scN}{\mathcal{N}}

\newcommand{\bv}{\boldsymbol{v}}

\newcommand{\bp}{\boldsymbol{p}}
\newcommand{\bd}{\boldsymbol{d}}
\newcommand{\bc}{\boldsymbol{c}}

\newcommand{\bzero}{\boldsymbol{0}}

\newcommand{\scC}{\mathcal{C}}

\newcommand{\ctg}{\mathrm{\textsl{cluster}}}
\newcommand{\scO}{\mathcal{O}}
\newcommand{\scE}{\mathcal{E}}
\newcommand{\scA}{\mathcal{A}}
\newcommand{\scT}{\mathcal{T}}

\newcommand{\scV}{\mathcal{V}}

\renewcommand{\Pr}{\field{P}}

\newcommand{\wt}{\widetilde}
\newcommand{\scP}{\mathcal{P}}

\newcommand{\cV}{V^{\mathsf{c}}}
\usepackage{stackengine}
\def\defeq{\mathrel{\ensurestackMath{\stackon[1pt]{=}
{\scriptscriptstyle\Delta}}}}

\newcommand{\idom}{\textsc{idomm}}
\newcommand{\om}{\textsc{oomm}}
\newcommand{\uromm}{\textsc{uromm}}
\newcommand{\smi}{\textsc{smile}}
\newcommand{\scEB}{\mathcal{E}^B}
\newcommand{\scEG}{\mathcal{E}^G}
\newcommand{\bB}{\boldsymbol{B}}
\newcommand{\bG}{\boldsymbol{G}}
\newcommand{\bM}{\boldsymbol{M}}

\newcommand{\bg}{\boldsymbol{g}}
\definecolor{verylightgray}{rgb}{0.8,0.8,0.8}

\title{Online Reciprocal Recommendation with Theoretical Performance Guarantees
}
%\title{Scalable Reciprocal Recommenders with Bidirectional Feedback: A Theoretical and Practical Approach for Online Matching\\{\normalsize Draft 3}
%}
\author{	   Fabio Vitale\\
	   Department of Computer Science\\
       Sapienza University of Rome (Italy) \& INRIA Lille (France)\\
       Rome, Italy \& Lille, France\\
 	   \texttt{fabio.vitale@inria.fr}  \\
       \And
       Nikos Parotsidis\\
       Department of Computer Science\\
       University of Rome Tor Vergata\\
       Rome, Italy\\
	   \texttt{nikos.parotsidis@uniroma2.it}
	   \And
		Claudio Gentile\\
       INRIA Lille Nord Europe \& Google (New York, USA)\\
       Lille, France \& New York, USA\\
       \texttt{cla.gentile@gmail.com}\\
}
%\author{
 % David S.~Hippocampus\thanks{Use footnote for providing further
  %  information about author (webpage, alternative
  %  address)---\emph{not} for acknowledging funding agencies.} \\
 % Department of Computer Science\\
 % Cranberry-Lemon University\\
 % Pittsburgh, PA 15213 \\
 % \texttt{hippo@cs.cranberry-lemon.edu} \\
  %% examples of more authors
  %% \And
  %% Coauthor \\
  %% Affiliation \\
  %% Address \\
  %% \texttt{email} \\
  %% \AND
  %% Coauthor \\
  %% Affiliation \\
  %% Address \\
  %% \texttt{email} \\
  %% \And
  %% Coauthor \\
  %% Affiliation \\
  %% Address \\
  %% \texttt{email} \\
  %% \And
  %% Coauthor \\
  %% Affiliation \\
  %% Address \\
  %% \texttt{email} \\
%}

%%%%%%%%%%%%%%%%
\begin{document}
% \nipsfinalcopy is no longer used

\maketitle

\begin{abstract}
%
%\vspace{-0.1in}
A reciprocal recommendation problem is one where the goal of learning is not just to predict a user’s preference towards a passive item (e.g., a book), but to recommend the targeted user on one side another user from the other side such that a mutual interest between the two exists. The problem thus is sharply different from the more traditional items-to-users recommendation, since a good match requires meeting the preferences at both sides.
We initiate a rigorous theoretical investigation of the reciprocal recommendation task in a specific framework of sequential learning. We point out general limitations, formulate reasonable assumptions enabling effective learning and, under these assumptions, we design and analyze a computationally efficient algorithm that uncovers mutual likes at a pace comparable to that achieved by a clearvoyant algorithm knowing all user preferences in advance. Finally, we validate our algorithm against synthetic and real-world datasets, showing improved empirical performance over simple baselines.
\end{abstract}

%\vspace{-0.15in}
\section{Introduction}
%
%\vspace{-0.1in}
Recommendation Systems are at the core of many successful online businesses, from e-commerce, to online streaming, to computational advertising, and beyond. These systems have extensively been investigated by both academic and industrial researchers by following the standard paradigm of items-to-users preference prediction/recommendation. In this standard paradigm, a targeted user is presented with a list of items that s/he may prefer according to a preference profile that the system has learned based on both explicit user features (item data, demographic data, explicitly declared preferences, etc.) and past user activity. In more recent years, due to their hugely increasing interest in the online dating and the job recommendation domains, a special kind of recommendation systems called {\em Reciprocal Recommendation Systems} (RRS) have gained big momentum. The reciprocal recommendation problem is sharply different from the more traditional items-to-users recommendation, {\em since recommendations must satisfy both parties}, i.e., both parties can express their likes and dislikes
and a good match requires meeting the preferences of both. Examples of RRS include, for instance: online recruitment systems (e.g., \textit{LinkedIn}),
\footnote
{
\url{https://www.linkedin.com/}.
}
where a job seeker searches for jobs matching his/her preferences, say salary and expectations, and a recruiter who seeks suitable candidates to fulfil the job requirements; heterosexual online dating systems (e.g., \textit{Tinder}),
\footnote
{
\url{https://tinder.com}.
}
where people have the common goal of finding a partner of the opposite gender; roommate matching systems (e.g., \textit{Badi}),
\footnote
{
\url{https://badiapp.com/en}.
} 
%an app 
used to connect people looking for a room to those looking for a roommate,
%Also in this case, both parties have their own constraints and requirements to be fulfilled before a successful rental agreement can be established. 
%Other illustrative examples include 
online mentoring systems, customer-to-customer marketplaces, etc.

%\vspace{-0.03in}
From a Machine Learning perspective, the main challenge in a RRS is thus to learn {\em reciprocated} preferences, since the goal of the system is not just to
predict a user’s preference towards a passive item (a book, a movie, etc), but to recommend the targeted user on one side another user from the other side such that a mutual interest exists. Importantly enough, the interaction the two involved users have with the system is often {\em staged} and {\em unsynced}. Consider, for instance, a scenario where a user, Geena, is recommended to another user, Bob. The recommendation is successful only if both Geena and Bob mutually agree that the recommendation is good. In the first stage, Bob logs into the system and Geena gets recommended to him; this is like in a standard recommendation system: Bob will give a feedback (say, positive) to the system regarding Geena. Geena may never know that she has been recommended to Bob. In a subsequent stage, some time in the future, also Geena logs in. In an attempt to find a match, the system now recommends Bob to Geena. It is only when also Geena responds positively that the reciprocal recommendation becomes successful.

%\vspace{-0.03in}
The problem of reciprocal recommendation has so far being studied mainly in the Data Mining, Recommendation Systems, and Social Network Analysis literature (e.g., \cite{dma10,Akehurst2011,Li2012,kgg12,hzs13,Pizzato2013,Xia2015,Alanazi,mm18}), with some interesting adaptations of standard collaborative filtering approaches to user feature similarity, but it has remained largely unexplored from a theoretical standpoint. Despite each application domain has its own specificity,\footnote
{
For instance, users in an online dating system have relevant visual features, and the system needs specific care in removing popular user bias, i.e., ensuring that popular users are not recommended more often than unpopular ones. 
%Moreover, a large fraction of bad recommendations may make users feel rejected.
}
in this paper we abstract such details away, and focus on the broad problem of building matches between the two parties in the reciprocal recommendation problem based on behavioral information only. In particular, we do not consider {\em explicit} user preferences (e.g., those evinced by user profiles), but only the {\em implicit} ones, i.e., those derived from past user behavior. The explicit-vs-implicit user features is a standard dichotomy in Recommendation System practice, and it is by now common knowledge that collaborative effects (aka, implicit features) carry far more information about actual user preferences than explicit features, like, for instance, demographic metadata\cite{pt09}. Similar experimental findings are also reported in the context of RRS in the online dating domain~\cite{Akehurst2012}.

%\vspace{-0.03in}
In this paper, we initiate a rigorous theoretical investigation of the reciprocal recommendation problem, and we view it as a sequential learning problem where learning proceeds in a sequence of rounds. At each round, a user from one of the two parties becomes active and, based on past feedback, the learning algorithm (called {\em matchmaker}) is compelled to recommend one user from the other party. The broad goal of the algorithm is to uncover as many mutual interests (called {\em matches}) as possible, and to do so {\em as quickly as possible}. %, so as to minimize the fraction of bad recommendations. 
We formalize our learning model in Section \ref{s:preliminaries}. After observing that, in the absence of structural assumptions about matches, learning is virtually precluded (Section \ref{s:limitations_and_omni}), we come to consider a reasonable clusterability assumption on the preference of users at both sides. Under these assumptions, we design and analyze a computationally efficient matchmaking algorithm that leverages the correlation across matches. We show that the number of uncovered matches within $T$ rounds is comparable (up to constant factors) to those achieved by an optimal algorithm that knows beforehand all user preferences, provided $T$ and the total number of matches to be uncovered is not too small (Sections \ref{s:limitations_and_omni}, and \ref{s:successfulApproach}). Finally, in Section \ref{s:exp} we present a suite of initial experiments, where we contrast (a version of) our algorithm to noncluster-based random baselines on both synthetic and publicly available real-world benchmarks in the domain of online dating. Our experiments serve the twofold purpuse of validating our structural assumptions on user preferences against real data, and showing the improved matchmaking performance of our algorithm, as compared to simple noncluster-based baselines.

\section{Preliminaries}\label{s:preliminaries}
%
%\vspace{-0.1in}
We first introduce our basic notation.
We have a set of users $V$ partitioned into two parties. Though a number of alternative metaphores could be adopted here, for concreteness, we call the two parties $B$ (for ``boys") and $G$ (for ``girls"). Throughout this paper, $g$, $g'$ and $g''$ will be used to denote generic members of $G$, and $b$, $b'$, and $b''$ to denote generic members of $B$. For simplicity, we assume the two parties $B$ and $G$ have the same size $n$. A hidden ground truth about the mutual pre
ferences of the members of the two parties is encoded by a sign function $\sigma\,:\,(B\times G) \cup (G \times B) \rightarrow \{-1,+1\}$. Specifically, for a pairing $(b,g) \in B\times G$, the assignment $\sigma(b,g) = +1$ means that boy $b$ likes girl $g$, and $\sigma(b,g) = -1$ means that boy $b$ dislikes girl $g$. Likewise, given pairing $(g,b) \in G\times B$, we have $\sigma(g,b) = +1$ when girl $g$ likes boy $b$, and $\sigma(g,b) = -1$ when girl $g$ dislikes boy $b$. The ground truth $\sigma$ therefore defines a directed bipartite signed graph collectively denoted as $(\langle B,G\rangle,E,\sigma)$, where $E$, the set of directed edges in this graph, is simply $(B\times G) \cup (G \times B)$, i.e., the sef of all possible $2n^2$ directed egdes in this bipartite graph. A ``+1" edge will sometimes be called a positive edge, while a ``-1" edge will be called a negative edge. Any pair of directed edges $(g,b) \in G\times B$ and $(b,g) \in B\times G$ involving the same two subjects $g$ and $b$ is called a {\em reciprocal} pair of edges. We also say that $(g,b)$ is reciprocal to $(b,g)$, and vice versa. The pairing of signed edges $(g,b)$ and $(b,g)$ is called a {\em match} if and only if $\sigma(b,g) = \sigma(g,b) = +1$. The total number of matches will often be denoted by $M$. See Figure \ref{f:1} for a pictorial illustration.

\begin{figure}[t]
\begin{picture}(-10,150)(-50,160)
\scalebox{0.45}{\includegraphics{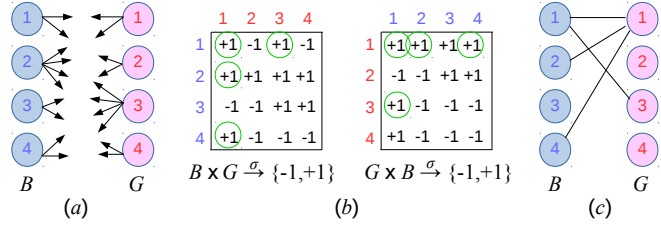}}
\end{picture}
\vspace{-1.3in}
\caption{{\bf (a)} The (complete and directed) bipartite graph $(\langle B,G\rangle,E,\sigma)$ with $n = |B| = |G| = 4$, edges are only sketched. {\bf (b)} Representation of the $\sigma$ function through its two pieces $\sigma\,:\, B\times G \rightarrow \{-1,+1\}$ ($B\times G$ matrix on the left), and $\sigma\,:\, G\times B \rightarrow \{-1,+1\}$ ($G\times B$ matrix on the right). For instance, in this graph, Boy 1 likes Girl 1 and Girl 3, and dislikes Girl 2 and Girl 4, while Girl 3 likes Boy 1, and dislikes Boys 2, 3, and 4. Out of the $n^2= 16$ pairs of reciprocal edges, this graph admits only $M=4$ matches, which are denoted by green circles on both matrices. For instance, the pairing of edges $(1,3)$ and $(3,1)$ are a match since Boy 1 likes Girl 3 and, at the same time, Girl 3 likes Boy 1. {\bf (c)} The associated (undirected and bipartite) matching graph $\scM$. We have, for instance, $\degree_{\scM}({\mbox{Girl 1}}) =3$, and $\degree_{\scM}({\mbox{Boy 2}}) =1$.
\label{f:1}
}
\end{figure}

Coarsely speaking, the goal of a learning algorithm $A$ is to uncover in a sequential fashion as many matches as possible as quickly as possible. More precisely, we are given a time horizon $T \leq n^2$, e.g., $T = n\sqrt{n}$, and at each round $t = 1, \ldots, T$:
\begin{itemize}
%\vspace{-0.1in}
\item[(1$_B$)] $A$ receives the id of a boy $b$ chosen uniformly at random\footnote
{
Though different distributional assumptions could be made, for technical simplicity in this paper we decided to focus on the uniform distribution only.
} 
from $B$ ($b$ is meant to be the ``next boy" that logs into the system); 
%\vspace{-0.05in}
\item[(2$_B$)] $A$ selects a girl $g' \in G$ to recommend to $b$; 
%\vspace{-0.05in}
\item[(3$_B$)] $b$ provides feedback to the learner, in that the sign $\sigma(b,g')$ of the selected boy-to-girl edge is revealed to $A$. 
%\vspace{-0.1in}
\end{itemize}
Within the same round $t$, the  three steps described above are subsequently executed after switching the roles of $G$ and $B$ (and will therefore be called Steps (1$_G$), (2$_G$), and (3$_G$)). Hence, each round $t$ is made up of two halves, the first half where a boy at random is logged into the system and the learner $A$ is compelled to select a girl, and the second half where a girl at random is logged in and $A$ has to select a boy. 
Thus at each round $t$, $A$ observes the sign of the two directed edges $(b,g')$ and $(g,b')$, where $b \in B$ and $g \in G$ are generated uniformly at random by the environment, and $g'$ and $b'$ are the outcome of $A$'s recommendation effort. Notice that we assume the ground truth encoded by $\sigma$ is persistent and noiseless, so that whereas the same user (boy or girl) may recur several times throughout the rounds due to their random generation, there is no point for the learner to request the sign of the same edge twice at two different rounds.
The goal of algorithm $A$ is to maximize the number of uncovered matches within the $T$ rounds. The sign of the two reciprocal edges giving rise to a match need not be selected by $A$ in the same round; the round where the match is uncovered is the time when the reciprocating edge is selected, e.g., if in round $t_1$ we observe $\sigma(b,g') = -1$, $\sigma(g,b') = +1$, and in round $t_2 > t_1$ we observe $\sigma(b',g) = +1$, $\sigma(g'',b'') = +1$, we say that the match involving $b'$ and $g$ has been uncovered only in round $t_2$. In fact, if $A$ has uncovered a positive edge $g \rightarrow b'$ in (the second half of) round $t_1$, the reciprocating positive edge $(b',g)$ need not be uncovered any time soon, since $A$ has at the very least to wait until $b'$ will log into the system, an event which on average will occur only $n$ rounds later.

%\vspace{-0.04in}
We call {\em matching graph}, and denote it by $\scM$, the bipartite and {\em undirected} graph having $B\cup G$ as nodes, where $(b,g) \in B\times G$ is an edge in $\scM$ if and only if $b$ and $g$ determine a match in the original graph $(\langle B,G\rangle,E,\sigma)$. Given $b \in B$, we let $\scN_{\scM}(b) \subseteq G$ be the set of matching girls for $b$ according to $\sigma$, and $\degree_{\scM}(b)$ be the number of such girls. $\scN_{\scM}(g)$ and $\degree_{\scM}(g)$ are defined symmetrically. See again Figure \ref{f:1} for an example.

\iffalse
*************************
Finally, we call {\em uniformly random bipartite graph} any undirected bipartite graph with vertex set partition $\scP=(V_1,V_2)$, where the edge set is generated as follows: For each pair of nodes $u \in V_1$ and $u' \in V_2$, an edge connecting $u$ with $u'$ is created independently with the same probability $p \in [0,1]$.  
*************************
\fi

%\vspace{-0.04in}
The performance of algorithm $A$ is measured by the number of matches found by $A$ within the $T$ rounds. Specifically, if $M_t(A)$ is the number of matches uncovered by $A$ after $t$ rounds of a given run, we would like to obtain {\em lower} bounds on $M_T(A)$ that hold {\em with high probability} over the random generation of boys and girls that log into the system 
%(step (1) in the first half and second half of rounds $t = 1,\ldots,T$), 
as well as the internal randomization of $A$. To this effect, we shall repeatedly use in our statements the acronym {\em w.h.p} to signify with probability at least $1-\scO(\frac{1}{n})$, as $n \rightarrow \infty$. It will also be convenient to denote by $E_t(A)$ the set of directed edges selected by $A$ during the first $t$ rounds, with $E_0(A)=\emptyset$. A given run of $A$ may therefore be summarized by the sequence $\{E_t(A)\}_{t=1}^T$. Likewise, $E^r_t(A)$ will denote the set of reciprocal (not necessarily matching) directed edges selected by $A$ up to time $t$. Finally, $E^r$ will denote the set of {\em all} $|B|\cdot|G|=n^2$ pairs of reciprocal (not necessarily matching) edges between $B$ and $G$.

We will first show (Section \ref{s:limitations_and_omni}) that in the absence of further assumptions on the way the matches are located, there is not much one can do but try and simulate a random sampler. In order to further illustrate our model, the same section introduces a reference optimal behavior that assumes prior knowledge of the whole sign fuction $\sigma$. This will be taken as a yardstick to be contrasted to the performance of our algorithm \smi\, (Section \ref{s:successfulApproach}) that works under more specific, yet reasonable, structural assumptions on $\sigma$.

%\input{protocol}
%\input{contributions}

%\vspace{-0.1in}
\section{General Limitations and Optimal Behavior}\label{s:limitations_and_omni}
%
%\vspace{-0.1in}
We now show\footnote
{
All proofs are provided in the appendix.
%Due to space limitations, all proofs are provided in the supplementary material.
} 
that in the absence of specific assumptions on $\sigma$, the best thing to do in order to uncover matches is to reciprocate at random, no matter how big the number $M$ of matches actually is.
% general {\em upper} bound on the number of matches that can be accomplished when $\sigma$ is not constrained to satisfy any assumption. 
%
%\vspace{-0.15in}
\begin{theorem}\label{th:adversarialUB}
Given $B$ and $G$ such that $|B|=|G|=n$, and any integer $m\le \frac{n^2}{2}$, there exists a randomized strategy for generating $\sigma$ such that $M = m$, and the expected number of matches uncovered by {\em any} algorithm $A$ operating on $(\langle B,G\rangle,E,\sigma)$ satisfies\footnote
{
Recall that an {\em upper} bound on $M_T(A)$ is a negative result here, since we are aimed at making $M_T(A)$ as large as possible.
}
\[
\E M_T(A)=\scO\left(\frac{T}{n^2}M\right)~.
\]
%\vspace{-0.07in}
\end{theorem}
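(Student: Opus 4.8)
The plan is to exhibit an adversarial distribution over $\sigma$ that ``hides'' the matches in a way that makes them indistinguishable from a random edge unless the algorithm has already probed one side of the reciprocal pair. First I would describe the construction: pick a uniformly random injection-like structure that plants exactly $m$ matches. Concretely, choose a uniformly random perfect matching–like bipartite graph on $\langle B,G\rangle$ with exactly $m$ edges (e.g.\ pick $m$ of the $n^2$ reciprocal pairs uniformly at random, or, to keep degrees balanced, choose a random permutation-based structure supplemented to reach $m$ edges); set $\sigma(b,g)=\sigma(g,b)=+1$ on each planted pair and $\sigma=-1$ on every other directed edge. Then $M=m$ deterministically. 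The key point is that conditioned on the history, any directed edge $(b,g)$ whose reciprocal $(g,b)$ has not yet been queried is a ``planted positive'' with probability essentially $m/n^2$ (up to lower-order corrections coming from the edges already revealed), independently of the algorithm's strategy, because the planting is symmetric under relabeling of the yet-untouched part of the graph.

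The main body of the argument is a counting/union bound over the $T$ rounds. In each of the two halves of a round, the algorithm selects one directed edge and observes its sign; over $T$ rounds it selects at most $2T$ directed edges, hence touches at most $2T$ reciprocal pairs. A match involving $b'$ and $g$ is ``uncovered'' only when the algorithm selects the reciprocating edge. I would split the selected edges into two types: (i) edges whose reciprocal has already been queried and found to be $+1$ — these are the only ones that can ``complete'' a match, and there are at most as many of them as there are previously-discovered positive edges, which in turn is at most the number of positives seen so far; and (ii) edges whose reciprocal has not been queried — for each such edge the conditional probability it is a planted positive is $\scO(m/n^2)$, and even if it is, it does not immediately complete a match. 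Summing the per-step success probabilities, the expected number of positive edges discovered over $2T$ selections is $\scO(Tm/n^2)$, and a match requires \emph{two} reciprocal positives to both be among the selected edges, so $\E M_T(A) = \scO(Tm/n^2) = \scO\bigl(\tfrac{T}{n^2}M\bigr)$. One has to be slightly careful that the randomly generated logins in steps $(1_B)$/$(1_G)$ only help the adversary's bound (they further constrain which $b'$, $g$ can be completed), so they can be ignored for an upper bound, or handled by noting the analysis above never used the login distribution.

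The step I expect to be the main obstacle is making the conditional-probability claim rigorous: after several rounds the algorithm has observed a set of signs, and I must argue that on the complement of the queried reciprocal pairs the planted set is still ``uniform enough'' that each unqueried edge is positive with probability $\scO(m/n^2)$. The clean way is to condition on the entire set $E^r_t(A)$ of reciprocal pairs touched so far together with all revealed signs, and observe that the planted matching restricted to the untouched pairs is a uniformly random size-$(m-k)$ subset of the untouched pairs, where $k\le \#\{\text{matches already found}\}$; since there are $n^2-\scO(T)$ untouched pairs, each is planted with probability $\tfrac{m-k}{n^2-\scO(T)}=\scO(m/n^2)$ as long as $T=\scO(n^2)$, which holds by assumption $T\le n^2$. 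A subtlety is that $E^r_t(A)$ depends on the signs revealed so far, so this conditioning must be done carefully (e.g.\ by a martingale/optional-stopping or a filtration argument over the sequence of selections), but the symmetry of the uniform planting makes the conditional distribution explicit, so no deep probabilistic machinery is needed — only bookkeeping of what information the algorithm could possibly have extracted in $2T$ probes.
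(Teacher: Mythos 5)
Your proposal is correct and follows essentially the same route as the paper: the same adversarial construction (plant $m$ reciprocal pairs chosen uniformly at random among the $n^2$ pairs, all other signs $-1$), followed by the same counting that the algorithm touches only $\scO(T)$ reciprocal pairs, each of which is a match with probability $\scO(M/n^2)$. The only difference is bookkeeping: the paper sums over the reciprocal-closed edge set $E'_T(A)$ by plain linearity of expectation, whereas you bound conditional probabilities at first-touch times — your extra care about conditioning is sound (and in the regime $T=\Omega(n^2)$ where your denominator $n^2-\scO(T)$ degrades, the claimed bound is trivially true anyway).
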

An algorithm matching the above upper bound is described next. We call this algorithm \om~(Oblivious Online Match Maker), 
%and prove it is optimal when no assumption is made on $\sigma$. 
The main idea is to develop a strategy that is able to draw uniformly at random as many pairs of reciprocal edges as possible from $E_r$ (recall that $E_r$ is 
the set of {\em all} reciprocal edges between $B$ and $G$). In particular, within the $T$ rounds, \om\, will draw uniformly at random $\Theta(T)$-many such pairs.
The pseudocode of \om\,is given next. For brevity, throughout this paper an algorithm will be described only through Steps ($2_B$) and ($2_G$) -- recall Section~\ref{s:preliminaries}. % when the learner is compelled to serve $b\in B$ and $g\in G$ respectively.
%\vspace{-0.1in}

%
\begin{algorithm}
    \SetKwInOut{Input}{\scriptsize{$\triangleright$ INPUT}}
    \SetKwInOut{Step}{$\bullet$ Step}
    	\SetKwBlock{BeginSelectB}{\textbf{$\bullet$ Step 2$_{\bB}$}}{end}
	\SetKwBlock{BeginSelectG}{\textbf{$\bullet$ Step 2$_{\bG}$}}{end}
    \Input{$B$ and $G$}
    %\vspace{0.3cm}
    {At each round $t$:}\,  (2$_B$) Select $g'$ uniformly at random from $G$\,;\\
                         \hspace{0.97in}(2$_G$) $B_{g,t} \leftarrow \{b''\in B:~~(b'',g) \in E_t(\om),~(g,b'')\not\in E_{t-1}(\om)\}$\;
                     \hspace{1.23in} {\bf If} $B_{g,t} \neq \emptyset$ {\bf then} select $b'$ uniformly at random from $B_{g,t}$\\ 
                                                    \hspace{1.91in}  {\bf else} select $b'$ uniformly at random from $B$~.
    \caption{\om~(Oblivious~Online~Match~Maker)}
\end{algorithm}
%
%%%
%%%
%%%

%\vspace{-0.12in}
\om\, simply operates as follows.
%When a boy $b$ is selected at step $1_B$, it chooses 
In Step ($2_B$) of round $t$, the algorithm chooses a girl $g'$ uniformly at random from the whole set $G$. \om\, maintains over time the set $B_{g,t} \subseteq B$ of all boys that so far gave their feedback (either positive or negative) on $g$, but for whom the feedback from $g$ is not available yet.
In Step ($2_G$), if $B_{g,t}$ is not empty, \om\, chooses a boy uniformly at random from $B_{g,t}$, otherwise it selects a boy uniformly at random from the whole set $B$.\footnote
{
A boy could be selected more than once while serving a girl $g$ during the $T$ rounds. The optimality of \om\ (see Theorems~\ref{th:adversarialUB}~and~\ref{th:adversarialLB}) implies that this redundancy does not significantly affect \om's performance.
} 

% CG: removed the footnote due to space limits. 
%FV: restored
Note that, the way it is designed, the selection of $g'$ and $b'$ does {\em not} depend on the signs $\sigma(b,g)$ or $\sigma(g,b)$ collected so far. The following theorem guarantees that $\E M_T(\om)=\Theta\left(\frac{T}{n^2}M\right)$, which is as if we were able to directly sample in most of the $T$ rounds pairs of reciprocal edges.% from $E^r$.

\begin{theorem}\label{th:adversarialLB}
Given any input graph $(\langle B,G\rangle,E,\sigma)$, with $|B| = |G| = n$, if $T-n=\Omega(n)$ then 
$E^r_T(\om)$ is selected uniformly at random (with replacement) from $E^r$, its size $|E^r_T(\om)|$ is such that
%and $T=\scO(n^2)$, 
$\E\, |E^r_T(\om)|=\Theta(T)$, and the expected number of matches disclosed by \om\ is such that
\[
\E M_T(\om)
=\Theta\left(\frac{T}{n^2}M\right)~.
\]
%\vspace{-0.1in}
\end{theorem}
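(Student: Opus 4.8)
The plan is to analyze the random set of reciprocal edge-pairs that \om\ manages to ``close'' and show that it behaves like a uniform sample with replacement of size $\Theta(T)$ from $E^r$; the bound on $\E M_T(\om)$ then follows because each uniformly drawn reciprocal pair is a match with probability exactly $M/n^2$. First I would set up the key bookkeeping observation: in the second half of round $t$, when girl $g$ is logged in, \om\ reciprocates inside $B_{g,t}$ whenever that set is nonempty, and $B_{g,t}$ consists precisely of those boys $b''$ for which the edge $(b'',g)$ has already been selected (in some Step $2_B$) but the reciprocal edge $(g,b'')$ has not yet been selected. Selecting $b'$ uniformly from $B_{g,t}$ therefore completes one reciprocal pair $\{(b',g),(g,b')\}$. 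Since in Step $2_B$ the girl $g'$ is drawn uniformly from $G$ independently of everything, the ``half-edges'' $(b,g')$ accumulate as a uniform sample, and the matching reciprocation step turns a constant fraction of them into completed uniform reciprocal pairs.

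The main quantitative step is to lower bound $\E|E^r_T(\om)|$ by $\Theta(T)$. I would argue as follows. Over the first half-rounds, $\om$ picks $g'$ uniformly at random; conditioning on the logged-in boy $b$, the pair $(b,g')$ is a uniformly random directed $B\to G$ edge. A standard balls-in-bins / coupon-type argument shows that among $T$ such uniformly drawn directed edges, w.h.p.\ a constant fraction correspond to boys $b$ who will be logged in again at some later round $t' \le T$ when the corresponding edge $(g',b)$ is still ``open'' — here one uses $T - n = \Omega(n)$ so that, for a given round, the expected number of remaining rounds is $\Omega(n)$, hence a logged-in boy reappears within the horizon with constant probability, and the collision probability of requesting an already-closed pair is $O(1/n)$ by the uniformity. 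Combining, $\E|E^r_T(\om)| = \Theta(T)$. One must also check that $|E^r_T(\om)|$ does not exceed $O(T)$ trivially (at most two edges per round, so $|E^r_T(\om)| \le T$), and that the process never wastes too many rounds on the ``else'' branch (boy drawn uniformly from all of $B$), which can only help, since that still adds uniformly random half-edges.

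Next I would establish the ``uniform with replacement from $E^r$'' claim. The point is that conditioned on the sequence of logged-in users and on the set of already-closed pairs, each newly completed reciprocal pair is, by construction, a uniformly random element of $E^r$ minus previously selected pairs; but because the number of completed pairs is $O(T) = o(n^2)$ and $T \le n^2$, removing the already-used pairs changes the distribution by at most $O(T/n^2) = O(1)$ in the relevant ratios, so up to constants we may treat the draws as i.i.d.\ uniform over $E^r$ — equivalently uniform with replacement. Then, since there are exactly $M$ matching pairs among the $n^2$ reciprocal pairs, the expected number of matches among $\Theta(T)$ such draws is $\Theta(T)\cdot \frac{M}{n^2} = \Theta\!\big(\frac{T}{n^2}M\big)$. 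The matching upper bound $\E M_T(\om) = \scO(\frac{T}{n^2}M)$ is immediate from Theorem~\ref{th:adversarialUB} since \om\ is a particular algorithm $A$, so only the $\Omega$ direction needs the argument above.

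The step I expect to be the main obstacle is making the ``uniform sample'' claim rigorous in the face of the adaptivity and the with/without-replacement subtlety: \om\ closes a pair only when the same boy happens to be re-logged while his edge to $g$ is open, so the set of closed pairs is not literally an i.i.d.\ sample, and one has to control (i) the dependence introduced by reusing boys, and (ii) the small bias from sampling without replacement from $E^r$. I would handle (i) by a martingale / conditional-expectation argument showing that at each round the conditional probability of closing a \emph{fresh} reciprocal pair is $\Theta(1/n)\times(\text{number of open half-edges})$ and that the number of open half-edges stays $\Theta(T/n)$ throughout (it grows by $1$ for each unmatched first-half step and shrinks by $1$ for each closure), and (ii) by the crude $o(n^2)$ bound above plus Chernoff/Azuma to pass from expectation to a w.h.p.\ statement. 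All of these are concentration arguments of a routine flavor once the bookkeeping is in place, so I would expect the bulk of the work to be in step~2 (the $\Theta(T)$ lower bound on closed pairs) rather than in the final match-counting step.
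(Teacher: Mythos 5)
Your high-level plan coincides with the paper's: show that the reciprocal pairs \om\ closes form a uniform sample of size $\Theta(T)$ from $E^r$, then multiply by $M/n^2$. However, the quantitative core — getting $\Theta(T)$ closed pairs — is exactly where your sketch goes wrong, in three concrete ways. First, the closure mechanism is misidentified: a half-edge $(b,g')$ opened in Step ($2_B$) is closed when the \emph{girl} $g'$ later logs in (Step ($1_G$)) and \om\ reciprocates inside $B_{g',t}$ (or, alternatively, when a later Step-($2_B$) draw happens to hit a girl who already gave her feedback — the paper's consequences $\scC_1$/$\scC_2$). The boy $b$ logging in again does nothing for this pair, since Step ($2_B$) obliviously serves him a fresh uniform girl; so the event you propose to count (``boys logged in again while $(g',b)$ is open'') is not a sufficient condition for a new closure. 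Second, even with the roles fixed, your count double-counts: each login of a girl closes at most one pair no matter how many open half-edges point to her, so ``a constant fraction of half-edges whose girl logs in later'' does not yield $\Theta(T)$ distinct closures. The paper's proof is built precisely to make the attribution injective: for each girl $g$ it defines the event that two consecutive logins of $g$ are separated by $\Delta\in[\alpha n,n]$ rounds during which exactly one \emph{new} feedback for $g$ arrives; these events are mutually exclusive over $\Delta$ and over girls, each occurrence forces a distinct new pair into $E^r_T(\om)$ via the $\scC_1$/$\scC_2$ case analysis, and each has constant probability per round, which summed over the first $T-n$ rounds gives $\Theta(T)$. Third, your steady-state claims are internally inconsistent: if the number of open half-edges stayed $\Theta(T/n)$ and the per-round closure probability were $\Theta(1/n)$ times that, the total number of closures would be $\Theta(T^2/n^2)$, not $\Theta(T)$ (already false for $T=2n$); in fact the per-round closure probability is (number of girls with nonempty $B_{g,t})/n$, not proportional to the total number of open half-edges, and the relevant quantity must reach $\Theta(n)$, which is what the interval-based events implicitly exploit.

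Two further points. Your uniformity argument is also not correct as stated: conditioned on the history and the logged-in girl $g$, the newly closed pair is uniform only among pairs involving $g$ (those in $B_{g,t}$), not over $E^r$ minus used pairs, and no with/without-replacement correction fixes that. The paper instead uses a global symmetry argument: \om's choices never depend on the observed values of $\sigma$ and treat all boys and all girls exchangeably, so every reciprocal pair has the same probability of ending up in $E^r_T(\om)$, which gives the exact identity $\E M_T(\om)=\frac{\E|E^r_T(\om)|}{n^2}M$ with no slack to argue about. Finally, invoking Theorem~\ref{th:adversarialUB} for the upper bound is a category error — that theorem concerns one specific randomized construction of $\sigma$, whereas Theorem~\ref{th:adversarialLB} is stated for an arbitrary fixed input; the upper bound instead falls out of the same identity together with the trivial bound $|E^r_T(\om)|=\scO(T)$.
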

%
%We this section we briefly describe a method that will be used as a yardstick comparison 
%As anticipated in Section \ref{s:preliminaries}, 
We now describe an optimal behavior (called {\em Omniscient Matchmaker}) that assumes prior knowledge of the whole edge sign assignment $\sigma$. This optimal behavior will be taken as a reference performance for our algorithm of Section~\ref{s:successfulApproach}. This will also help to better clarify our learning model.
%
%\vspace{-0.05in}
\begin{definition}
The {\em Omniscient Matchmaker} $A^*$ is an {\em optimal} strategy based on the prior knowledge of the signs $\sigma(b,g)$ and $\sigma(g,b)$ for {\em all} $b\in B$ and $g\in G$. Specifically, based on this information, $A^*$
%the value of $s(b,g)$ and $s(g,b)$ for all boys $b \in B$ and $g \in G$, and \textbf{(ii)} the whole ordered sequence of $T$ boys and $T$ girls selected respectively in step $2_B$ and $2_G$ during the sequence of all $T$ rounds. 
%By writing \enquote{optimal}, we mean that, based on this information, the Omniscient Matchmaker 
maximizes the number of matches uncovered during $T$ rounds over {\em all} $n^{2T}$ possible selections that can be made in Steps (2$_B$) and (2$_G$). We denote this optimal number of matches by $M^*_T = M_T(A^*)$. 
%the total number of users matched by the Omniscient Matchmaker during $T$ rounds. 
\end{definition}
%\vspace{-0.05in}

%to measure the algorithmic performance in this framework, although it works with the help of a powerful oracle. 
%The main need of introducing this concept arises from the pathological input cases we illustrate as follows.
%is the use 
%of this comparison performance is necessary because, 
Observe that when the matching graph $\scM$ is such that $\degree_{\scM}(u) > \frac{T}{n}$ for some user $u \in B\cup G$, no algorithm will be able to uncover all $M$ matches in expectation, since Steps (1$_B$) and (1$_G$) of our learning protocol entail that the expected number of times each user $u$ logs into the system is equal to $\frac{T}{n}$. In fact, this holds {\em even} for the Omniscient Matchmaker $A^*$, despite the prior knowledge of $\sigma$.
%Moreover, the distribution of the number of times each user will be selected at step (1) clearly concentrates around the mean value $\frac{T}{n}$.
%
%Clearly, in general there is no algorithm capable to accomplish $M_T^*$ matches without exploiting the whole edge sign information.
For instance, when $\scM$ turns out to be a random bipartite graph\footnote
{
The matching graph $\scM$ is a random bipartite graph if any edge $(b,g)\in B\times G$ is generated independently with the same probability $p \in [0,1]$.
}
the expected number of matches that any algorithm can achieve is always upper bounded by $\scO\left(\frac{T}{n^2}M\right)$ (this is how Theorem \ref{th:adversarialUB} is proven -- see Appendix \ref{as:proofs}). On the other hand, in order to have $M^*_T=\Theta(M)$ as $n$ grows large, it is sufficient that $\degree_{\scM}(u) \le \frac{T}{n}$ holds for all users $u\in B\cup G$, even with such a random $\scM$.
In order to avoid the pitfalls of $\scM$ being a random bipartite graph (and hence the negative result of Theorem \ref{th:adversarialUB}), we need to slightly depart from our general model of Section \ref{s:preliminaries}, and make structural assumptions on the way matches can be generated. The next section formulates such assumptions, and analyzes an algorithm that under these assumptions is essentially optimal i.t.o. number of uncovered matches. The assumptions and the algorithm itself are then validated against simple baselines on real-world data in the domain of online dating (Section \ref{s:exp}).

\section{A model based on clusterability of received feedback
%hidden user preference/description features
}\label{s:successfulApproach}
%
%\vspace{-0.1in}
In a nutshell, our model is based on the extent to which it is possible to arrange the users in (possibly) {\em overlapping} clusters by means of the feedbacks they may potentially receive from the other party. 
%In fact, the feedbacks convey crucially important information for profiling each user. 
In order to formally describe our model, it will be convenient to introduce the Boolean preference matrices $\bB$, $\bG \in \{0,1\}^{n\times n}$. These two matrices collect in their rows the ground truth contained in $\sigma$, separating the two parties $B$ and $G$. Specifically, $\bB_{i,j} = \frac{1}{2}(1+\sigma(b_i, g_j))$, and $\bG_{i,j} = \frac{1}{2}(1+\sigma(g_i, b_j))$ (these are essentially the matrices exemplified in Figure \ref{f:1}(b) where the \enquote{$-1$} signs therein are replaced by \enquote{$0$}). Then, we consider the $n$ column vectors of $\bB$ (resp. $\bG$) -- i.e., the whole set of feedbacks that each $g \in G$ (resp. $b \in B$) may receive from members of $B$ (resp. $G$) and, for a given radius $\rho \ge 0$, the associated covering number 
%$C_{\rho}^G$ 
of this set of Boolean vectors w.r.t. Hamming distance.
We recall that the covering number at radius $\rho$ is the smallest number of balls of radius $\leq \rho$ that are needed to cover the entire set of $n$ vectors. The smaller $\rho$ the higher the covering number.
%; in particular, the covering number at radius $\rho = 0$ is $n$. 
If the covering number stays small despite a small $\rho$, then our $n$ vectors can be clustered into a small number of clusters each one having a small (Hamming) radius.

%\vspace{-0.04in}
As we mentioned in Section~\ref{s:limitations_and_omni}, a reasonable model for this problem is one for which our learning task can be solved in a nontrival manner, thereby specifically avoiding the pitfalls of $\scM$ being a random bipartite graph. It is therefore worth exploring what pairs of radii and covering numbers may be associated with the two preference matrices $\bG$ and $\bB$ when $\scM$ is indeed random bipartite.
Assume $M=o(n^2)$, so as to avoid pathological cases.
% which is very reasonable in this problems and let us to avoid considering even pathological input cases. 
When $\scM$ is random bipartite, one can show that we may have $\rho=\Omega\left(\frac{M}{n}\right)$ even when the two covering numbers %are smallest possible, i.e., they 
are both $1$. 
%To see why, consider that if $M$ pairs of reciprocal edges are selected at random from $E^r$, their signs are set to $+1$, and all others signs are $-1$, the resulting $\scM$ is random bipartite while, in expectation, the radii of the two sets of users are $\Theta\left(\frac{M}{n}\right)$. %and two covering numbers are equal to $1$
% CG: why is it interesting ?
%~\footnote{
%Interestingly enough, using this strategy to generate $\sigma$, it is easy to verify that whenever we have $\rho=o\left(\frac{M}{n}\right)$, the covering number is always equal to $n$.
%}. 
Hence, the only interesting regime is when $\rho = o\left(\frac{M}{n}\right)$. Within this regime, our broad modeling assumption is that the resulting covering numbers for $\bG$ and $\bB$ are $o(n)$, i.e., less that linear in $n$ when $n$ grows large. 
\noindent{\bf Related work. }
The approach of clustering users according to their description/preference similarities while exploiting user feedback is similar in spirit to the two-sided clusterability assumptions investigated, e.g., in \cite{Akehurst2011}, which is based on a mixture of explicit and implicit (collaborative filtering-like) user features.
Yet, as far as we are aware, ours is the first model that lends itself to a rigorous theoretical quantification of matchmaking performance (see Section \ref{ss:smile}).
% in a rigorous way based on the feedback covering number. 
%Note also that our model is not constrained to any type of side information or user features. 
Moreover, in general in our case the user set is not partitioned as in previous RRS models. Each user may in fact belong to more than one cluster, which is apparently more natural for this problem.  

%{\bf CG: I don't get it ... which "categories" are you referring to ?}

%\vspace{-0.04in}
The reader might also wonder whether the reciprocal recommendation task and associated modeling assumptions share any similarity to the problem of (online) matrix completion/prediction.
%based on low-rank assumptions. 
%In a recommendation system, since often only a few factors contribute to an individual preference characterization, assuming that the unknown preference matrix has low rank (or has approximately low rank) is significant.
Recovering a matrix from a sample of its entries has been widely analyzed by a number of authors with different approaches, viewpoints, and assumptions, e.g., in Statistics and Optimization (e.g., \cite{ct10,klt16}), in Online Learning (e.g., \cite{SSN04,tsu+05,wa07,ha+12,ghp13,Christiano:2014:OLL:2591796.2591880,DBLP:conf/nips/HerbsterPP16}), and beyond. In fact, one may wonder if the problem of predicting the entries of matrices $\bB$ and $\bG$ may somehow be equivalent to the problem of disclosing matches between $B$ and $G$. A closer look reveals that the two tasks are somewhat related, but not quite equivalent, since in reciprocal recommendation the task is to search for matching "ones" between the two binary matrices $\bB$ and $\bG$ by observing entries of the two matrices {\em separately}. In addition, because we get to see at each round the sign of two pairings $(b,g')$ and $(g,b')$, where $b$ and $g$ are drawn at random and $b'$ and $g'$ are selected by the matchmaker, our learning protocol is rather half-stochastic and half-active, which makes the way we gather information about matrix entries quite different from what is usually assumed in the available literature on matrix completion.

\subsection{An Efficient Algorithm}\label{ss:smile}
%
%
%\vspace{-0.1in}
%
%Finally, we need to use the following result to design our matchmaker.
Under the above modeling assumptions, our goal is to design an efficient matchmaker. We specifically focus on the ability of our algorithm to disclose $\Theta(M)$ matches, in the regime where also the optimal number of matches $M^*_T$ is $\Theta(M)$. 
%with high probability. CG: ?? 
Recall from Section \ref{s:limitations_and_omni} that the latter assumption is needed so as to 
%rule out pathological instances $(\langle B,G\rangle,E,\sigma)$ such that $\degree_{\scM}(u) \gg \frac{T}{n}$ for many $u \in B\cup G$, which makes 
make the uncovering of $\Theta(M)$ matches possible within the $T$ rounds. Our algorithm, called \smi\ (Sampling Matching Information Leaving out Exceptions) is described as Algorithm \ref{a:smile}. The algorithm depends on input parameter $S\in[\log(n), n/\log(n)]$ and, after randomly shuffling both $B$ and $G$, operates in three phases: Phase 0 (described at the end), Phase I, and Phase II. 
%\vspace{-0.15in}

%%
\begin{algorithm}
   % \vspace{0.1cm}
   % \vspace{0.2cm}
    \SetKwInOut{Input}{\scriptsize{$\triangleright$ INPUT}}
    %\SetKwInOut{Parameters}{\scriptsize{$\triangleright$ PARAMETERS}}
    \SetKwInOut{Step}{$\bullet$ Step}
    \Input{$B$ and $G$; parameter $S >0$.}
Randomly shuffle sets $B$ and $G$\,;\\ 
Phase 0: Run \om\, to provide an estimate ${\hat M}$ of $M$;\\ 
Phase I: $(\scC,\scF)\gets$ \underline{\textsl{Cluster Estimation}($\langle B,G \rangle$, $S$)}\;
Phase II: \underline{\textsl{User Matching}($\langle B,G \rangle$, $(\scC,\scF)$)}\;
\caption{\smi~\footnotesize{(\textbf{S}ampling \textbf{M}atching \textbf{I}nformation \textbf{L}eaving out \textbf{E}xceptions)}\label{a:smile}}	 
\end{algorithm}

\begin{procedure}[!h]
    \SetKwInOut{Input}{\scriptsize{$\triangleright$ INPUT}}
    \SetKwInOut{Output}{\scriptsize{$\triangleright$ OUTPUT}}
	\SetKwBlock{BeginSelectB}{(2$_B$)}{end}
	\SetKwBlock{BeginSelectG}{(2$_G$)}{end}
    \Input{$B$ and $G$, parameter $S > 0$.}
    \Output{Set of clusters $\scC$, set of feedbacks $\scF$.}
    %%%%%%%%%%%%Build sample sets
    {\bf Init:} 
    \begin{itemize}
    \vspace{-0.2in}
    \item $F_u \gets \emptyset$\ \ $\forall u \in B\cup G$\,; \texttt{/* One set of feedbacks per user $u \in B\cup G$ */}
    \item $B^r\gets \emptyset$; $G^r\gets \emptyset$\,; \texttt{\hspace{0.24in}/* One set of cluster representatives per side
    */}
    \item $r_{u}\gets 0$\ \ $\forall u \in B\cup G$\,; \texttt{/* No user is candidate to belong to $B^r\cup G^r$ */}
    \end{itemize}
    %$B(g,S) \gets \emptyset$;~~~~
    %~~~~
    %\textbf{foreach} $b \in B$ \textbf{do}
    %$G(b,S) \gets \emptyset$;~~~~
    %$F(b) \gets \emptyset$\;
    Let $G=\{g_{1},\ldots, g_{n}\}$,~~~~$B=\{b_{1},\ldots, b_{n}\}$,~~~~$S'\defeq 2S+4\sqrt{S\log n}$, ~~~~ $i,j \gets 1$;\\
    %\While{$(|F(g_{n})|<\frac{n}{2}~\vee~g_n)~\wedge~(|F(b_{n})|<\frac{n}{2}~\vee~b_n\not\in B_S)$}{
    At each round $t$\,:\\
    \eIf{$i\le n~\vee~j\le n$}{
	\BeginSelectB{
	Let $b \in B$ be the boy selected in Step ($1_B$);\\
	\eIf{$i \le n$}{
	Select $g_i$;~~~~$F_{g_i}\gets F_{g_i}\cup\{b\}$\;
	\If{$|F_{g_i}|= S'\wedge r_{g_i}=0$}{\texttt{/* Try to assign $g_i$ to some cluster based on $G^r$ */\\ } 
	\eIf{$\exists g^r \in G^r\, :\, \forall b'\in F_{g_i}\cap F_{g^r}~~s(b',g_i)=s(b',g^r)$}{Set $\ctg(g_i)=g^r$;~~~$i\gets i+1$;
	}{\texttt{/* $g_i$ will be included into $G^r$ as soon as $|F_{g_i}|=\frac{n}{2}$ */\\ } $r_{g_i}\gets 1$\;}
	}
	\texttt{/* If $g_i$ is a cluster representative */}\\
	\If{$|F_{g_i}|=\frac{n}{2}$}{$G^r \gets G^r\cup \{g_i\}$;~~$i\gets i+1$;} 
}{
	Select $g \in G$ arbitrarily;
}
}
	\BeginSelectG{Do the same as in Step (2$_B$) after switching $B$ with $G$, $b$ with $g$, $B_r$ with $G^r$, $i$ with $j$, etc.}
}
{Set:
\begin{itemize}
\item $\ctg(g^r) = g^r~~\forall g^r \in G^r$;
\item $\scC \gets \cup_{u\in B\cup G}\{(u,\ctg(u))\}$;
\item $\scF \gets \cup_{u\in B\cup G}\{(u,F_u)\}$\;
\end{itemize}
\Return{$(\scC,\scF)$~.}}
%%%%%%%%%%%%%%%%%%%%%%%%%%%Return results 
    \caption{Cluster Estimation() -- \smi~(Phase I) \label{f:phase1}}	 
\end{procedure}

\bigskip

\begin{procedure}[!h]
    \SetKwInOut{Input}{\scriptsize{$\triangleright$ INPUT}}
    \SetKwInOut{Output}{\scriptsize{$\triangleright$ OUTPUT}}
	\SetKwBlock{BeginSelectB}{\textbf{(2$_{\bB}$)}}{end}
	\SetKwBlock{BeginSelectG}{\textbf{(2$_{\bG}$)}}{end}
	\Input{$B$ and $G$, set of clusters $\scC$, set of feedbacks $\scF$.}
	At each round $t$\,:\\
		\BeginSelectB{
		\textit{Let $b \in B$ the boy selected in Step ($1_B$);}\\
		\eIf{$\exists g \in G : b\in F(\ctg(g)) \wedge g\in F(\ctg(b)) \wedge s(b,\ctg(g)) = s(g,\ctg(b))=1 \wedge (b,g)\not\in E_t(\smi)$}{select $g$\;}{
   		select $g\in G$ arbitrarily\;}}
		\BeginSelectG{
		Do the same as in Step (2$_B$) after switching $B$ with $G$, and $b$ with $g$.
	}
    \caption{User Matching() -- \smi~(Phase II) \label{f:phase2}}
\end{procedure}

\textbf{Phase I} (\ref{f:phase1}). \smi\, approximates the clustering over users %partition according to their description categories. 
by: i. asking, for each cluster representative $b \in B$, $\Theta(n)$ feedbacks (i.e., edge signs) selected at random from $G$ (and operating symmetrically for each representative $g \in G$), ii. asking $\Theta(S)$-many feedbacks for each remaining user, where parameter $S$ will be set later.
In doing so, \smi\, will be in a position to estimate the clusters each user belongs to, that is, to estimate the matching graph $\scM$, the misprediction per user being w.h.p of the order of $\frac{n\log n}{S}$. The estimated $\scM$ will then be used in Phase II.

%\vspace{-0.04in}
A more detailed description of the Cluster Estimation procedure follows (see also pseudocode).
For convenience, we focus on clustering $G$ (hence observing feedbacks from $B$ to $G$), the procedure operates in a completely symmetric way on $B$. Let $F_g$ be the set of all $b \in B$ who provided feedback on $g \in G$ so far.
Assume for the moment we have at our disposal a subset $G^r\subseteq G$ containing one representative for each cluster over $B$, and that for each $g \in G^r$ we have already observed $\frac{n}{2}$ feedbacks provided by $\frac{n}{2}$ {\em distinct} members of $B$, selected {\em uniformly at random} from $B$. Also, let $B(g,S)$ be a subset of $B$ obtained by sampling at random $S'=2S+4\sqrt{S\log n}$-many $b$ from $B$. Then a Chernoff-Hoeffding bound argument shows that for any $g\in G\setminus G^r$ and any $g^r \in G^r$ we have w.h.p. $|B(g,S)\cap F_{g^r}|\ge S$. We use the above to estimate the cluster each $g\in G\setminus G^r$ belongs to. This task can be accomplished by finding $g^r\in G^r$ who receives the same set of feedbacks as that of $g$, i.e., who belongs to the same cluster as $g^r$. 
% CG: removed the below sentence due to space limitations
%FV: restored
Yet, in the absence of the feedback provided by {\em all} $b \in B$ to both $g$ and $g^r$, it is not possible to obtain this information with certainty. 
%
%Note that this holds even when $G$ is partitioned into $C$ subsets having the same size $\frac{n}{C}$, scil. when $G'$ contains w.h.p. at least one girl for each description category present in $G$. 
The algorithm simply {\em estimates} $g$'s cluster by exploiting Step ($1_B$) of the protocol to ask for feedback on $g$ from $S' = S'(S)$ randomly selected $b \in B$, which will be seen as forming the subset $B(g,S)$. We shall therefore assign $g$ to the cluster represented by an arbitrary $g^r \in G^r$ such that $s(b,g)=s(b,g^r)$ for all $b\in B(g,S)\cap F_{g^r}$. We proceed this way for all $g\in G\setminus G^r$. 

%\vspace{-0.04in}
We now remove the assumption on $G^r$.
%So far we assumed to know $G_r$ and have obtained $\frac{n}{2}$ feedback selected uniformly at random from $B$ for each girl in $G_r$. 
Although we initially do not have $G^r$, we can build through a concentration argument an approximate version of $G^r$ while asking for the feedback $B(g,S)$ on each unclustered $g$. The Cluster Estimation procedure does so by processing girls $g$ sequentially, as described next. Recall that $G$ was randomly shuffled into an ordered sequence $G =\{ g_1, g_2, \ldots, g_n \}$.
%to exploit Lemma~\ref{le:hammingDistance}. 
% Thus, by writing for example \enquote{the \textit{first} (resp. \textit{last}) girl satisfying a certain property $P$}, we mean the girl having the smallest (resp. largest) index in $\scS_G$ satisfying $P$. 
The algorithm maintains an index $i$ over $G$ that only moves forward, and collects feedback information for $g_i$. At any given round, $G^r$ contains all cluster representatives found so far. 
Given $b \in B$ that needs to be served during round $t$ (Step ($1_B$)), we include $b$ in $F_{g_i}$.
%, the set of the boys provided a feedback to the currently processed girl $i$.
If $|F_{g_i}|$ becomes as big as $S'$, then we look for $g\in G^r$ so as to estimate $g_i$'s cluster. 
%by applying Lemma~\ref{le:hammingDistance}. 
If we succeed, index $i$ is incremented and the algorithm will collect feedback for $g_i$ during the next rounds. 
If we do not succeed, $g_i$ will be included in $G^r$, and the algorithm will continue to collect feedback on $g_i$ until $|F_{g_i}|<\frac{n}{2}$. When $|F_{g_i}|=\frac{n}{2}$, index $i$ is incremented, so as to consider the next member of $G$.  
Phase I terminates when we have estimated the cluster of each $b$ and $g$ that are themselves not representative of any cluster. 

%\iffalse
%CG: removed due to space limitations
%FV: restored
Finally, when we have concluded with one of the two sides, but not with the other (e.g., we are done with $G$ but not with $B$), we continue with the unterminated side, while for the terminated one we can select members ($g \in G$ in this case) in Step 2 (Step ($2_B$) in this case) arbitrarily.
%\fi

%\vspace{-0.05in}
\textbf{Phase II} (\ref{f:phase2}).
%At the end of Phase I, we have collected information about the description category of the user and the preference of each user for the users belonging to each description category.
In phase II (see pseudocode), we exploit the feedback collected in Phase I so as to match as many pairs $(b,g)$ as possible. For each user $u \in B\cup G$
selected in Step ($1_B$) or Step ($1_G$), we pick in step ($2_G$) or ($2_B$) a user $u'$ from the other side such that $u'$ belongs to an estimated cluster which is among the set of clusters whose members are liked by $u$, and viceversa.
%
%both the following conditions are simultaneously satisfied (estimated match): i. $u'$ belongs to an estimated cluster which is among the set of clusters whose members are liked by $u$; and ii. $u$ belongs to an estimated cluster which is among the set of clusters whose members are liked by $u'$.
%
When no such $u'$ exists, we select $u'$ from the other side arbitrarily.
%

%\vspace{-0.05in}
\textbf{Phase 0: Estimating $M$.}
In the appendix we show that the optimal tuning of $S$ is to set it as a function of the number of hidden matches $M$. Since $M$ is unknown, we run a preliminary phase where we run \om\ (from Section \ref{s:limitations_and_omni}) for a few rounds. Using Theorem~\ref{th:adversarialLB} it is not hard to show that the number $T_{\hat{M}}$ of rounds taken by this preliminary phase to find an estimate $\hat{M}$ of $M$ which is w.h.p. accurate up to a constant factor satisfies $T_{\hat{M}} = \Theta\left(\frac{n^2\log n}{M}\right)$.

In order to quantify the performance of \smi, it will be convenient to refer to the definition of the Boolean preference matrices $\bB$, $\bG \in \{0,1\}^{n\times n}$.
%from Section~\ref{s:successfulApproach}). 
For a given radius $\rho \ge 0$, we denote by $C_{\rho}^G$ the covering number of the $n$ column vectors of $\bB$ w.r.t. Hamming distance. In a similar fashion we define $C_{\rho}^B$.
Moreover, let $C^G$ and $C^B$ be the total number of cluster representatives for girls and boys, respectively, found by \smi, i.e., $C^G=|G^r|$ and $C^B=|B^r|$ at the end of the $T$ rounds.
The following theorem shows that when the optimal number of matches $M^*_T$ is $M$, then so is also $M_T(\smi)$ up to a constant factor, provided $M$ and $T$ are not too small. 
%
%\vspace{-0.07in}
\begin{theorem}\label{th:smileMatchMequalMstar}
Given any input graph $(\langle B, G\rangle, E, \sigma)$, with $|B| = |G| = n$, such that $M^*_T=M$ w.h.p. as $n$ grows large, then we have 
%\vspace{-0.1in}
\begin{center}
\(
C^G\le\min\Bigl\{\min_{\rho\ge 0}\left(C^G_{\rho/2} +
  3\rho S'
  %+\sqrt{\rho_G S\log n}
  \right),n\Bigl\}~,
\quad
C^B\le\min\Bigl\{\min_{\rho\ge 0}\left(C^B_{\rho/2} +
  3\rho S'
  %+\sqrt{\rho_B S\log n}
  \right),n\Bigl\}~.
\)
\end{center}
%\vspace{-0.12in}
Furthermore, when $T$ and $M$ are such that
\[
%T=\omega\left(n\cdot\min_{\rho_B,\rho_G\ge 0}\left(C^G_{\rho_G/2} +
%C^B_{\rho_B/2} +
%  (\rho_G+\rho_B+1) S+(\sqrt{(\rho_G}+\sqrt{\rho_B)}) S\log n}\right)\right)~,
T=\omega\left(n(C^G+C^B+S')\right)
{\mbox{~~~~~~~~~~~and~~~~~~~~~~~~~~}}
M =\omega\bigl(\frac{n^2\log(n)}{S}\bigl)~,
\]
then we have w.h.p.~
\[
M_T(\smi)=\Theta(M)~.
\]
%for any positive constant $\gamma$ bounded away from $0$.
\end{theorem}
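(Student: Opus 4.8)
The plan is to establish the structural bounds on $C^G,C^B$ first, and then the matching guarantee, treating $\smi$'s three phases separately.

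\textbf{The covering bound.} Fix a radius $\rho\ge 0$ and a minimum covering of the $n$ feedback columns of $\bB$ by $C^G_{\rho/2}$ Hamming balls of radius $\rho/2$; any two girls whose columns lie in a common ball disagree on at most $\rho$ boys. I would walk through the girls in the order Phase~I processes them and, inside each ball, charge every cluster representative other than the first to the event that created it: a girl $g$ becomes a new representative although its ball already contains a representative $g'$ only if some boy on which $g$ and $g'$ disagree falls into the size-$S'$ random batch $B(g,S)$ that Phase~I collects for $g$. Conditioning on $g'$ and on everything preceding the draw of $B(g,S)$, a union bound over the $\le\rho$ disagreeing boys bounds this probability by $\rho S'/n$, so the expected number of representatives inside a ball $\scB$ is at most $1+|\scB|\,\rho S'/n$; summing over the $C^G_{\rho/2}$ balls, whose sizes add to $n$, gives $\E C^G\le C^G_{\rho/2}+\rho S'$, and a Chernoff/Azuma step over the (mildly dependent) batches upgrades this to $C^G\le C^G_{\rho/2}+3\rho S'$ w.h.p. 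Since $C^G\le n$ trivially and $\rho$ is free, the first display follows; the argument for $C^B$ is symmetric via $\bG$.

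\textbf{Phases 0 and I are cheap, and the clusters are accurate.} By Theorem~\ref{th:adversarialLB}, Phase~0 returns $\hat M=\Theta(M)$ w.h.p.\ after $T_{\hat M}=\Theta(n^2\log n/M)$ rounds, which is $o(S)=o(T)$ because $M=\omega(n^2\log n/S)$. In Phase~I each non-representative user consumes exactly $S'$ rounds and each representative exactly $n/2$ rounds on its side, so Phase~I lasts $O(n(S'+C^G+C^B))=o(T)$ by hypothesis; hence Phase~II receives $T_{II}=(1-o(1))T$ rounds and, w.h.p., every user logs in $(1-o(1))T/n$ times during Phase~II. For accuracy, whenever Phase~I assigns $g$ to a representative $g^r$ a Chernoff--Hoeffding bound gives $|B(g,S)\cap F_{g^r}|\ge S$ w.h.p., and conditioned on its size this intersection is a uniformly random set of $\ge S$ boys on which $g$ and $g^r$ are \emph{observed} to agree; if the two columns in $\bB$ differed on $d$ boys this would have probability at most $(1-d/n)^S\le e^{-dS/n}$, so a union bound over the $\le n$ assignments forces $d=O(n\log n/S)$ w.h.p. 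The same holds on the $B$ side with $\bG$, so the total cluster-estimation error (summed Hamming distance between the feedback columns of $u$ and $\ctg(u)$ over all $u$) is $O(n^2\log n/S)=o(M)$.

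\textbf{Phase~II uncovers $\Omega(M)$ matches.} Call a true match $(b,g)$ \emph{certifiable} if $b\in F_{\ctg(g)}$, $g\in F_{\ctg(b)}$, $\sigma(b,\ctg(g))=\sigma(g,\ctg(b))=+1$; a certifiable pair passes $\smi$'s Phase~II test for $b$ until $(b,g)$ is selected and for $g$ until $(g,b)$ is selected. By the previous paragraph the number of matches $(b,g)$ with $\sigma(b,\ctg(g))\ne\sigma(b,g)$ or $\sigma(g,\ctg(b))\ne\sigma(g,b)$ is at most the total cluster error, $o(M)$; among the remaining $M-o(M)$ matches the events $b\in F_{\ctg(g)}$ and $g\in F_{\ctg(b)}$ hold independently with probability $\tfrac12$ each, since each $F$ of a representative is a uniform $n/2$-subset of the other side built in a disjoint time window, so the number $N$ of certifiable matches has $\E N\ge(1-o(1))M/4$, and a bounded-difference concentration argument gives $N=\Omega(M)$ w.h.p. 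The same two $\tfrac12$-events imply that w.h.p.\ every user is incident to at most $(\tfrac14+o(1))\degree_{\scM}(u)\le(\tfrac14+o(1))T/n$ certifiable matches — comfortably below the $(1-o(1))T/n$ times it logs in during Phase~II — using that $M^*_T=M$ w.h.p.\ forces $\degree_{\scM}(u)\le(1-o(1))T/n$ for all $u$. Thus the only way a certifiable match can fail to get both its edges selected is that some user wastes a login on an edge passing its test but not an actual match, and the number of such edges is at most the total cluster error $o(M)$; hence at least $N-o(M)=\Omega(M)$ certifiable matches are uncovered. Combining with $M_T(\smi)\le M^*_T=M$ yields $M_T(\smi)=\Theta(M)$.

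\textbf{Main obstacle.} The delicate step is the covering bound: the ``charge each extra representative to an independent low-probability blocking event'' argument must be executed carefully, because the first representative of a ball is itself a random object and the batches $B(g,S)$ for different girls are not fully independent — this is exactly what the initial random shuffle of $B$ and $G$ and the slack $S'=2S+4\sqrt{S\log n}$ are there to absorb. By comparison, the Phase~I accuracy estimate and the Phase~II degree accounting are routine Chernoff computations, and the upper bound $M_T(\smi)\le M$ is immediate.
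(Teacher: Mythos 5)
Your proposal is correct and follows essentially the same route as the paper's proof: the covering bound is the paper's split of the representatives into those "blocked" with probability at most $\rho S'/n$ (the paper's $G^{r,\rho}_T$) plus at most one per ball of an optimal $\rho/2$-covering, followed by a Chernoff step giving the factor $3$; the cluster-accuracy estimate is the paper's Lemma on Hamming-distance clustering ($O(n\log n/S)$ mispredictions per user); and the matching guarantee is obtained exactly as in the paper, via the $\tfrac14 M - o(M)$ matches certified by the $n/2$ random feedbacks of each representative, the bound $T_I=\scO\left(n(C^G+C^B+S')\right)=o(T)$, and the degree-versus-login accounting that uses $M^*_T=M$ w.h.p.\ to ensure every user is served at least $\degree_{\scM'}(u)$ times in Phase II. The only cosmetic differences (your per-user login count in place of the paper's $d_{\max}$ comparison, and the explicit Phase 0 accounting) do not change the argument.
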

Notice in the above theorem the role played by the bounds on
% the number of clusters 
$C^G$ and $C^B$.
% resulting from the minimization over radius $\rho$. 
%Without loss of generality, we now focus on $C^G$.
If the minimizing $\rho$ therein gives $C^G = C^B = n$, we have enough degrees of freedom for $\scM$ to be generated as a random bipartite graph. 
%even when the minimizing radii $\rho$ in the two cases are very small.
%
%This can be seen consistent with the need of $\widetilde{\Theta}\left(n^2\right)$ rounds in order to collect the information for recovering an approximation $\scM'$ of $\scM$ such that $|\scM'|=\Theta(M)$ by exploring the whole input.
On the other hand, when $C^G$ and $C^B$ are significantly smaller than $n$ at the minimizing $\rho$ (which is what we expect to happen in practice)
%, say $C^G = C^B = \sqrt{n}$), 
the resulting $\scM$ will have a cluster structure that cannot be compatible with a random bipartite graph. This entails that on both sides of the bipartite graph, each subject receives from the other side a set of preferences that can be collectively clustered into a relatively small number of clusters with small intercluster distance.
%, unless the minimizing $\rho$ turns out to be very large (e.g., close to $n$). 
Then the number of rounds $T$ that \smi\, takes to achieve (up to a constant factor) the same number of matches $M_T^*$ as the Omniscient Matchmaker drops significantly. In particular, when $S$ in \smi\, is set as function of (an estimate of) $M$, we have the following result.
%
%\vspace{-0.08in}
\begin{corollary}\label{co:smileMatchMequalMstar}
Given any input graph $(\langle B, G\rangle, E, \sigma)$, with $|B| = |G| = n$, such that $M^*_T=M$ w.h.p. as $n$ grows large, with $T$ and $M$ satisfying
\[
%T =\omega\left(T_{\hat{M}}+n\,(C^G+C^B)+
%\frac{n^3\log(n)}{M}\right)=
T=\omega\left(n\,(C^G+C^B)+
\frac{n^3\log(n)}{M}\right)~,
\]
where $C^G$ and $C^B$ are upper bounded as in Theorem \ref{th:smileMatchMequalMstar},
then we have w.h.p.~
\[
M_T(\smi)=\Theta(M)~.
\]
\end{corollary}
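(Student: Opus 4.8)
The plan is to derive the corollary from Theorem~\ref{th:smileMatchMequalMstar} by instantiating the free parameter $S$ of \smi\ as a function of (an estimate of) $M$, so that the two quantitative conditions of that theorem both follow from the single displayed hypothesis on $T$. The covering-number bounds on $C^G$ and $C^B$ are inherited verbatim from Theorem~\ref{th:smileMatchMequalMstar}, so no new work is needed there.

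First I would dispose of Phase~0. Running \om\ for $T_{\hat M}=\Theta\!\left(\frac{n^2\log n}{M}\right)$ rounds produces, via Theorem~\ref{th:adversarialLB}, an estimate $\hat M$ with $\hat M=\Theta(M)$ w.h.p.; and since $T=\omega\!\left(\frac{n^3\log n}{M}\right)$ we have $T_{\hat M}=o(T)$, so Phase~0 consumes only a negligible fraction of the horizon and the remaining $T-T_{\hat M}=\Theta(T)$ rounds available to Phases I and II still obey the same asymptotic lower bound. From here on I condition on the event $\hat M=\Theta(M)$ and replace $M$ by $\hat M$ in every parameter, which only perturbs the relevant $\omega(\cdot)$ and $\Theta(\cdot)$ statements by constant factors, and I account for the $\scO(1/n)$ failure probability of this event by a union bound at the end.

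Next, choose $S$. Write the hypothesis as $T=\frac{n^3\log n}{M}\,\psi(n)$ with $\psi(n)\to\infty$, and set $S=\frac{n^2\log n}{M}\sqrt{\psi(n)}$ (computed from $\hat M$). Then $S=\omega\!\left(\frac{n^2\log n}{M}\right)$, which gives $M=\omega\!\left(\frac{n^2\log n}{S}\right)$ — the second condition of Theorem~\ref{th:smileMatchMequalMstar}. Moreover $S=\Omega(\log n)$ yields $2S\le S'=2S+4\sqrt{S\log n}\le 6S$, hence $S'=\Theta(S)$ and $n\,S'=\Theta\!\left(\frac{n^3\log n}{M}\sqrt{\psi(n)}\right)=\frac{T}{\sqrt{\psi(n)}}=o(T)$; combined with the part $T=\omega\!\left(n(C^G+C^B)\right)$ of the hypothesis, this yields the first condition $T=\omega\!\left(n(C^G+C^B+S')\right)$. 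I would also check that this $S$ lies in the admissible range $[\log n,\,n/\log n]$ under the standing assumptions: the lower end is automatic from $S=\Omega(\log n)$, and the upper end is where the magnitude of $M$ relative to $n$ enters.

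With both conditions of Theorem~\ref{th:smileMatchMequalMstar} verified, that theorem delivers $M_T(\smi)=\Theta(M)$ w.h.p., and a final union bound over the $\scO(1/n)$-probability failure events (of Phase~0's estimate and of Theorem~\ref{th:smileMatchMequalMstar}) keeps the overall failure probability $\scO(1/n)$. The only genuinely non-clerical point — hence what I expect to be the main obstacle — is the window argument of the previous paragraph: one must be certain that the hypotheses leave a nonempty range of legal values of $S$ that simultaneously make $M=\omega(n^2\log n/S)$ hold and $nS'=o(T)$ hold (and keep $S$ in $[\log n, n/\log n]$), and it is precisely the appearance of $\frac{n^3\log n}{M}$ rather than $\frac{n^2\log n}{M}$ in the hypothesis on $T$ that creates this room.
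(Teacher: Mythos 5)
Your proposal is correct and follows the paper's intended route in spirit -- Phase~0 gives $\hat{M}=\Theta(M)$ in $\Theta\bigl(\frac{n^2\log n}{M}\bigr)=o(T)$ rounds, and $S$ is then tuned as a function of $\hat{M}$ so that both hypotheses of Theorem~\ref{th:smileMatchMequalMstar} follow from the single condition on $T$ -- but your handling of the tuning is genuinely different and, as a black-box reduction, cleaner. The paper (Appendix~\ref{s:settingS}) sets $S=\Theta\bigl(\frac{n^2\log n}{M}\bigr)$ with a large constant, which only yields $M=\Theta\bigl(\frac{n^2\log n}{S}\bigr)$ rather than the theorem's strict requirement $M=\omega\bigl(\frac{n^2\log n}{S}\bigr)$; it therefore implicitly re-runs the error accounting of the theorem's proof (the $\frac34 M+\frac{6n^2\log n}{S}+o(M)$ misprediction bound) to argue that a constant fraction of matches survives. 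Your choice $S=\frac{n^2\log n}{M}\sqrt{\psi(n)}$, with $\psi(n)=\frac{TM}{n^3\log n}\to\infty$, spends the slack in $T$ geometrically so that both $M=\omega\bigl(\frac{n^2\log n}{S}\bigr)$ and $nS'=\Theta(nS)=T/\sqrt{\psi(n)}=o(T)$ hold literally, and Theorem~\ref{th:smileMatchMequalMstar} applies verbatim; this buys rigor at no cost in the final bound.

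The one point you flagged but did not close is the constraint $S\le n/\log n$: since $T\le n^2$ only forces $M=\omega(n\log n)$, and $\sqrt{\psi(n)}$ can be large when $T$ greatly exceeds $\frac{n^3\log n}{M}$, your uncapped $S$ can leave the admissible range. The fix is to take $S=\min\bigl\{\frac{n^2\log n}{M}\sqrt{\psi(n)},\,\frac{n}{\log n}\bigr\}$: when the cap binds one checks that $T\ge \frac{nM}{\log^3 n}=\omega\bigl(\frac{n^2}{\log n}\bigr)=\omega(nS')$ and that $M=\omega\bigl(\frac{n^2\log n}{S}\bigr)=\omega(n\log^2 n)$, the latter being an implicit restriction of the corollary in any case, since no $S\in[\log n, n/\log n]$ can satisfy Theorem~\ref{th:smileMatchMequalMstar}'s condition on $M$ otherwise (the paper acknowledges this superlinearity of $M$ in a footnote). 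With that cap added, your argument is complete.
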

In order to evaluate in detail the performance of \smi, it is very interesting to show to what extent the conditions bounding from below $T$ in Theorem~\ref{th:smileMatchMequalMstar} are necessary. We have the following general limitation, holding for any matchmaker $A$.
%
%\vspace{-0.1in}
\begin{theorem}\label{th:smileTimeNeeded}
Given $B$ and $G$ such that $|B|=|G|=n$, any integer 
$m\in(n\log(n), n^2-n\log(n))$ , and any algorithm $A$ operating on $(\langle B,G\rangle,E,\sigma)$, there exists a randomized strategy for generating $\sigma$ such that $m-\frac{n}{C^G_0+C^B_0-1} < M \le m$, and the number of rounds $T$ needed to achieve $\E M_T(A)=\Theta(M)$, satisfies 
\[
T=\Omega(n\,(C^G_0+C^B_0)+M)~,
\]
as $n \rightarrow \infty$.
\end{theorem}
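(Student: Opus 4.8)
I would establish the two additive terms separately and then combine them via $\Omega(a)+\Omega(b)+\Omega(c)=\Omega(a+b+c)$; the bound $T=\Omega(n\,C^B_0)$ will follow from $T=\Omega(n\,C^G_0)$ by the symmetry of the protocol between $B$ and $G$. The term $T=\Omega(M)$ is immediate and holds for \emph{every} $\sigma$: at each round the protocol discloses the signs of exactly two directed edges (Steps $(3_B)$ and $(3_G)$); a match is disclosed at round $t$ only when one of its two reciprocal edges is probed at round $t$, and the probed edges are pairwise distinct across distinct disclosed matches, so $M_t(A)\le 2t$ always, whence $\E M_T(A)=\Theta(M)$ forces $T=\Omega(M)$. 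It remains to force $T=\Omega(n\,C^G_0)$.

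Since the conclusion concerns $\E M_T(A)$ averaged over both the random $\sigma$ and $A$'s internal coins, it suffices to exhibit a distribution over instances --- all with the prescribed covering numbers and with $M$ in $(m-\tfrac{n}{C^G_0+C^B_0-1},\,m]$ --- on which every \emph{deterministic} matchmaker has $\E M_T(A)=o(M)$ whenever $T=o(n\,C^G_0)$. Put $C:=C^G_0$. Draw the instance as a random ``block blow-up'': partition $B$ and $G$ into $C$ blocks of size $n/C$ uniformly at random, draw a uniformly random bijection $\pi$ from boy-blocks to girl-blocks, and let $b$ like $g$ and $g$ like $b$ precisely when the block of $g$ equals $\pi(\text{block of }b)$; finally add a sparse, adversarially placed set of ``exception'' reciprocal $+1$ edges, confined to a single pair of blocks, so that $M$ lands in the required interval while changing the number of distinct columns of $\bB$ and of $\bG$ by $o(C)$. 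A concentration argument gives $C^G_0=C^B_0=C$ and $M=\Theta(n^2/C)$ w.h.p., and all that is hidden from the algorithm is the two block-partitions and $\pi$.

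The crux is a ``search is unavoidable'' argument. Call a probe a \emph{hit} if the returned sign is $+1$; a probe incident to a boy $b$ --- one recommending a girl $g$ to $b$, or recommending $b$ to a girl $g$ --- returns $+1$ exactly when the ``tested block'' $\pi^{-1}(\text{block of }g)$ equals $\text{block of }b$. The key lemma is that, since $\text{block of }b$ is uniform over the $C$ blocks and independent of every probe not incident to $b$ (in particular of the algorithm's first probe incident to $b$, as $b$'s incident edges are then still un-probed), the probes incident to $b$ can reveal $\text{block of }b$ only through uniform sequential search; hence, writing $P_b$ for the (random) number of probes incident to $b$ over the run, $\Pr[\text{some hit is incident to }b]\le \tfrac{2}{C}\,\E[P_b]+o(1/C)$. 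Now each boy is an endpoint of at most $O(n/C)$ matches, and the boy-to-girl probe of any disclosed match is a hit incident to its boy-endpoint, so the number of boys that are endpoints of disclosed matches is $\Omega(M_T(A)\,C/n)$; assuming $\E M_T(A)=\Theta(M)=\Theta(n^2/C)$, taking expectations and summing the key lemma over $b$ gives $\Theta(n)\le \sum_b \Pr[\text{some hit incident to }b]\le \tfrac{2}{C}\,\E\big[\sum_b P_b\big]+o(n/C)\le \tfrac{4T}{C}+o(n/C)$, i.e.\ $T=\Omega(nC)=\Omega(n\,C^G_0)$. Combining with the symmetric bound and $T=\Omega(M)$ yields $T=\Omega\big(n(C^G_0+C^B_0)+M\big)$, complementing the sufficient condition of Theorem~\ref{th:smileMatchMequalMstar} up to logarithmic factors.

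The main obstacle is establishing the key lemma against a fully \emph{adaptive} algorithm that may try to pool information across boys in the same hidden block and across the two sides of the protocol. The delicate points are: (i) the first probe incident to $b$ is chosen with no information about $\text{block of }b$ only because $b$'s incident edges are still un-probed, so an algorithm that merely \emph{guesses} $b$'s block contributes at most an extra $1/C$ per boy --- harmless since $n/C=o(n)$, but it must be charged; (ii) the constant $2$ in the lemma comes from the algorithm being allowed to stop probing $b$ as soon as it hits, so $\E[P_b]$ undercounts the conditional expected search length by at most a factor $2$ while $P_b\le C$; (iii) the exception edges perturb the behavior of $o(n)$ boys, which one discards at the outset; and (iv) a routine concentration step is needed to verify that the perturbation introduced by exceptions does not inflate $C^G_0+C^B_0$ by more than a constant factor, so that the stated slack and the claimed bound are both preserved. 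I expect items (i)--(ii) to carry the real technical weight.
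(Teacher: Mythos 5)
Your overall decomposition ($T=\Omega(M)$ trivially, plus a one-sided $\Omega(n\,C^G_0)$ term by symmetry, combined additively) matches the paper's in spirit, but your route to the $\Omega(n\,C^G_0)$ term is genuinely different and, as written, has a gap at its technical core. Your construction is two-sided (equal-size blocks on both sides matched by a bijection $\pi$, plus exception edges), and everything rests on the ``key lemma'' $\Pr[\text{some hit incident to }b]\le \frac{2}{C}\E[P_b]+o(1/C)$, whose justification is the claim that $\mathrm{block}(b)$ is uniform and \emph{independent of every probe not incident to $b$}. With blocks of exactly size $n/C$ this independence claim is false: the edges not incident to $b$ can reveal the block memberships of all other boys (girl-side probes test exactly ``is $b'$ in block $X$''), and once those are pinned down $\mathrm{block}(b)$ is \emph{determined} (it is the unique block that is not yet full), so an adaptive algorithm can in principle get a hit incident to $b$ with one probe, violating the lemma. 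This cross-boy/cross-side pooling is precisely the obstacle you flag as ``carrying the real technical weight,'' but no argument is offered for it, and it is not a routine adaptivity issue --- it is created by the two-sided structure of your instance. The natural repair (i.i.d.\ multinomial block assignment, which does make $\mathrm{block}(b)$ independent of all edges not incident to $b$) breaks your other asserted step: $M$ then fluctuates by far more than the allowed slack $n/(C^G_0+C^B_0-1)$, exceptions can only add (never remove) matches, a single block pair holds only $(n/C)^2$ of them (insufficient when $C>\sqrt{n}$), and spreading many exceptions around threatens both the ``$o(C)$ new columns'' claim and the $O(n/C)$ per-boy degree bound your counting uses.

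The paper avoids all of this by making the hard instance one-sided: set $\sigma(g,b)=+1$ for all $(g,b)$, so $C^B_0=1$ and matches are governed by $\bB$ alone; fill $\bB$ with $\lfloor md/n\rfloor$ all-ones blocks of width $n/d$ placed uniformly at random among the $nd$ blocks. This gives $C^G_0=d$ (coupon-collector) and, by construction, $m-\frac{n}{C^G_0}<M\le m$ exactly --- no exception-edge bookkeeping needed. With only one informative side there is nothing to pool: a match is uncovered only if its boy-to-girl edge is probed, discovering a fresh $1$-block costs $\Theta(n^2/m)$ probes in expectation, and each discovered block yields at most $n/d$ matches, so $T=o(n\,C^G_0)$ forces the expected fraction of uncovered matches to be $o(1)$. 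If you want to keep your per-boy sequential-search lemma, the cleanest fix is essentially to adopt such a one-sided instance (or prove an approximate independence statement conditioned on typical block occupancies) and to hit the prescribed $m$ by counting blocks rather than by exception edges.
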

\begin{remark}
One can verify that the time bound for \smi\ established in Corollary~\ref{co:smileMatchMequalMstar} is nearly optimal whenever $M=\omega\left(n^{3/2}\sqrt{\log(n)}\right)$. To see this, observe that by definition we have
$C^G \le C^G_{0}$ and $C^B \le C^B_{0}$. Now, if $M=\omega\left(n^{3/2}\sqrt{\log(n)}\right)$, then the additive
term $\frac{n^3 \log(n)}{M}$ becomes $o(M)$ and the condition on $T$ in Corollary~\ref{co:smileMatchMequalMstar} 
simply becomes $T=\omega\left(n\,(C^G_0+C^B_0+M')\right)$, where $M'=o(M)$. This has to be contrasted to the lower bound on $T$ contained in Theorem \ref{th:smileTimeNeeded}.

We now explain why it is possible that, when $M=\omega\left(n^{3/2}\sqrt{\log n}\right)$, the additive term
$\frac{n^3\log n}{M}$ in the bound $T=\omega\left(n\,(C^G+C^B)+\frac{n^3\log(n)}{M}\right)$
of Corollary~\ref{co:smileMatchMequalMstar} becomes $o(M)$, while the first term $n\,(C^G+C^B)$ can be upper bounded by 
$n\,(C^G_0+C^B_0)$. Since the lower bound $T=\Omega(n\,(C^G_0+C^B_0)+M)$ of Theorem~\ref{th:smileTimeNeeded} has a linear dependence on $M$,
%to uncover $\Theta(M)$ matches 
it might seem quite surprising that the larger $M$ is the smaller becomes the second term in the bound of 
Corollary~\ref{co:smileMatchMequalMstar}.
%, which {\em seemingly} leads to a contradiction.
%
However, it is important to take into account that in Corollary~\ref{co:smileMatchMequalMstar} $T$ must be large enough to satisfy even the condition $M^*_T=M$. Let $T^*$ be the number of rounds $T$ necessary to satisfy w.h.p. $M^*_T=M$. In Corollary~\ref{co:smileMatchMequalMstar}, 
both the conditions $T\ge T^*$ and $T=\omega\left(n\,(C^G+C^B)+\frac{n^3\log(n)}{M}\right)$ must simultaneously hold. When $M$ is large, the number of rounds needed to satisfy the former condition becomes much larger than the one needed for the latter. 

%It is possible to get an insight about this, focusing on the following fact. 
As a further insight, consider the following.
We either have $M=\scO\left(n(C^G+C^B)\right)$ or $M=\omega\left(n(C^G+C^B)\right)$. In the first case, the lower bound in Theorem~\ref{th:smileTimeNeeded} clearly becomes $T=\Omega\left(n\,(C^G_0+C^B_0+C^G+C^B)\right)$, hence not directly depending on $M$. In the second case, whenever $M=\omega\left(n^{3/2}\sqrt{\log(n)}\right)$, 
%the order of the number of rounds needed by the Omniscent Matchmaker to uncover exactly $M$ matches, 
$T^*$ 
% at least as big as the one of the sum 
is larger than $n\,(C^G+C^B)+ \frac{n^3\log(n)}{M}$ since, by definition, we must have $T^*=\Omega(M)$, while in this case $n\,(C^G+C^B)+ \frac{n^3\log(n)}{M}=o(M)$.  
In conclusion, if the number of rounds \smi\ takes to uncover $\Theta(M)$ matches equals the number of rounds taken by the Omniscent Matchmaker to uncover exactly $M$ matches, then \smi\ is optimal up to a constant factor, because no algorithm can outperform the Omniscent Matchmaker.
This provides a crucially important insight into the key factors allowing the additive term $\frac{n^3\log n}{M}$ to be equal to $o(M)$ in Corollary~\ref{co:smileMatchMequalMstar}, and is indeed one of the keystones in the proof of Theorem~\ref{th:smileMatchMequalMstar} (see Appendix \ref{as:proofs}).
\end{remark}

%*******************************************

%{\bf CG: move to appendix}

%\subsection{Computational complexity and implementation details}\label{s:complexity}

%\vspace{-0.05in}
We conclude this section by emphasizing the fact that \smi\, is indeed quite scalable.
%It is immediate to implement \smi\ in such a way the time and the space required in the worst case is $\Theta(n^3)$ and $\Theta(n^2)$ respectively. However, 
%As we will show in the proof of the following theorem, it is possible to develop a very 
%much more 
As proven Appendix \ref{as:proofs}, an implementation of \smi\, exists that leverages a combined use of suitable data-structures, leading to both time and space efficiency.
%
%\vspace{-0.05in}
\begin{theorem}\label{th:smileCompComplexity}
The running time of \smi\ is $\scO\left(T+n\,S\,\left(C^G+C^B\right)\right)$, the memory requirement is $\scO(n\,(C^G+C^B))$. Furthermore, when 
%$T =\omega\left(n\,C+\frac{n^3\log(n)}{M}\right)$
\[
T =
\omega\left(n\,(C^G+C^B)+
\frac{n^3\log(n)}{M}\right)~,
\] 
as required by Corollary~\ref{co:smileMatchMequalMstar}, the amortized time per round is
\[
\Theta(1)+o(C^G+C^B)~,
\] 
which is always sublinear in $n$. 
\end{theorem}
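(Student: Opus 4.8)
The plan is to exhibit an explicit implementation of \smi\ built on hash tables and monotone per‑user cursors, and to bound the running time and the memory phase by phase, the stated totals following by summation. For every user $u\in B\cup G$ we keep a hash table $F_u$ storing, for each already‑queried neighbour, the observed sign (so insertions and lookups cost $\scO(1)$), together with the growing representative lists $G^r,B^r$, the indices $i,j$, and the flags $r_u$. Phase~0 merely runs \om, which needs $\scO(1)$ per round and hence $\scO(T_{\hat M})=\scO(n^2\log n/M)$ in total. In Phase~I each round contributes only $\scO(1)$ amortised work (one append to the current $F_{g_i}$ or $F_{b_j}$, plus index/flag updates), i.e.\ $\scO(T)$ overall; the sole super‑constant operation is the cluster‑assignment test, run at most once per user, when $|F_{g_i}|$ first reaches $S'$. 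For a fixed $g_i$ this test scans the current list $G^r$ and, for each $g^r$ in it, runs over the $\scO(S')$ entries of $F_{g_i}$ performing $\scO(1)$ hash lookups into $F_{g^r}$ to compare signs, so it costs $\scO(C^GS')$; since $S'=\Theta(S)$ whenever $S\ge\log n$, summing over the at most $n$ girls and symmetrically over the boys gives $\scO(nS(C^G+C^B))$. Growing each representative's table up to size $n/2$ adds only $\scO(n(C^G+C^B))$, which is absorbed.

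Between Phases~I and II we build, in time and space $\scO(n(C^G+C^B))$, for each user $u$ the list $\mathcal L_u$ of opposite‑side representatives that $u$ likes and has queried, and, for each representative $u^r$, the set of its queriers that like $u^r$, bucketed by the cluster those queriers belong to. In Phase~II each user carries a cursor that advances monotonically through the concatenation, over the representatives in $\mathcal L_b$, of the corresponding buckets associated with $\ctg(b)$; since each candidate girl $g$ of $b$ occurs in exactly one bucket of exactly one representative, the cursor inspects any candidate at most once, and on each Phase‑II logging of $b$ it advances until it outputs the next candidate (or it is exhausted and we select a girl arbitrarily). Hence the total Phase‑II cursor work is $\scO(T)$. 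A candidate that $b$ had already queried during Phase~I is simply re‑recommended, a ``wasted'' round, rather than detected; this is precisely what lets us dispense with per‑user Phase‑I query sets and keep the memory at $\scO(n(C^G+C^B))$, and the number of such wasted rounds is $\scO\bigl(n(C^G+C^B+S')\bigr)$, which is $o(T)$ under the hypotheses of Corollary~\ref{co:smileMatchMequalMstar} and therefore does not affect the guarantee $M_T(\smi)=\Theta(M)$.

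Summing Phases~0--II gives running time $\scO(T+nS(C^G+C^B))$ and memory $\scO(n(C^G+C^B))$ -- the $\scO(S')$‑size tables of the at most two non‑representatives being processed at any instant total $\scO(n)$ and are lower order. Dividing the running time by $T$, and noting that each round does $\Omega(1)$ work, the amortised per‑round cost is $\Theta(1)+\scO\!\bigl(\tfrac{nS(C^G+C^B)}{T}\bigr)$. Under Corollary~\ref{co:smileMatchMequalMstar} the parameter $S$ is tuned to $\Theta\!\bigl(\tfrac{n^2\log n}{M}\bigr)$, so $nS=\Theta\!\bigl(\tfrac{n^3\log n}{M}\bigr)$, and the assumed $T=\omega\!\bigl(n(C^G+C^B)+\tfrac{n^3\log n}{M}\bigr)$ forces $\tfrac{nS}{T}=o(1)$; hence the amortised cost is $\Theta(1)+o(C^G+C^B)$, which is $o(n)$ since $C^G,C^B\le n$.

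The main obstacle is the amortised analysis of the candidate search in Phase~II (and, to a lesser degree, of the cluster‑assignment test in Phase~I): one must argue that bucketing the queriers by cluster together with strictly monotone cursors makes every candidate be inspected only $\scO(1)$ times, so that no hidden $\Theta(n^2)$ or $\Theta(T(C^G+C^B))$ term creeps in, while simultaneously deciding what to store and what to recompute so that the memory stays at $\scO(n(C^G+C^B))$ without spoiling the time bound. Reconciling these two requirements -- in particular trading the exact detection of Phase‑I re‑queries for an $o(T)$ number of wasted rounds, and discarding the transient $F$‑tables of clustered non‑representatives at exactly the right moment -- is where essentially all the care in the proof goes.
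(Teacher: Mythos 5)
Your proposal is correct and follows essentially the same route as the paper's proof: an explicit phase-by-phase implementation in which the Phase-I cluster-assignment tests give the $\scO\left(n\,S\,(C^G+C^B)\right)$ bottleneck, the clustering information is compressed into $\scO\left(n\,(C^G+C^B)\right)$-size cluster-level structures, Phase II is served by forward-only per-user pointers, and the amortized bound follows by dividing by $T$ with $S=\Theta\left(\frac{n^2\log n}{M}\right)$. The only differences are implementation-level (hash tables and bucketed cursors in place of the paper's balanced tree, matching matrix $\bM$, and binary searches, thereby avoiding the $\log$ factors the paper absorbs via $S\ge\log n$), plus your explicit treatment of Phase 0 and of the $(b,g)\notin E_t(\smi)$ check via $o(T)$ wasted rounds, details the paper's sketch glosses over.
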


\iffalse
\begin{theorem}\label{th:smileCompComplexity}
The total worst case time
%\footnote{[\textit{\color{ForestGreen} \textbf{FV}: I am currently working on designing a more sophisticated ad hoc data structure that could perhaps further reduce the total worst case time complexity without increasing the worst case space complexity. Furthermore, although  reducing the space complexity seems possible, it is likely it would dramatically increase the time complexity. Hence, I am focusing solely on reducing the worst case time.}]} 
and space required by \smi\ is respectively equal to 
\[
\scO\left(T+n\,C\,(S+\log n)\right)
\] 
and 
\[
\scO(n\,C)~.
\]
Furthermore, after having collected all matching information during phase I and II, the {\em round} worst case time complexity in phase II is equal to $\scO(n+C\,\log n)$.
\end{theorem}

\begin{corollary}
When $T =\omega\left(n\,C+
\frac{n^3\log(n)}{M}\right)$, as required in the statement of Corollary~\ref{co:smileMatchMequalMstar}, the amortized time per round used by \smi\ is equal to $\Theta(1)+\sco(C)$. 
\end{corollary}
\fi
%********************************************

\newcommand{\ismi}{\textsc{i-smile}}

%\vspace{-0.12in}
\section{Experiments}\label{s:exp}
%
%\vspace{-0.1in}
In this section we evaluate the performance of (a variant of) our algorithm by empirically contrasting it to simple baselines against artificial and real-world datasets from the online dating domain. The comparison on real-world data also serve as a validation of our modeling assumptions.
%
%of the algorithms that we propose in practice. 
%

\begin{table}[t!]
%{\tiny
%\vspace{-0.2in}
	\centering
	\renewcommand{\arraystretch}{1.2}
\resizebox{\textwidth}{!}{
	\begin{tabular}{|c|cccccccccc|}
		\cline{6-11}
		\multicolumn{5}{c}{}  &
		\multicolumn{6}{|c|}{\#clusters within bounded radius}
		\\ \hline
		\multicolumn{5}{|c}{properties}  &
		\multicolumn{2}{|c}{$2 \cdot n/\log(n)$} & \multicolumn{2}{|c}{$n/\log(n)$} & \multicolumn{2}{|c|}{$0.5 \cdot n/\log(n)$}
		\\ \hline
		\multicolumn{11}{|c|}{Synthetic datasets (2000 boys and 2000 girls)}
		\\ \hline
		& $|\scC(B)|$ & $|\scC(G)|$ & \#likes & \#matches  & \multicolumn{1}{|c}{$|\scC(B)|$} & $|\scC(G)|$ & \multicolumn{1}{|c}{$|\scC(B)|$} & $|\scC(G)|$ & \multicolumn{1}{|c}{$|\scC(B)|$} & $|\scC(G)|$ \\ \hline
		\multicolumn{1}{|l|}{S-20-23} & $20$ & $22$ 
		& $2.45M$%$2445561$ 
		& $374K$%$373714$ 
		 & \multicolumn{1}{|c}{20}&  23 &\multicolumn{1}{|c}{20} & 23 &\multicolumn{1}{|c}{445} & 429
		\\ \hline
		\multicolumn{1}{|l|}{S-95-100} & $95$ & $100$ 
		& $2.46M$%$2457536$ 
		& $377K$%$376774$ 
		 &\multicolumn{1}{|c}{95} & 100 &\multicolumn{1}{|c}{95} & 100 &\multicolumn{1}{|c}{603} & 624
		\\ \hline
		\multicolumn{1}{|l|}{S-500-480} & $500$ & $480$ 
		& $2.47M$%$2465618$ 
		& $380K$%$380396$ 
		 &\multicolumn{1}{|c}{500} & 480 &\multicolumn{1}{|c}{500} & 480 &\multicolumn{1}{|c}{983} &950
		\\ \hline
		\multicolumn{1}{|l|}{S-2000-2000} & $2000$ & $2000$ 
		& $2.47M$%%$2472008$ 
		& $382K$%$381740$ 
		 &\multicolumn{1}{|c}{2000} & 2000 &\multicolumn{1}{|c}{2000} & 2000 &\multicolumn{1}{|c}{2000} & 2000
		\\ \hline
		\multicolumn{11}{|c|}{Real-world datasets}  %& \multicolumn{2}{|c}{$2 \cdot n/\log(n)$} & \multicolumn{2}{|c}{$n/\log(n)$} & \multicolumn{2}{|c|}{$0.5 \cdot n/\log(n)$}
		\\ \hline
		\multicolumn{1}{|l|}{} & \multicolumn{1}{c}{$|B|$} & \multicolumn{1}{c}{$|G|$} & \multicolumn{1}{c}{\#likes} & \multicolumn{1}{c}{\#matches}    & \multicolumn{1}{|c}{$|\scC(B)|$} & $|\scC(G)|$ & \multicolumn{1}{|c}{$|\scC(B)|$} & $|\scC(G)|$ & \multicolumn{1}{|c}{$|\scC(B)|$} & $|\scC(G)|$ \\ \hline
		RW-1007-1286 & $1007$ & $1286$ 
		& $125K$%$124813$ 
		& $13.9K$%$13935$ 
		& \multicolumn{1}{|c}{53} & 48 & \multicolumn{1}{|c}{177} & 216 & \multicolumn{1}{|c}{385} & 508
		\\ \hline
		RW-1526-2564 & $1526$ & $2564$ 
		& $227K$%$227286$ 
		& $19.6K$%$19569$ 
		& \multicolumn{1}{|c}{37} & 45 & \multicolumn{1}{|c}{138} & 216 & \multicolumn{1}{|c}{339} & 601 
		\\ \hline
		RW-2265-3939 & $2265$ & $3939$ 
		& $370K$%$370864$ 
		& $25.0K$%$25031$ 
		& \multicolumn{1}{|c}{42} & 45 & \multicolumn{1}{|c}{145} & 215 & \multicolumn{1}{|c}{306} & 622
		\\ \hline
	\end{tabular}
	}
%}
	\vspace{.25cm}
	\caption{Relevant properties of our datasets. The last six columns present an approximation to the number of clusters when we allow radius $2 \cdot n/\log(n)$, $n/\log(n)$, and $0.5 \cdot n/ \log(n)$ between users of the same cluster.}
	\label{tab:characteristics}
	%\vspace{-0.3in}
\end{table}

%\vspace{-0.1in}
\paragraph*{Datasets.}
%In our experiments we include both synthetic and real-world datasets. 
The relevant properties of our datasets are given in Table \ref{tab:characteristics}. Each of our synthetic datasets has $|B| = |G| = 2000$. We randomly partitioned $B$ and $G$ into $C_B$ and $C_G$ clusters, respectively. Each boy likes all the girls of a cluster $C$ with probability $0.2$, and with probability $0.8$ dislikes them. We do the same for the preferences from girls to boy clusters. Finally, for each preference (either positive or negative) we reverse its sign with probability $1/(2\cdot\log n)$ (in our case, $n=2000$). Notice in Table \ref{tab:characteristics} that, for all four datasets we generated, the number of likes is bigger than $|B|\cdot|G|/2$.
%We generate four datasets, S-20-23 where $|\scC(B)| = 20$ and $|\scC(G)|=23$, S-95-100 where $|\scC(B)| = 95$ and $|\scC(G)|=100$, S-500-480 where $|\scC(B)| = 500$ and $|\scC(G)|=480$, and finally S-uniform for which $|\scC(B)| = 2000$ and $|\scC(G)|=2000$
%
%
As for real-world datasets, we used the one from \cite{brozovsky07recommender}, which is also publicly available. This is a dataset from a Czech dating website, where 220,970 users rate each other in a scale from 1 (worst) to 10 (best). The gender of the users is not always available.
%provided, if available. 
To get two disjoint parties $B$ and $G$, where each user rates only users from the other party, % we included in $B$ the users labeled as males and in $G$ the users labeled as females, that is,
we disregarded all users whose gender is not specified.
As this dataset is very sparse, we extracted dense subsets as follows.
We considered as ''like" any rating $> 2$, while all ratings, including the missing ones, are ''dislikes".
Next, we iteratively removed the users with the smallest number of ratings until we met some desired density level.
Specifically, we executed the above process until we obtained two sets $B$ and $G$ such that the number of likes between the two parties is at least $2(\min\{|B|,|G|\})^{3/2}$ (resulting in dataset RW-1007-1286), $1.75(\min\{|B|,|G|\})^{3/2}$ (dataset RW-1526-2564), or $1.5(\min\{|B|,|G|\})^{3/2}$ (dataset RW-2265-3939).

%\vspace{-.3cm}
\paragraph{Random baselines.} We included as baselines \om\,, from Section \ref{s:limitations_and_omni}, and a random method that asks a user for his/her feedback on another user (of opposite gender) picked uniformly at random. We refer to this algorithm as \uromm{}.

%\vspace{-.3cm}
\paragraph{Implementation of \smi.}
In the implementation of \smi{}, we slightly deviated from the description in Section \ref{ss:smile}.
% , so as to: 
%\textit{(i)} incorporate the fact that the real-world datasets are not perfectly clusterable, 
%\textit{(i)} deal with the cases where the input datasets are uniformly random, \textit{(ii)} avoid asking arbitrary queries if more valuable queries are available, \textit{(iii)} discover matches during the exploration phase of the algorithm.
%To achieve all these desiderata, we slightly adapted the implementation of \smi{}. 
One important modification is that we interleaved Phase I and Phase II. The high-level idea is to start exploiting immediately the clusters once some clusters are identified, without waiting to learn all of them. 
Additionally, we gave higher priority to exploring the reciprocal feedback of a discovered like, and we avoided doing so in the case of a dislike. Finally, whenever we test whether two users belong in the same cluster, we allowed a radius of a $(1/\log(n))$ fraction of the tested entries. The parameter $S'$ in \smi\ has been set to $S+\sqrt{S \log n}$. We call the resulting algorithm \ismi{} (Improved \smi). See Appendix \ref{s:exp-app} for more details.

%\vspace{-.3cm}
\paragraph*{Evaluation.} 
%Now, we evaluate the algorithms \uromm{}, \om{}, and \ismi{} on the dataset from Table \ref{tab:characteristics}.
To get a complete picture on the behavior of the algorithms for different time horizons, we present for each algorithm the number of discovered matches as a function of $T\in \{1,\ldots, 2|B||G|\}$. % available.
Figure~\ref{fig:plots} contains 3 representative cases, further plots are given in Appendix \ref{s:exp-app}. In all datasets we tested, \ismi{} 
%significantly 
clearly outperforms \uromm{} and \om{}.
Our experiments confirm that \smi{} (and therefore \ismi{}) 
%
%{\bf CG: Shouldn't the implication be reversed ?}
%
quickly learns the underlying structure of the likes between users, and uses this structure to reveal %fast 
the matches between them. Moreover, the variant \ismi{} that we implemented allows one not only to perform well on graphs with no underlying structure in the likes, but also to discover matches during the exploration phase while learning the clusters.
A summary of the overall performance of the algorithms is reported in Table \ref{tab:under-curve} in Appendix \ref{s:exp-app}, where we give the \emph{area under the curve} metric, capturing
% which sums over time the number of matches that are uncovered at each time-step $t$, divided by the total number of time-steps. This metric 
how quickly, on average, the different algorithms learn over time. Again, \ismi{} is largely outperforming its competitors.

%{\bf CG: did you actually divide these values by the total no. of time steps ? They seem too large ...}

%

\begin{figure}[t!]
\vspace{0.1in}
%\begin{minipage}{0.70\textwidth}%%%
\vspace{0.01in}
	\begin{center}
		\begin{subfigure}{.32\textwidth}
			\centering
			\includegraphics[trim ={0.5cm 0cm 9.5cm 0}, width=1\linewidth]{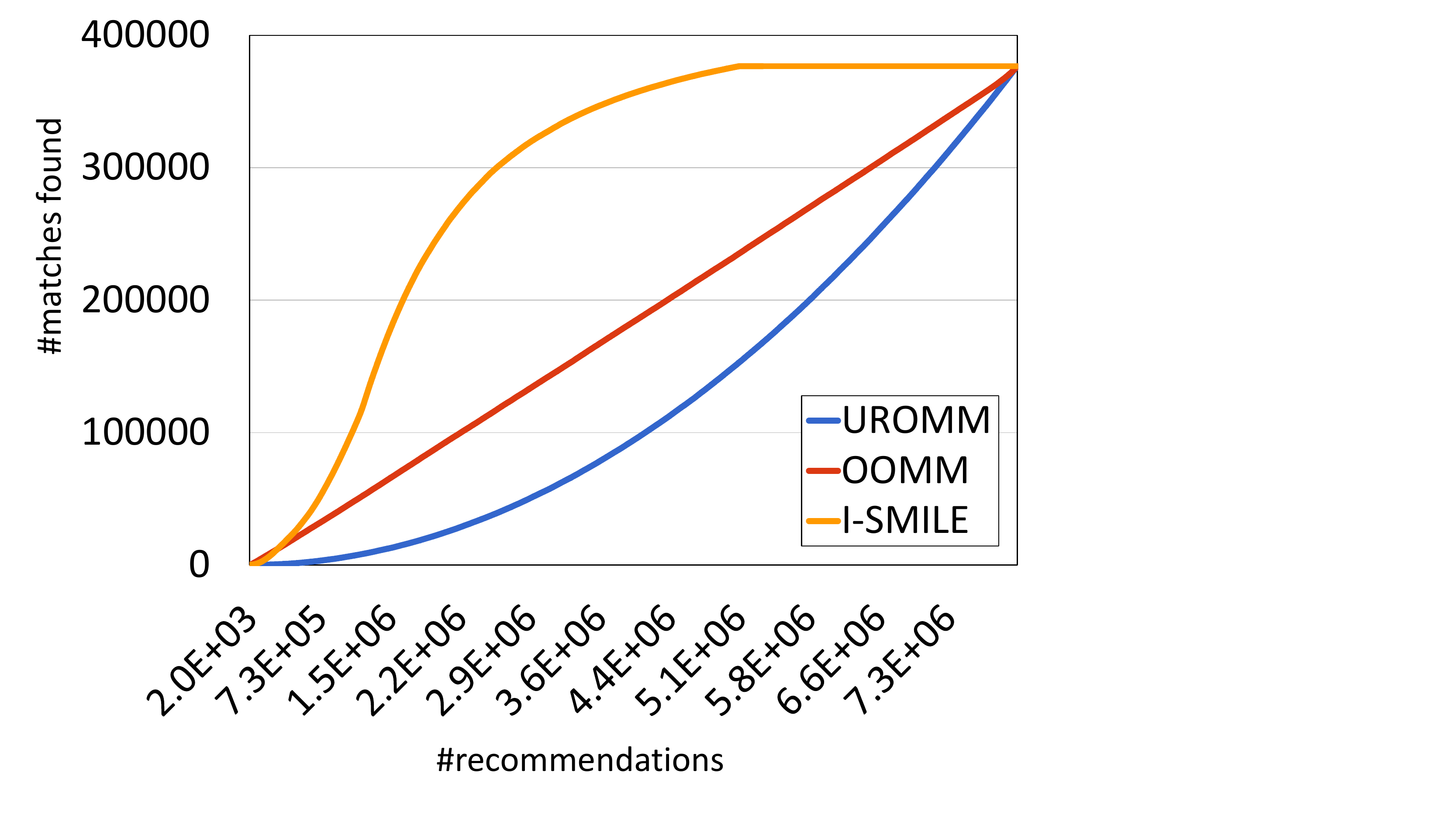}
			\label{fig:sub1}
		\end{subfigure}%
		\begin{subfigure}{.32\textwidth}
			\centering
			\includegraphics[trim ={0.5cm 0cm 9.5cm 0},width=1\linewidth]{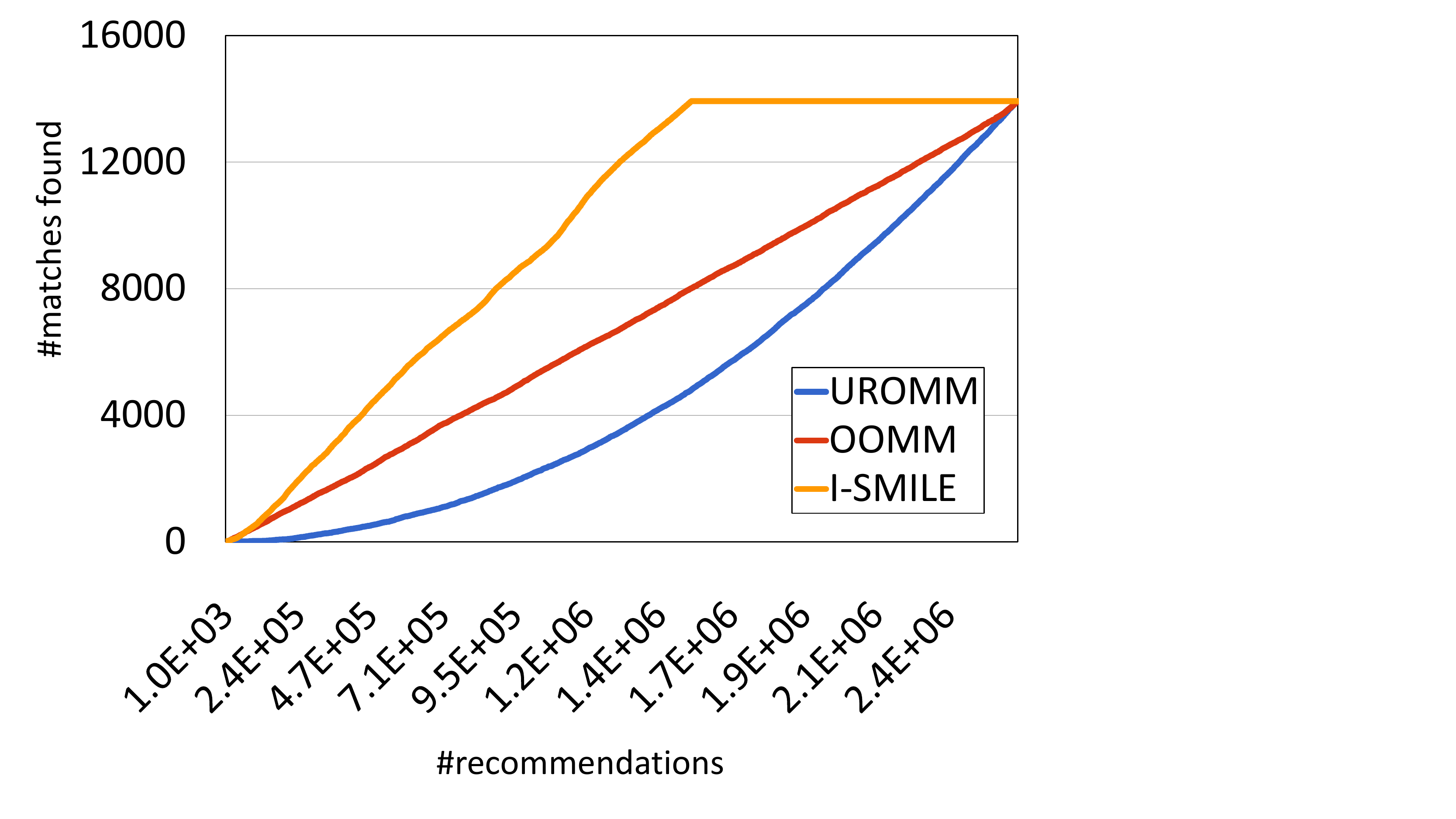}
			\label{fig:sub2}
		\end{subfigure}
		\begin{subfigure}{.32\textwidth}
		\centering
		\includegraphics[trim ={0.5cm 0cm 9.5cm 0},width=1\linewidth]{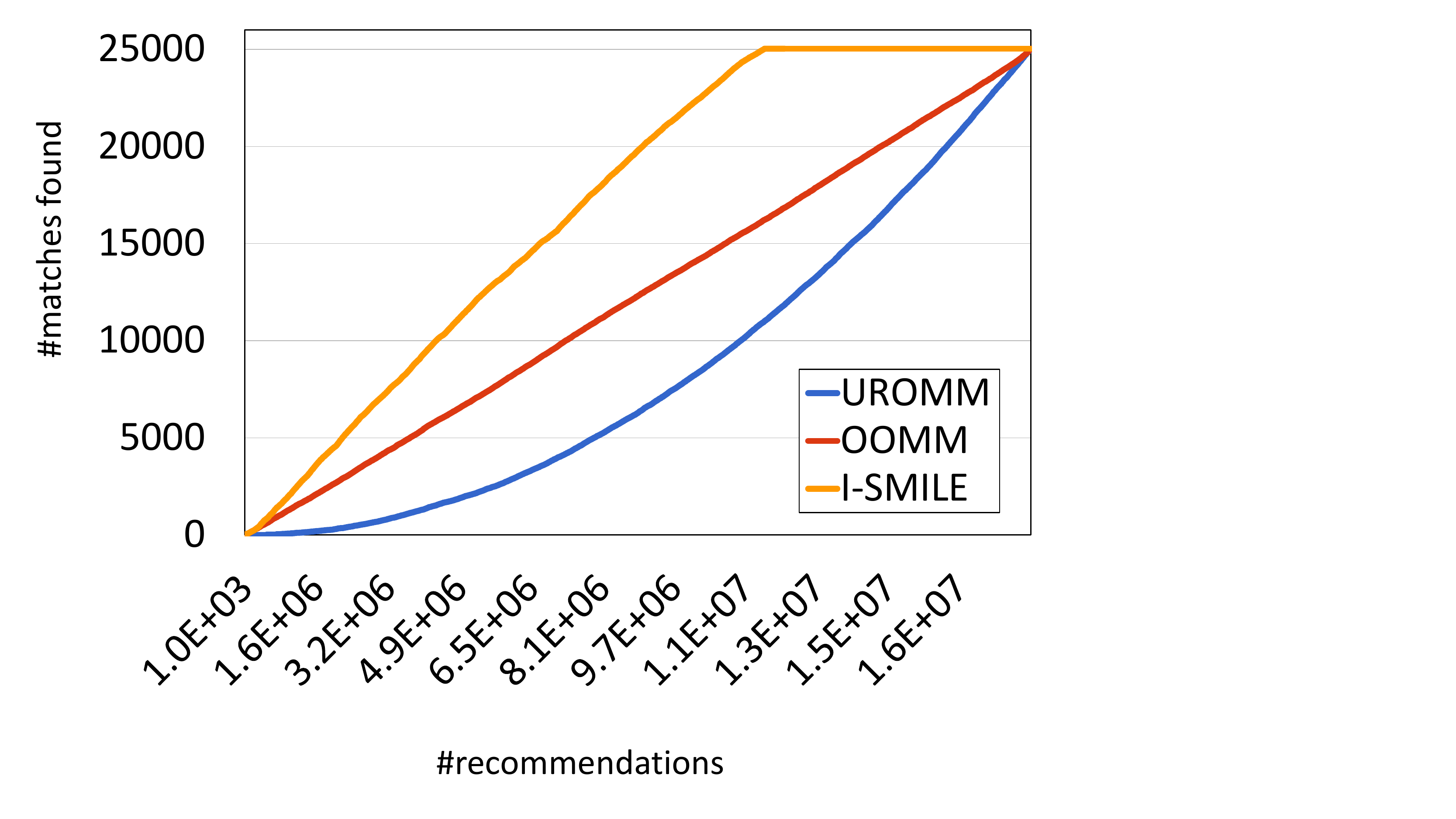}
		\label{fig:sub3}
\end{subfigure}
	\end{center}
	%\end{minipage}%%%
%%\vspace{-0.3in}
%\quad\quad\quad\quad\quad\quad\quad\quad\quad\quad\quad
%\hspace{-0.02em}
%\begin{minipage}[c]{0.30\textwidth}
%\vspace{0.0in}
	\caption{
	Empirical comparison of the 3 algorithms on datasets S-95-100 (left), RW-1007-1286 (middle), RW-2265-3939 (right). Each plot gives number of disclosed matches vs. time. 
(no of recommendations). \ismi{}'s yellow curve flattens out when there are no more matches to uncover.
}\label{fig:plots}
%\end{minipage}%
%%\vspace{-0.15in}
%\small{}
%%\vspace{-0.16in}
\end{figure}
%\fi

%\input{refinements}

%\vspace{-0.1in}
\section{Conclusions and Ongoing Research}\label{s:conclusions}
%
%\vspace{-0.1in}
%\noindent{\bf Conclusions and Ongoing Research. }
We have initiated a theoretical investigation of the problem of reciprocal recommendation in an ad hoc model of sequential learning. 
%under clusterability assumptions of the received preferences.
%
%\iffalse
%*********************************************
Under suitable clusterability assumptions, 
% of the preferences each user on one side may receive from the other side, 
we have introduced an efficient matchmaker called \smi\,, and have proven its ability to uncover matches at a speed comparable to the Omniscent Matchmaker, so long as $M$ and $T$ are not too small (Theorem \ref{th:smileMatchMequalMstar} and Corollary \ref{co:smileMatchMequalMstar}). Our theoretical findings also include a computational complexity analysis %of (a fast implementation of) \smi\, 
(Theorem \ref{th:smileCompComplexity}), as well as limitations on the number of disclosable matches in both the general (Theorem \ref{th:adversarialUB}) and the cluster case (Theorem \ref{th:smileTimeNeeded}).
%*********************************************
%\fi 
We complemented our results with an initial set of experiments on synthetic and real-world datasets in the online dating domain, showing encouraging evidence.

Current ongoing research includes:
\begin{itemize}
\item [i.] Introducing suitable noise models for the sign function $\sigma$. 
\item [ii.] Generalizing our learning model to nonbinary feedback preferences.
\item [iii.] Investigating algorithms whose goal is to maximize the area under the curve ``number of matches-vs-time", i.e., the criterion
\(
\sum_{t \in [T]} M_t(A)~,
\)
rather than the one we analyzed in this paper; maximizing this criterion requires interleaving the phases where we collect matches (exploration) and the phases where we do actually disclose them (exploitation).
\item [iv.] More experimental comparisons on different datasets against heuristic approaches available in the literature.
%, with challenging problems of theoretical nature and practical relevance;
%
\end{itemize}

\appendix
\section{Ancillary Lemmas}\label{as:ancillaryTechnicalities}

\subsection{Hamming distance-based clustering lemmas}\label{s:hammingDistance}
Given an $r\times c$ matrix $A$, an $r$-dimensional vector $\bc$, and a subset of indices $Z \subseteq [r]$, let $\scA(Z, \bc)$ be the set containing all the column vectors $\bv$ of $A$ such that $\bv_i=\bc_i$ for all indices $i \in Z$.
Furthermore, given an integer $k>0$, we denote by $R_k$ a set $k$ distict
%\footnote{Note that $R_k$ is {\em not} a multiset, i.e. if it contains an integer $i \in [r]$, then it does not contain any copy of $i$ independently of how many times $i$ is sampled in the random process.} 
integers drawn uniformly at random from $[r]$.
%Finally, let $R_k'$ be the intersection set between $R_k$ and a set of $\frac{n}{2}$ integers sampled uniformly at random with replacement from $[r]$.
We have the following lemma, whose proof is given in Appendix \ref{le:hammingDistance}.
\begin{lemma}\label{le:hammingDistance}
Given any matrix $A \in \{0, 1\}^{r \times c}$ where $r\ge c>1$, any column vector $\bc$ of $A$, any positive constant $\beta$ and any integer $k\ge \lceil \beta \log r\rceil$, the Hamming distance between $\bc$ and any column vector of $\scA(R_k, \bc)$ is upper bounded by $\frac{\beta r}{k}\log r$ with probability at least
$1-\,r^{\,1-\beta}$.

%Recall that, given a pair of vectors $\boldsymbol{u}$ 
%and $\boldsymbol{v}$ that are both in $\{0, 1\}^m$, the Hamming distance between $\boldsymbol{u}$ and $\boldsymbol{v}$ is defined as the number of index elements $i \in \{1, 2, \ldots, m\}$ such that $\boldsymbol{u}_i\neq\boldsymbol{v}_i$..
\end{lemma}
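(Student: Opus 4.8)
The plan is to run a union bound over the columns of $A$: for each column that is far from $\bc$ in Hamming distance, I would bound the probability that it survives the random coordinate restriction $R_k$, and show this probability is tiny, so that whp no far column ends up in $\scA(R_k,\bc)$.

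First, I would fix a column $\bv$ of $A$ distinct from $\bc$, let $d$ be its Hamming distance to $\bc$, and let $D\subseteq[r]$ with $|D|=d$ be the set of coordinates on which $\bv$ and $\bc$ disagree. By definition of $\scA(\cdot,\cdot)$, one has $\bv\in\scA(R_k,\bc)$ exactly when $R_k\cap D=\emptyset$. Since $R_k$ is a uniformly random $k$-subset of $[r]$, this probability equals $\binom{r-d}{k}/\binom{r}{k}=\prod_{i=0}^{k-1}\frac{r-d-i}{r-i}$; bounding each factor by $1-d/r$ (valid since $r-i\le r$, with the degenerate case $d=r$ giving probability $0$) yields
\[
\Pr\bigl[\bv\in\scA(R_k,\bc)\bigr]\le\Bigl(1-\frac{d}{r}\Bigr)^{k}\le e^{-dk/r}~.
\]
Next I would call $\bv$ \emph{far} if its Hamming distance to $\bc$ exceeds $\frac{\beta r}{k}\log r$. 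For such a column the displayed bound gives $\Pr[\bv\in\scA(R_k,\bc)]<e^{-\beta\log r}=r^{-\beta}$ (reading $\log$ as the natural logarithm). Since $A$ has $c$ columns and $\bc$ is at distance $0$ from itself, there are at most $c-1\le r$ far columns, so a union bound gives
\[
\Pr\bigl[\scA(R_k,\bc)\text{ contains a far column}\bigr]\le c\,r^{-\beta}\le r^{\,1-\beta}~.
\]
On the complementary event, which has probability at least $1-r^{1-\beta}$, every column in $\scA(R_k,\bc)$ is within Hamming distance $\frac{\beta r}{k}\log r$ of $\bc$, which is the claim. The hypothesis $k\ge\lceil\beta\log r\rceil$ enters only to guarantee $\frac{\beta r}{k}\log r\le r$, i.e.\ to make the bound non-vacuous.

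I do not expect a deep obstacle here; the only points that need care are the sampling-without-replacement estimate $\binom{r-d}{k}/\binom{r}{k}\le e^{-dk/r}$ in the first step (one should check termwise that $\frac{r-d-i}{r-i}\le 1-\frac{d}{r}$, and that this is in fact \emph{tighter} than the with-replacement bound $(1-d/r)^k$), and making sure the union bound ranges over at most $c\le r$ columns so that the failure probability comes out exactly as $r^{1-\beta}$.
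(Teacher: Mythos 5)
Your proof is correct and follows essentially the same route as the paper's: bound the probability that any single ``far'' column (Hamming distance exceeding $\frac{\beta r}{k}\log r$ from $\bc$) survives the restriction to $R_k$ by $r^{-\beta}$, then union bound over at most $c\le r$ columns to get failure probability $r^{1-\beta}$. The only difference is that you treat the without-replacement sampling explicitly via $\binom{r-d}{k}/\binom{r}{k}$ before relaxing to $(1-d/r)^k$, whereas the paper writes the bound $\bigl(1-\frac{\beta\log r}{k}\bigr)^{k}\le r^{-\beta}$ directly; this is a refinement, not a different argument.
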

\begin{proof}
Let $R_k=\{i_1, i_2, \ldots, i_k\}$.
Let $\scV(A,\bc)$ be the set of columns vectors $\bv$ of $A$ such that the Hamming distance between $\bc$ and $\bv$ is larger than $\frac{\beta r}{k}\log r$. Clearly, we have $|\scV(A,\bc)|\le c$. Thus, given any vector $\bv \in \scV(A,\bc)$, the probability that it belongs to $\scA(R_k, \boldsymbol{c})$ can be upper bounded as follows:
%\[
%\Pr(\bv \in \scA(R_k, \bc))=
%\prod_{i \in R_k} \Pr(\bv_{i}=\bc_{i})
%\le \left(1-\frac{\beta\frac{r}{k}\log c}{r}\right)^{|R_k|}\le c^{-\beta+1}~.
%\] 
\begin{align}
\Pr(\bv \in \scA(R_k, \bc))
&=\Pr\left(\bv_{i_j}=\bc_{i_j}~\forall j\in[k]\right)\nonumber\\
&\le
\left(1-\frac{\frac{\beta r}{k}\log r}{r}\right)^{k}\nonumber\\
&=
\left(1-\frac{\beta\log r}{k}\right)^{k}\nonumber\\
%%%+e^{-\frac{k}{8}}\label{eq:cf}\nonumber\\
%&\le %\left(1-r^{1-\beta}\right) 
%r^{-\beta}+r^{-\beta}\nonumber\\
&\le r^{-\beta}\nonumber~.%\label{eq:kbound}~.
\end{align}
%
%where in inequality~\ref{eq:cf} we applied a Chernoff bound to limit the cardinality of $R_k$ by $k$, and in inequality~\ref{eq:kbound} we 
%used the statement assumption $k\ge \lceil 8 \beta \log r\rceil$.
%
%
The probability that there exists at least one column vector belonging to both $\scV(A,\bc)$ and $\scA(R_k, \bc)$ can therefore be bounded as follows~:
\begin{align}
\Pr(|\scV(A,\bc) \cap \scA(R_k, \bc)|\neq \emptyset)
&\le\sum_{\bv \in \scV(A,\bc)}\Pr(\bv \in \scA(R_k, \bc))\label{eq:unionBound}\\
&\le|\scV(A,\bc)|%\,2
\,r^{-\beta}\nonumber\\
&\le %2\,%
c\,r^{-\beta}\nonumber\\
&\le %2\,%
r^{1-\beta}\label{eq:setSize}~,
\end{align}
where in Equation~(\ref{eq:unionBound}) we simply use the union bound, and in Equation~(\ref{eq:setSize}) we took into account that $|\scV(A,\bc)|\le c \le r$.
\end{proof}

\subsection{Setting \smi\ parameter $S$}\label{s:settingS}
When putting together the information gathered during phase I, we may both miss to detect pairs of matching users, and consider some pairs of users as part of $\scM$ while they are not. In fact, \smi\, does not completely recover the structure of $\scM$, it only creates an approximate matching graph $\scM'$. Let $E_{\scM}$ and $E_{\scM'}$ be the set of edges of the two matching graphs. The error in reconstructing $\scM$ through $\scM'$ is represented by all edges in $E_{\scM}\triangle E_{\scM'}$, the symmetric difference between $E_{\scM}$ and $E_{\scM'}$. 
%We denote by $M_{\triangle}$ its cardinality, i.e. $M_{\triangle}\defeq |E_{\scM}\triangle E_{\scM'}|$. 

During phase I, applying Lemma~\ref{le:hammingDistance} with $\beta=3$, we have that for any user in $B \cup G$, the number of mispredicted feedbacks is w.h.p. bounded by $\frac{3n\log n}{S}$. %Furthermore, the algorithm reconstructs just a part of $\scM$. In fact, 
It is not difficult to see that, requesting $\frac{n}{2}$ feedbacks selected uniformly at random for each cluster representative, the number of edges of $\scM$ recovered is w.h.p. equal to $\frac{1}{4}|E_{\scM}|-o(|E_{\scM}|)=\frac{M}{4}-o(M)$,\footnote
{
As we assume in our analysis that $n$ goes to infinity, we also assume that $M$, as a function of $n$, diverges as $n\to\infty$. Note that, even in the lower bound on $M$ contained in the statement of Theorem~\ref{th:smileMatchMequalMstar}, $M$ is in fact always superlinear in $n$ because of the definition of the range of values of $S$, i.e.,  $S\in[\log(n),n/\log(n)]$.
}
Hence, the total number of matches that we do not detect or that we mispredict is upper bounded w.h.p. by 
$\frac{3}{4}M+\frac{6n^2\log n}{S}+o(M)$.
%The number of rounds of phase II necessary 
%to uncover the matches previously estimated, is upper bounded by 
%$\frac{1}{4}M+\frac{6n^2\log n}{S}+o(M)$. 

Since our goal is to find w.h.p. $\Theta(M)$ matches (under the assumption that $M^*_T=M$ holds w.h.p.), 
a lower bound on $M$ required to achieve this goal is 
$M \ge \frac{\gamma n^2\log n}{S}$ 
for some constant $\gamma$.
%for any positive $\gamma$ bounded away from $0$. 
%Setting $\gamma=1$ for the sake of simplicity, 
%Thus, the constraint on $M$ becomes $M \ge \frac{\alpha n^2\log n}{S}$ 
%for a constant $\alpha$ strictly larger than $24$. 
This implies that, by setting $S=\frac{\gamma n^2\log n}{M}$, we are guaranteed to find w.h.p. at least a constant fraction of the total number of matches $M$. 
%In fact, consistently with Theorem~\ref{th:smileMatchMequalMstar}, when $M$ is too small we cannot guarantee that \smi\ uncovers $\Theta(M)$ matches.
\iffalse
Taking into account that $S'=\Theta(S)$, we can also state that within this setting, the number of rounds necessary to build an approximation of $\scM$
is bounded  
by $\scO\left(\frac{n}{2}\,C + \frac{\gamma n^3\log n}{M}\right)$. 
\fi

%\input{remarkOptimality}

\section{Proofs}\label{as:proofs}

\subsection{Proof of Theorem~\ref{th:adversarialUB}}

\begin{proof} 
%\textbf{of Theorem~\ref{th:adversarialUB}}.~
Consider the following adversarial random strategy. We select uniformly at random $m$ elements from the set of pairs 
$B\times G$. For each selected pair $(b,g)$, we set both $\sigma(b,g)$ and $\sigma(g,b)$ to $+1$, and then assign the value $-1$ to all remaining directed edges of $E$. We have therefore $M=m$. 

Given any algorithm $A$, recall that $E_T(A)$ denotes the set of directed edges selected by $A$ during $T$ rounds. We now define $E'_T(A)$ as the following superset of $E_T(A)$\,:
\[
E'_T(A) \defeq E_T(A) 
\cup \{(g',b') : (b',g') \in E_T(A)\}
\cup \{(b'',g'') : (g'',b'') \in E_T(A)\}~.
\]
$E'_T(A)$ contains all directed edges
$(b',g')$ and $(g'',b'')$ already contained in $E_T(A)$
%\cap(B\times G)$ and $({g'},{b'}) \in E_T(A)\cap(G\times B)$,
~together with their respective reciprocal edges $(g',b')$ and 
$(b'',g'')$.

Let now $M'_T(A)$ be the number of matches formed by the edges in $E'_T(A)$:
\[
M'_T(A) \defeq |\{~\{b,g\}: (b,g)  ,(g,b)\in E'_T(A),~\sigma(b,g)  = \sigma(g,b)=+1\}|~.
\]
By the definition of $M'_T(A)$, we know that $M'_T(A)\ge M_T(A)$
and $|E'_T(A)|\le 2|E_T(A)|$ which in turn is equal to $4T$, because during each round two distinct edges are selected. The number of pairs of reciprocal edges of $|E'_T(A)|$ is $\frac{|E'_T(A)|}{2}\le2T$, because for each edge $(u,u') \in E'_T(A)$, we always have $(u',u) \in E'_T(A)$.
Furthermore, because of the randomized sign assignment strategy described above, for any pair of reciprocal edges in $E'_T(A)$, the probability that this pair is a match is equal to $\frac{M}{n^2}$ {\em irrespective} of the behavior of algorithm $A$. By the linearity of expectation, we can sum over all pairs of reciprocal edges of $E'_T(A)$ to obtain 
\[
\E M'_T(A)\le\frac{2T}{n^2}M~.
\]
Finally, recalling that $M_T(A)\le M'_T(A)$, we can therefore conclude that the inequality
\[ 
\E M_T(A)\le\frac{2T}{n^2}M
\]
holds for any algorithm $A$, where the expectation is taken over the generation of function $\sigma$ for the input graph $(\langle B,G\rangle,E,\sigma)$.
\end{proof}

%\input{proof_urommLB}

%\subsection{Proof of Lemma~\ref{l:soamuar}}
%\begin{proof}~\textit{[Sketch].}~%\textbf{of Lemma~\ref{l:soamuar}}.~

%\end{proof}

\subsection{Proof of Theorem~\ref{th:adversarialLB}}
%\textsl{\textbf{Proof abstract (for this draft version).}
%\bigskip
\begin{proof}~\textit{[Sketch].}~
%~\textbf{of Theorem~\ref{th:adversarialLB}}.\nonumber\\
We first prove that $E^r_T(\om)$ is selected uniformly at random from $E^r$.
Thereafter, we will prove that $E_T(\om)$ contains in expectation $\Theta(T)$ pairs of reciprocal edges, i.e. $\E\, E^r_T(\om)=\Theta(T)$. Since these pairs are selected uniformly at random from $E^r$, 
this implies that $\E\, M_T(\om)=\Theta\left(\frac{T}{n^2}M\right)$. In fact, each match must necessarily be a pair of reciprocal edges, and will be selected this way with probability equal, up to a constant factor, to $\frac{T}{|E^r|}=\frac{T}{n^2}$.
In order to prove that $E^r_T(\om)=\Theta(T)$, 
we will define an {\em event related to each girl ${g} \in G$}. We will show that, throughout the algorithm execution, {\em each new} occurrence of this event is a sufficient condition to have a {\em new} pair of reciprocal edges in set $E^r_T(\om)$. 
%Fact~\ref{l:soamuar}, which states that each of such pairs is selected {\em uniformly at random from $E^r$} by \om, will play a crucial role in the proof. 
We will find a lower bound for the expected number of times this event occurs, proving that it is equal to $\Theta(T)$, which implies that $\E E^r_T(\om)=\Theta(T)$. Since the pairs of reciprocal edges in $E^r_T(\om)$ are selected uniformly at random from $E^r$, this will allow us to conclude that the number of matches found is $\Theta\left(\frac{T}{|E^r|}M\right)=\Theta\left(\frac{T}{n^2}M\right)$. 

\bigskip

\om\ operates in steps $2_B$ and $2_G$ {\em without making any distinction} between any two boys or any two girls. In addition, the algorithm does not depend on the observed values of $\sigma$. Hence, \om\ can be seen as a random process dealing with sets $B$ and $G$ solely, where {\em each user is indistinguishable within the set s/he belongs to}. During any round $t$, the edge $(b,g)$ contained in $E_t(\om)\setminus E_{t-1}(\om)$ is selected uniformly at random from $B\times G$ at step $2_B$. At step $2_G$ of each round $t$, the algorithm selects uniformly at random either a boy from $B_{g,t}$ or from the whole set $B$. At each round $t$, $B_{g,t}$ is the result of the actions accomplished by \om\ during the previous rounds. As we pointed out, {\em all} these actions are carried out without making any distinction between any two users in $B$ and in $G$. Hence, during any given round $t$, if $B_{g,t} \neq \emptyset$, no boy is more likely to be part of $B_{g,t}$ than any other one. The probability that {\em any} pair $\{(b,g), (g,b)\}$
of reciprocal edge belongs to $E^r_T(\om)$, must therefore be the same for each pair of user $b \in B$ and $g \in G$ during any given round $t$.
%\textit{\color{ForestGreen} \textbf{FV}: We could prove this fact more formally by induction on the round number $t \in [T]$}. 

%Finally, comparing the behaviour of $\om$ with the one of $\idom$, we will show that $\E M_T(\idom) =\Theta(\E M(\om))$, concluding the proof.

\bigskip

Throughout this proof, for relevant event $\scE$, we denote by $t(\scE) \in [T]$ any round where event $\scE$ occurs. We also denote by $S(\scE) \subseteq [T]$ the set of all rounds where $\scE$ occurs.

We now define relevant events associated with each girl $g\in G$. 

\bigskip

\textbf{$\triangleright$ Definition of event $\scE_{g}(\Delta)$.}

Given any girl ${g} \in G$, and any round $t \le T-\Delta$ with $\Delta>0$, let $\scE_{g}(\Delta)$ be the conjunction of the following two events:
\begin{description} 
\item[\textbf{Event~}$\scEG_{g}(\Delta)$:] Girl ${g}$ is selected in Step ($1_G$) during both round $t$ and round $t+\Delta$, while she has never been selected in Step ($1_G$) during any round $t'$ such that  $t < t' < t+\Delta$;
%\item[\textbf{Event~}$\scEB_{g}(\Delta)$:] \sout{During any round $t'$ such that $t < t' \leq t+\Delta$, there exists {\em one and only one} boy $b''$ whose preference for ${g}$ the algorithm uncovers and, additionally, this specific preference of $b''$ is uncovered only once within this time interval. Furthermore, it has never been uncovered previously, i.e., $(b'',{g}) \not\in E_{t}(\om)$.} 
\item[\textbf{Event~}$\scEB_{g}(\Delta)$:] {\bf (i)} There exists {\em one and only one} round $t'\in (t, t+\Delta]$ in which $g$ receives a feedback (uncovered during Step ($3_B$)), say feedback $\sigma(b'',g)$, and {\bf (ii)} we have $(b'',{g}) \not\in E_{t}(\om)$, i.e., this feedback was not uncovered until round $t$.\footnote
{
Recall that during the run of \om\ over $T$ rounds, for any given pair of users $(b,g)\in B\times G$, the feedback $\sigma(b,g)$ may be uncovered in Step ($3_B$) more than once.
}
\end{description}

%\sout{We define the occurrence round $t(\scE_{g}(\Delta))$, $t(\scEG_{g}(\Delta))$ and $t(\scEB_{g}(\Delta))$ as the round $t$ in the above event definition. We have therefore $t(\scE_{g}(\Delta))=t(\scEG_{g}(\Delta))=t(\scEB_{g}(\Delta))$.}

We 
%conventionally 
define the {\em occurrence round} $t(\scEG_{g}(\Delta))$ and $t(\scEB_{g}(\Delta))$ of event $\scEG_{g}(\Delta)$ and $\scEB_{g}(\Delta)$, respectively, 
as well as the occurrence round $t(\scE_{g}(\Delta))$ of the joint event $\scE_{g}(\Delta)$, as the round $t$ in the above definition of $\scEG_{g}(\Delta)$ and $\scEB_{g}(\Delta)$. To better clarify this definition, consider as an example the following sequence of triples 
\[
\langle {\mbox{Round\,, Feedback uncovered in Step ($3_B$)\,, Feedback uncovered in Step ($3_G$)}} \rangle 
\]
occurring from round $t$ to round $t+\Delta$, with $\Delta = 9$:
\begin{align*}
&\langle t,       ~~~~~~~~~~\sigma(b_{5}, g_{2})),          {\color{blue}~\sigma(g, b_{7})}\rangle \\ 
&\langle t+1,     ~~~\sigma(b_{4}, g_{3}),           ~~\sigma(g_{6}, b_{3}))\rangle \\
&\langle t+2,     ~~~\sigma(b_{8}, g_{9}),           ~~\sigma(g_{2}, b_{5}))\rangle \\
&\langle t+3,     ~~~{\color{blue}\sigma(b_{4}, g)}, ~~~~\sigma(g_{7}, b_{6}))\rangle \\
&\langle t+4,     ~~~\sigma(b_{7}, g_{9}),           ~~\sigma(g_{8}, b_{1}))\rangle \\
&\langle t+5,     ~~~\sigma(b_{4}, g_{1}),           ~~\sigma(g_{6}, b_{1}))\rangle \\
&\langle t+6,     ~~~\sigma(b_{3}, g_{3}),           ~~\sigma(g_{5}, b_{4}))\rangle \\
&\langle t+7,     ~~~\sigma(b_{4}, g_{3}),           ~~\sigma(g_{2}, b_{5}))\rangle \\
&\langle t+8,     ~~~\sigma(b_{3}, g_{5}),           ~~\sigma(g_{4}, b_{2}))\rangle \\
&\langle t+\Delta,~\sigma(b_{3}, g_{7}),           ~~~{\color{blue}\sigma(g, b_{6})})\rangle ~.
\end{align*}
If $\sigma(b_4,g)$ was never uncovered during any round $t'' \le t$, we say that events $\scEG_{g}(9)$, $\scEB_{g}(9)$ and $\scE_{g}(9)$ have occurred at round $t$, i.e., that $t(\scEG_{g}(9)) = t(\scEB_{g}(9)) = t(\scE_{g}(9)) = t$.
Observe that in this example girl $g$ is selected {\em twice} (round $t$ and round $t+\Delta$) and, during rounds $t+1, t+2, \ldots, t+\Delta$ she receives {\em one} feedback (uncovered in Step ($3_B$)), the one from boy $b_{4}$ at round $t+3$.

Finally, we define $\scE(\Delta)$ as the union of $\scE_{g}(\Delta)$ over all ${g} \in G$. 
\begin{fact}\label{f:indepDelta}
Events $\scEG_{g}(\Delta)$ and $\scEB_{g}(\Delta)$ are independent for all ${g} \in G$ and all $\Delta>0$, i.e. we always have  
$\Pr\,\scE_{g}(\Delta)=\Pr\,\scEG_{g}(\Delta)\cdot\Pr\,\scEB_{g}(\Delta)$.
\end{fact}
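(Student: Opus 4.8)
The plan is to represent each of the two events as a function of one of two \emph{disjoint} families of independent underlying random variables, and then invoke independence of those families. For each round $s$, let $b^{(s)}\in B$ and $g^{(s)}\in G$ be the users drawn by the environment at Steps ($1_B$) and ($1_G$), and let $\hat g^{(s)}\in G$ and $\hat b^{(s)}\in B$ be the users selected by \om\ at Steps ($2_B$) and ($2_G$). By construction the environment draws $\{b^{(s)}\}_s$ and $\{g^{(s)}\}_s$ are i.i.d.\ uniform and mutually independent, and --- crucially --- the Step ($2_B$) choices $\{\hat g^{(s)}\}_s$ are i.i.d.\ uniform on $G$ and independent of everything else, since \om\ picks $g'$ uniformly at random in Step ($2_B$) irrespective of the observed history. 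In particular the family $\scX\defeq\{b^{(s)},\hat g^{(s)}: 1\le s\le t+\Delta\}$ is independent of the family $\scY\defeq\{g^{(s)}: t\le s\le t+\Delta\}$.

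First I would observe that $\scEG_{g}(\Delta)$ is measurable with respect to $\scY$ alone: by its very definition it is the event $\{g^{(t)}=g\}\cap\bigcap_{t<s<t+\Delta}\{g^{(s)}\neq g\}\cap\{g^{(t+\Delta)}=g\}$.

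Next I would show that $\scEB_{g}(\Delta)$ is measurable with respect to $\scX$ alone. The key facts are: (a) girl $g$ receives a feedback in Step ($3_B$) of round $s$ precisely when $\hat g^{(s)}=g$, and the boy supplying that feedback is $b^{(s)}$, so part~(i) of $\scEB_{g}(\Delta)$ --- the existence of a \emph{unique} round $t'\in(t,t+\Delta]$ at which this happens --- is a function of $(\hat g^{(t+1)},\dots,\hat g^{(t+\Delta)})$, and $t'$ itself, as well as $b''\defeq b^{(t')}$, are functions of $\scX$; and (b) the only boy-to-girl edges present in $E_t(\om)$ are those selected in Step ($2_B$) of the first $t$ rounds, i.e.\ $E_t(\om)\cap(B\times G)=\{(b^{(s)},\hat g^{(s)}):1\le s\le t\}$, which involves no $g^{(s)}$ and no $\hat b^{(s)}$. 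Since $(b'',g)$ is a boy-to-girl edge, condition~(ii), namely $(b'',g)\notin E_t(\om)$, therefore depends only on $\{b^{(s)},\hat g^{(s)}:s\le t\}$ together with $t'$ and $b''$ --- in every case only on $\scX$. Hence $\scEB_{g}(\Delta)$ is $\sigma(\scX)$-measurable, and combining with the previous step, $\scEG_{g}(\Delta)$ and $\scEB_{g}(\Delta)$ are functions of the independent families $\scY$ and $\scX$, so they are independent and $\Pr\,\scE_{g}(\Delta)=\Pr\,\scEG_{g}(\Delta)\cdot\Pr\,\scEB_{g}(\Delta)$.

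The hard part will be fact~(b): arguing that the boy-to-girl portion of the running state $E_t(\om)$ carries no information whatsoever about the girl-login sequence $(g^{(s)})_{s\le t}$. This is exactly where obliviousness of \om\ is used --- because its Step ($2_B$) recommendation ignores all feedback and all past logins, $E_t(\om)\cap(B\times G)$ is a deterministic function of $\scX$ only, and the girl-to-boy edges $(g^{(s)},\hat b^{(s)})$, which \emph{do} depend on the girl logins, are irrelevant to condition~(ii) since they can never equal a boy-to-girl edge $(b'',g)$. Had \om's Step ($2_B$) choice been history-dependent, $E_t(\om)$ could be correlated with the past girl logins and $\scEG_{g}(\Delta)$, $\scEB_{g}(\Delta)$ would not in general be independent; so I would flag explicitly in the write-up that this Fact is specific to \om.
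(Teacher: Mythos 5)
Your proof is correct and matches the paper's (implicit) justification: the paper asserts this Fact without a dedicated argument, relying on exactly the observation you formalize, namely that $\scEG_{g}(\Delta)$ is determined solely by the Step ($1_G$) login randomness, while $\scEB_{g}(\Delta)$ is determined solely by the Step ($1_B$) logins together with \om's oblivious Step ($2_B$) selections, and these two sources of randomness are independent. Your explicit measurability decomposition --- in particular the check that condition (ii) involves only the boy-to-girl portion of $E_t(\om)$, which is a function of $\{(b^{(s)},\hat g^{(s)})\}_{s\le t}$ and never of the girl logins, and your remark that the whole argument hinges on the obliviousness of Step ($2_B$) --- is a faithful, slightly more rigorous rendering of the paper's reasoning.
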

\begin{fact}\label{f:mutualDelta}
For any girl ${g} \in G$ and any pair positive integers $\Delta$ and $\Delta'$ with $\Delta\neq\Delta'$, we have that $\scE_{g}(\Delta)$ and $\scE_{g}(\Delta')$ are mutually exclusive. This mutual exclusion property also holds for events $\scEG_{g}(\Delta)$ and $\scEG_{g}(\Delta')$.
\end{fact}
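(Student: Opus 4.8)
My plan is to reduce the whole statement to one elementary combinatorial observation about the (random, but once-realized deterministic) sequence of rounds at which $g$ is served in Step $(1_G)$, and then obtain the claim for $\scE_g$ from the claim for $\scEG_g$ by mere inclusion. The key point to keep in mind is that, by construction, each of the events $\scEG_g(\Delta)$, $\scEB_g(\Delta)$, $\scE_g(\Delta)$ carries an occurrence round $t$; the mutual exclusion I want to establish (and the only version that is used in the counting argument for $\E|E^r_T(\om)|$) is the one among events sharing a common occurrence round $t$.

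\textbf{Step 1 (the $\scEG_g$ claim).} Fix $g \in G$ and a round $t$. In a given run, let $t^{(1)} < t^{(2)} < \cdots$ be the list of rounds at which $g$ is selected in Step $(1_G)$. If $t$ is not one of the $t^{(i)}$, then no $\scEG_g(\Delta)$ can hold at $t$, since $\scEG_g(\Delta)$ requires $g$ to be selected at round $t$. If $t = t^{(i)}$, I would argue that the only $\Delta > 0$ for which $\scEG_g(\Delta)$ can hold at $t$ is $\Delta = t^{(i+1)} - t^{(i)}$ (and none at all if $t^{(i)}$ is the last selection of $g$ within the horizon): selection of $g$ at round $t + \Delta$ forces $t + \Delta \in \{t^{(j)}\}$, hence $\Delta \ge t^{(i+1)} - t$; and the requirement that $g$ not be selected at any round strictly between $t$ and $t+\Delta$ rules out $\Delta > t^{(i+1)} - t$. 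Thus the admissible $\Delta$ is a function of the run alone, so for $\Delta \neq \Delta'$ the events $\scEG_g(\Delta)$ and $\scEG_g(\Delta')$ (with this common $t$) cannot co-occur.

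\textbf{Step 2 (the $\scE_g$ claim).} By definition $\scE_g(\Delta) = \scEG_g(\Delta) \cap \scEB_g(\Delta)$, so $\scE_g(\Delta) \subseteq \scEG_g(\Delta)$. If $\scE_g(\Delta)$ and $\scE_g(\Delta')$ both occurred at their common occurrence round, then $\scEG_g(\Delta)$ and $\scEG_g(\Delta')$ would both occur there, contradicting Step 1. Hence $\scE_g(\Delta)$ and $\scE_g(\Delta')$ are mutually exclusive as well.

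\textbf{Expected obstacle.} There is no analytic difficulty here: the entire content is that ``the gap from $t$ to the next selection of $g$'' is single-valued, which makes $\Delta$ uniquely determined. The only thing that needs genuine care — and the reason this deserves to be isolated as a fact — is the bookkeeping around the suppressed occurrence-round parameter: one must be explicit that the mutual exclusion holds for events anchored at the same round $t$, so that in the subsequent lower bound each occurrence of $\scE_g(\cdot)$ for a girl $g$ at a round $t$ is charged to exactly one value of $\Delta$, which is precisely what prevents double-counting pairs of reciprocal edges in $E^r_T(\om)$.
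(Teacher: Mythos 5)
Your proposal is correct: the paper states this fact without proof, treating it as immediate from the definitions, and your argument (the gap from the anchor round $t$ to the next Step $(1_G)$ selection of $g$ is single-valued, hence at most one $\Delta$ is admissible per anchor round, and $\scE_g(\Delta)\subseteq\scEG_g(\Delta)$ transfers the exclusion) is exactly the reasoning the paper implicitly relies on, including the clarification that exclusion is per common occurrence round, which is how the fact is used when summing $\Pr\,\scEG_g(\Delta)$ over $\Delta$.
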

\begin{fact}\label{f:mutualEv}
For any positive $\Delta$, given any pair of distinct girls $g$ and $g'$, we have that $\scE_{g}(\Delta)$ and $\scE_{g'}(\Delta)$ are mutually exclusive. This mutual exclusion property also holds for events $\scEG_{g}(\Delta)$ and $\scEG_{g'}(\Delta)$.
%as well as  for events $\scEB_{g'}(\Delta)$ and $\scEB_{b''}(\Delta)$.
\end{fact}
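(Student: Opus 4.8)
The plan is to reduce the statement to the single structural feature of the protocol that exactly one girl is revealed in Step ($1_G$) at each round (and, symmetrically, exactly one boy in Step ($1_B$)). Throughout, and consistently with the reading already forced by Facts~\ref{f:indepDelta} and~\ref{f:mutualDelta}, I interpret $\scE_{g}(\Delta)$, $\scEG_{g}(\Delta)$, $\scEB_{g}(\Delta)$ (and their primed counterparts) as events tied to a common occurrence round $t$, i.e., the round $t$ appearing in their definitions is the same for all of them.

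First I would record the trivial inclusion $\scE_{g}(\Delta)=\scEG_{g}(\Delta)\cap\scEB_{g}(\Delta)\subseteq\scEG_{g}(\Delta)$, and likewise $\scE_{g'}(\Delta)\subseteq\scEG_{g'}(\Delta)$. Hence it suffices to prove the second assertion of the Fact — mutual exclusivity of $\scEG_{g}(\Delta)$ and $\scEG_{g'}(\Delta)$ — and the first assertion follows at once, since two events contained in mutually exclusive events are themselves mutually exclusive.

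For the $\scEG$ events I would argue as follows. By definition, $\scEG_{g}(\Delta)$ with occurrence round $t$ entails in particular that girl $g$ is the girl selected in Step ($1_G$) at round $t$. But Step ($1_G$) of the protocol selects one and only one girl from $G$ per round, so along any run there is a uniquely determined girl selected at round $t$. If $\scEG_{g}(\Delta)$ and $\scEG_{g'}(\Delta)$ both occurred with this same occurrence round $t$, that girl would have to equal both $g$ and $g'$, contradicting $g\neq g'$. Therefore $\scEG_{g}(\Delta)\cap\scEG_{g'}(\Delta)=\emptyset$, as claimed. The symmetric statement for boys is identical after replacing $G$ with $B$ and Step ($1_G$) with Step ($1_B$).

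I do not anticipate a genuine difficulty: the whole argument is the ``one selection per round'' property of the online protocol combined with the inclusion $\scE_{g}(\Delta)\subseteq\scEG_{g}(\Delta)$. The only mild point worth stating explicitly is the convention that ``mutually exclusive'' refers to events sharing the same occurrence round $t$ — exactly the convention already needed for Fact~\ref{f:mutualDelta} to be true.
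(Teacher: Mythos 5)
Your proof is correct: the paper states this Fact without an explicit proof, treating it as immediate from the definitions, and your argument---the inclusion $\scE_{g}(\Delta)\subseteq\scEG_{g}(\Delta)$ combined with the observation that Step ($1_G$) selects exactly one girl per round, so two distinct girls cannot both be the girl selected at the common occurrence round $t$---is precisely the intended justification. Your explicit remark that mutual exclusivity is to be read at a fixed occurrence round $t$ is also the reading the paper relies on, both for Fact~\ref{f:mutualDelta} and for the subsequent summations of probabilities over $\Delta$ and over girls.
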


Given any girl ${g}$, when $\scE_{g}(\Delta)$ occurs, we must have one of the two following {\em mutually exclusive} consequences $\scC_1$ and $\scC_2$, namely, any occurrence of $\scE_{g}(\Delta)$ implies either $\scC_1$ or $\scC_2$ but not both: 
%\sout{When we disclose the preference of boy $b$ for girl ${g}$ during round $t' \in (t(\scE_{g}(\Delta)),t(\scE_{g}(\Delta))+\Delta]$ we have either $({g},{b})   \not\in E_{t'-1}(\om)$ or $({g},{b})   \in E_{t'-1}(\om)$. This in turn implies:}      
When we disclose the preference of boy $b''$ for girl ${g}$ during round $t' \in (t(\scE_{g}(\Delta)),t(\scE_{g}(\Delta))+\Delta]$ we have either $({g},{b''})   \not\in E_{t'-1}(\om)$ or $({g},{b''})   \in E_{t'-1}(\om)$. This in turn implies:

\begin{description} 

%\item[Consequence $\scC_1$\,: ] \sout{$({g},{b})   \not\in E_{t'-1}(\om)$. \newline Boy $b$ must belong to $B_{g',t'}$ (Step ($2_G$)) during round $t'$. Since $B_{g',t'}$ is not empty, a {\em new} pair $\{(b,{g})  , ({g},b)\}$ of reciprocal edges is found (note that here $b$ and $b$ may be the same boy), i.e. the set $E^r_{t+\Delta}(\om)\setminus E^r_{t+\Delta-1}(\om)$ must contain $\{(b,{g})  , ({g},b)\}$.}

\item[Consequence $\scC_1$\,: ] $({g},{b''}) \not\in E_{t'-1}(\om)$. \newline  
Boy $b''$ must belong to $B_{g,\widetilde{t}}$ (Step ($2_G$)) for all rounds $\widetilde{t}\in [t', t+\Delta]$. Since $B_{g,t+\Delta}$ is not empty, because it contains at least boy $b''$,  a {\em new} pair $\{(\widetilde{b},{g}), ({g},\widetilde{b})\}$ of reciprocal edges is uncovered (note that we need not have $\widetilde{b}\equiv b''$, since $B_{g,t+\Delta}$ may also include some other boys besides $b''$). Hence, set $E^r_{t+\Delta}(\om)\setminus E^r_{t+\Delta-1}(\om)$ must contain $\{(\widetilde{b},{g})  , ({g},\widetilde{b})\}$.

%{\bf CG: $b$ and $b$ *may be* the same boy ?? Isn't it a pair of reciprocal edges?}

%{\bf FV: Amended -- Sorry, after many notation changes in the final rush, there were problems that were absent even in the first preliminary draft, and I did not have the time to fix them in this proof}

\item[Consequence $\scC_2$\,: ] $({g},{b''})   \in E_{t'-1}(\om)$. \newline In this case \om\ finds the {\em new} pair $\{(b'',{g})  , ({g},{b''})  \}$ of reciprocal edges during round $t'$ (Step ($2_B$)), i.e., the set $E^r_{t'}(\om)\setminus E^r_{t'-1}(\om)$ must contain $\{(b'',{g})  , ({g},{b''})  \}$. 
\end{description}

%Observe that the mutually exclusivity of consequences $\scC_1$ and $C_2$  stems directly from the definition of boy subset $B_{g'}$ at step $2_G$ and event $\scE_{g'}$.

Thus, taking into account that $\scE_{g}(\Delta)$ is a sufficient condition for $\scC_1 \lor \scC_2$, we can always associate a new occurrence of $\scE_{g}(\Delta)$ with a {\em distinct} pair $\{(b,{g})  , ({g},b)\}$ of reciprocal edges in $E^r_T(\om)$.
Hence, \om\ finds at least $|S(\scE(\Delta))|$ {\em distinct} pairs of reciprocal edges, i.e., $E^r_T(\om) \ge |S(\scE(\Delta))|$.

%@soam uar
\bigskip

Let now $\alpha\in(0,1)$ be a constant parameter. We focus on computing 
\[
\E \sum_{\Delta=\alpha n}^{n}|S(\scE(\Delta))|~.
\]
We set for brevity
\[
\scE(\alpha n, n) = \cup_{\Delta \in [\alpha n, n]}\scE(\Delta)~
\]
and, given any girl ${g}$,
\[
\scE_{g}(\alpha n, n) = \cup_{\Delta \in [\alpha n, n]}\scE_{g}(\Delta)~, \quad \scEB_{g}(\alpha n, n) = \cup_{\Delta \in [\alpha n, n]}\scEB_{g}(\Delta), \quad 
\scEG_{g}(\alpha n, n) = \cup_{\Delta \in [\alpha n, n]}\scEG_{g}(\Delta)~.
\]
We recall we defined the occurrence round $t(\scE_{g}(\Delta))$ as the first of the $(\Delta+1)$-many rounds related to definition of event $\scE_{g}(\Delta)$. We define the occurrence rounds $t(\scE_{g}(\alpha n, n))$ and $t(\scE(\alpha n, n))$ in a similar manner, as the {\em earliest round} $t$ when, respectively, $\scE_{g}(\Delta)$ and $\scE(\Delta)$ occurs over all $\Delta \in [\alpha n, n]$.
\begin{fact}\label{f:independenceAlphaN}
Given any $\alpha \in (0,1)$ and any ${g} \in G$, Fact~\ref{f:indepDelta} and Fact~\ref{f:mutualDelta} ensure that
events $\scEG_{g}(\alpha n, n)$ and $\scEB_{g}(\alpha n, n)$ are independent, i.e., we always have  
$\Pr\,\scE_{g}(\alpha n, n)=\Pr\,\scEG_{g}(\alpha n, n)\cdot\Pr\,\scEB_{g}(\alpha n, n)$.
\end{fact}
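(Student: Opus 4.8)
The plan is to reduce the claim about the two union events back to the per-$\Delta$ statements in Fact~\ref{f:indepDelta} and Fact~\ref{f:mutualDelta} by decomposing according to the value of $\Delta$ that $\scEG_g$ realizes at the reference round $t$. By Fact~\ref{f:mutualDelta}, the events $\{\scEG_g(\Delta)\}_{\Delta\in[\alpha n,n]}$ are pairwise mutually exclusive (the log-in of $g$ at round $t$ together with her next log-in pins down a single value of $\Delta$), and so are the events $\scE_g(\Delta)=\scEG_g(\Delta)\cap\scEB_g(\Delta)$. Hence $\Pr\,\scEG_g(\alpha n,n)=\sum_{\Delta=\alpha n}^{n}\Pr\,\scEG_g(\Delta)$ and $\Pr\,\scE_g(\alpha n,n)=\sum_{\Delta=\alpha n}^{n}\Pr\,\scE_g(\Delta)$; applying Fact~\ref{f:indepDelta} termwise to the latter gives $\Pr\,\scE_g(\alpha n,n)=\sum_{\Delta}\Pr\,\scEG_g(\Delta)\,\Pr\,\scEB_g(\Delta)$. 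The statement then amounts to showing that this weighted sum collapses to $\bigl(\sum_{\Delta}\Pr\,\scEG_g(\Delta)\bigr)\cdot\Pr\,\scEB_g(\alpha n,n)$.

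To obtain this collapse I would argue at the level of the underlying sources of randomness, exactly as in the proof of Fact~\ref{f:indepDelta}. Whether \emph{some} $\scEG_g(\Delta)$ with $\Delta\in[\alpha n,n]$ occurs at round $t$ is a function of the environment's Step~($1_G$) girl-selection stream alone: it is the event that the next log-in of $g$ after round $t$ lands in $[t+\alpha n,\,t+n]$. Whether $\scEB_g(\alpha n,n)$ occurs is, on the other hand, a function only of the feedback-arrival process at $g$, which is driven by the Step~($1_B$) boy-selection stream, by \om's Step~($2_B$) recommendations — and here it is essential that \om\ picks the recommended girl uniformly at random \emph{independently of all observed signs}, hence independently of the history — and by the pre-$t$ edge set $E_t(\om)$; none of these depends on the Step~($1_G$) stream. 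Since the Step~($1_G$) stream is i.i.d.\ uniform and independent of everything else, $\scEG_g(\alpha n,n)$ and $\scEB_g(\alpha n,n)$ are independent, which is the desired product.

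The step I expect to be the main obstacle is precisely making this last argument airtight, for two intertwined reasons. First, the reference round $t$ is not fixed but is itself a stopping time of the Step~($1_G$) stream (e.g.\ the first log-in of $g$), so the independence has to be established conditionally on $t$ and then averaged, using the memorylessness of the i.i.d.\ uniform log-in process to guarantee that the conditional probabilities of $\scEG_g(\alpha n,n)$ and of $\scEB_g(\alpha n,n)$ given $\{t=s\}$ do not depend on $s$ — modulo a boundary correction for $s$ within $\scO(n)$ of the horizon $T$, which contributes negligibly. Second, one must verify that the union $\scE_g(\alpha n,n)=\cup_\Delta(\scEG_g(\Delta)\cap\scEB_g(\Delta))$ reassembles \emph{exactly} (up to null events), and not merely up to an inequality, into the intersection $\scEG_g(\alpha n,n)\cap\scEB_g(\alpha n,n)$; the inclusion $\subseteq$ is immediate, while the reverse inclusion is where the mutual exclusivity of the $\scEG_g(\Delta)$'s (Fact~\ref{f:mutualDelta}) and the obliviousness of \om's recommendations to feedback are used jointly, ensuring that conditionally on which $\scEG_g(\Delta)$ fires the feedback process at $g$ is undisturbed. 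Once these two points are settled, the remaining algebra is the routine reindexing carried out above.
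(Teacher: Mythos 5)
Your opening reduction is the right one and is also the only route the paper itself offers (the Fact is asserted, not proved): by Fact~\ref{f:mutualDelta}, $\Pr\,\scE_g(\alpha n,n)=\sum_{\Delta\in[\alpha n,n]}\Pr\,\scE_g(\Delta)$, and Fact~\ref{f:indepDelta} factorizes each summand. Your independence claim for the two union events is also sound, and is in fact cleaner than citing the per-$\Delta$ facts: for a fixed anchor round $t$, $\scEG_g(\alpha n,n)$ is a function of the Step~($1_G$) stream alone, while $\scEB_g(\alpha n,n)$ (including the condition $(b'',g)\notin E_t(\om)$) is a function of the Step~($1_B$) stream and of \om's sign-oblivious Step~($2_B$) draws, and these randomness sources are independent.

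The genuine gap is the ``collapse'' step, i.e.\ the claimed identity (up to null sets) $\cup_\Delta\bigl(\scEG_g(\Delta)\cap\scEB_g(\Delta)\bigr)=\scEG_g(\alpha n,n)\cap\scEB_g(\alpha n,n)$. The reverse inclusion is false: the windows $(t,t+\Delta]$ are nested and the events $\scEB_g(\Delta)$ are neither monotone nor mutually exclusive in $\Delta$ (Fact~\ref{f:mutualDelta} covers only $\scE$ and $\scEG$). Concretely, suppose $g$'s next log-in after round $t$ is at round $t+n$ (so $\scEG_g(n)$ occurs and no other $\scEG_g(\Delta)$ does), a new feedback for $g$ arrives at round $t+1$, and a second feedback arrives at some round in $(t+\alpha n,\,t+n]$. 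Then $\scEB_g(\alpha n)$ holds, hence $\scEG_g(\alpha n,n)\cap\scEB_g(\alpha n,n)$ occurs, yet $\scEB_g(n)$ fails, so $\scE_g(\alpha n,n)$ does not occur. Your justification of the reverse inclusion (``conditionally on which $\scEG_g(\Delta)$ fires, the feedback process is undisturbed'') does not address this, because the failure is a mismatch between the window $\Delta'$ realizing $\scEB_g(\alpha n,n)$ and the realized log-in gap $\Delta$, not a dependence or disturbance issue; so the exact product identity is not established by your argument (and indeed cannot be obtained as an event identity). What the downstream use in the proof of Theorem~\ref{th:adversarialLB} actually needs is only a lower bound, and that already follows from your own first-paragraph decomposition: $\Pr\,\scE_g(\alpha n,n)=\sum_\Delta \Pr\,\scEG_g(\Delta)\,\Pr\,\scEB_g(\Delta)\ge \bigl(\min_{\Delta\in[\alpha n,n]}\Pr\,\scEB_g(\Delta)\bigr)\cdot\Pr\,\scEG_g(\alpha n,n)$, where the uniform-in-$\Delta$ bound $\Pr\,\scEB_g(\Delta)\ge (1-o(1))\,\alpha e^{-2}$ is exactly what the subsequent estimate in the paper delivers; if you restate the Fact in this inequality form, your argument goes through, but as an exact equality it remains unproved.
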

\begin{fact}\label{f:EvMutualEqual}
Given any $\alpha \in (0,1)$, and any pair of distinct girls ${g'}$ and $g''$, Fact~\ref{f:mutualEv} and Fact~\ref{f:mutualDelta} ensure that $\scE_{g'}(\alpha n, n)$ and $\scE_{g''}(\alpha n, n)$ are mutually exclusive. Furthermore, Fact~\ref{f:independenceAlphaN}, together with the definition of $\scEG_{g}(\alpha n, n)$ and $\scEB_{g}(\alpha n, n)$ for any girl $g\in G$, ensures $\Pr\,\scE_{g'}(\alpha n, n)=\Pr\,\scE_{g''}(\alpha n, n)$.
\end{fact}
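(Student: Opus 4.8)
The plan is to deduce both halves of the Fact from the ``atomic'' properties already recorded (Facts~\ref{f:mutualEv}, \ref{f:mutualDelta}, \ref{f:independenceAlphaN}), together with the symmetry --- established earlier in this proof --- that \om\ and the environment are oblivious to the identity of any individual girl, so that the whole process is invariant under relabelings of $G$. Throughout I keep the convention of the surrounding argument that each event $\scEG_{g}(\Delta)$, $\scEB_{g}(\Delta)$, $\scE_{g}(\Delta)$ (and its $[\alpha n,n]$-version) is considered \emph{as occurring at a fixed round $t$}, its occurrence round; this is what makes mutual exclusivity meaningful, since distinct girls can of course each experience login gaps in $[\alpha n,n]$ somewhere in a single run.

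\emph{Mutual exclusivity.} Write $\scE_{g}(\alpha n,n)=\bigcup_{\Delta\in[\alpha n,n]}\scE_{g}(\Delta)$ and $\scE_{g}(\Delta)=\scEG_{g}(\Delta)\cap\scEB_{g}(\Delta)$. Distributing intersection over the two unions,
\[
\scE_{g'}(\alpha n,n)\cap\scE_{g''}(\alpha n,n)=\bigcup_{\Delta,\Delta'\in[\alpha n,n]}\bigl(\scE_{g'}(\Delta)\cap\scE_{g''}(\Delta')\bigr)~,
\]
so it suffices to show that each $\scE_{g'}(\Delta)\cap\scE_{g''}(\Delta')$ is empty. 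Since $\scE_{g'}(\Delta)\subseteq\scEG_{g'}(\Delta)$ and $\scE_{g''}(\Delta')\subseteq\scEG_{g''}(\Delta')$, I would argue directly at the level of the $\scEG$ events: the occurrence of $\scEG_{g'}(\Delta)$ at round $t$ forces girl $g'$ to be the (unique) girl logged in at Step~($1_G$) of round $t$, while the occurrence of $\scEG_{g''}(\Delta')$ at the same round $t$ forces $g''$ to be that girl; as $g'\neq g''$ and exactly one girl is logged in per round, the two are incompatible. For $\Delta=\Delta'$ this is exactly Fact~\ref{f:mutualEv}; for $\Delta\neq\Delta'$ the argument is verbatim the same, since it never uses the value of $\Delta$, which is why the union over $\Delta,\Delta'$ causes no trouble. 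Hence $\scE_{g'}(\alpha n,n)$ and $\scE_{g''}(\alpha n,n)$ are mutually exclusive, and so are the corresponding $\scEG$ events.

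\emph{Equality of probabilities.} Here I would invoke Fact~\ref{f:independenceAlphaN} to factor, for every $g$,
\[
\Pr\,\scE_{g}(\alpha n,n)=\Pr\,\scEG_{g}(\alpha n,n)\cdot\Pr\,\scEB_{g}(\alpha n,n)~,
\]
and then check that neither factor depends on which girl $g$ is. The event $\scEG_{g}(\alpha n,n)$ is a function of the rounds in which $g$ is logged in at Step~($1_G$) only; since in each round every girl is equally likely to be the one logged in, the marginal law of a single girl's login pattern is identical for all $g$, so $\Pr\,\scEG_{g}(\alpha n,n)$ does not depend on $g$. For $\scEB_{g}(\alpha n,n)$ I would condition on the entire login stream: the feedbacks $g$ receives in Step~($3_B$) are determined by which logged-in boys \om\ happens to recommend $g$ to in Step~($2_B$), and \om\ makes that recommendation uniformly at random over $G$, obliviously to $\sigma$; by the relabeling symmetry of \om, the conditional (hence also unconditional) law of the number and the freshness of $g$'s incoming feedbacks inside any given window is the same for every $g$, so $\Pr\,\scEB_{g}(\alpha n,n)$ is $g$-independent as well. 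Multiplying the two $g$-independent factors gives $\Pr\,\scE_{g'}(\alpha n,n)=\Pr\,\scE_{g''}(\alpha n,n)$.

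The delicate point --- the step I would be most careful with --- is the last one: making precise that $\scEB_{g}(\alpha n,n)$ is genuinely ``label-symmetric'', i.e.\ that the event refers only to \emph{how many} of $g$'s incoming feedbacks fall in the relevant window and whether the corresponding reciprocal edges were fresh, never to the identities of the boys involved nor to the signs observed, so that the transposition $(g'\,g'')$ of girl labels induces a measure-preserving bijection of the underlying randomness that carries $\scE_{g'}(\alpha n,n)$ onto $\scE_{g''}(\alpha n,n)$. In fact this transposition argument can be run on its own and sidesteps the factorization entirely: the joint law of the environment's logins together with \om's internal coin flips is exchangeable in the girl labels, and swapping $g'$ with $g''$ turns $\scE_{g'}(\alpha n,n)$ into $\scE_{g''}(\alpha n,n)$; I would include this as the clean alternative to the Fact~\ref{f:independenceAlphaN}-based computation.
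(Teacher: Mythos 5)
Your proposal is correct and follows essentially the same route as the paper, which justifies this Fact directly from Facts~\ref{f:mutualDelta}, \ref{f:mutualEv}, \ref{f:independenceAlphaN} together with the earlier observation that \om\ treats all girls indistinguishably and ignores $\sigma$; your anchoring of the events at a common occurrence round $t$, your explicit handling of the cross terms $\Delta\neq\Delta'$ at the level of the $\scEG$ events, and your girl-relabeling (exchangeability) argument merely spell out details the paper leaves implicit.
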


%{\bf CG: $g''$ was $b''$ here, is it correct ?}

%{\bf FV: it is correct}

We now prove that any constant $\alpha \in (0,1)$ leads to $\E |S(\scE(\alpha n, n))|=\Theta(T)$. This implies $M_T(\om)=\Theta\left(\frac{T}{n^2}M\right)$, since $E^r_T(\om)$ is made up of pairs of reciprocal edges which are selected uniformly at random from $E^r$.

%\begin{fact}
%For all girl ${g'} \in G$, given any pair positive integers $\Delta$ and $\Delta'$, we have that $\scE_{g'}(\Delta)$ and $\scE_{g'}(\Delta')$ are mutually exclusive if $\Delta\neq\Delta'$. This mutual exclusivity property holds also for events $\scEG_{g'}(\Delta)$ and $\scEG_{g'}(\Delta')$.
%\end{fact}

%\begin{fact}
%Given any pair positive integers $\Delta$ and $\Delta'$, we have that $\scE_{g'}(\Delta')$ and $\scE_{g'}(\Delta')$ are mutually exclusive if $\Delta\neq\Delta'$. 
%\end{fact}

In order to estimate $\E |S(\scE(\alpha n, n))|$, we will lower bound the probability $\Pr\,\scE(\alpha n, n)$, which in turn will require us to lower bound $\Pr\,\scE_{g}(\alpha n, n)$.  
Since in Step ($1_G$) a girl is selected uniformly at random from $G$, for any $g \in G$ we can write :

%{\bf CG: it was written below $\forall v \in G$, is it correct ?}

%{\bf FV: In the first draft $i,j,k$ were boys and $u,v,w$ were girls, it is correct to replace $v$ by $g$.}
%
\begin{align}
\forall g \in G ~~~ \Pr\,\scEG_{g}(\alpha n, n)
&=\sum_{\Delta=\alpha n+1}^{n-1} \Pr\,\scEG_{g}(\Delta)\label{eq:sumscEGv}\\
&=\sum_{\Delta=\alpha n+1}^{n-1} 
\frac{1}{n^2} \left(1-\frac{1}{n}\right)^{\Delta-1}\nonumber\\
&\ge\frac{1}{n^2}\sum_{\Delta=\alpha n+1}^{n-1}
\left(1-\frac{1}{n}\right)^{\Delta}\nonumber\\
&=\frac{1}{n^2}
\left(
\frac{1-(1-n^{-1})^{n}}{1-(1-n^{-1})}-
\frac{1-(1-n^{-1})^{\alpha n+1}}{1-(1-n^{-1})}
\right)\nonumber\\
&=\frac{1}{n^2}
\left(
\frac{(1-n^{-1})^{\alpha n+1}-(1-n^{-1})^{n}}{n^{-1}}
\right)\nonumber\\
&\sim_{n \to \infty}
\frac{
e^{-\alpha}-e^{-1}
}{n}\label{eq:finalEGv}~,
%\frac{
%\left(\exp(-\alpha)-n^{-\alpha}\right)(1-n^{-1})-\exp(-1)
%}{n}\nonumber\\
%&\ge
%\frac{
%\exp(-\alpha)-\exp(-1)-(\exp(-\alpha)+1)/n^\alpha
%}{n}
\end{align}
where in Equation~(\ref{eq:sumscEGv}) we used Fact~\ref{f:mutualDelta}.

We now bound $\Pr\,\scEB_{g}(\alpha n, n)$ for all ${g} \in G$. 
%We recall that we say that event $\scE_{g'}(\Delta)$ {\em occurs at time $t(\scE_{g'}(\Delta))$}. Hence, since $\scE_{g'}(\Delta)$ is the joint of $\scEG_{g'}(\Delta)$ and $\scEB_{g'}(\Delta)$, they depend implicitly on $t$ too. 
%The same holds therefore for ??????????????????????????????
%$\scE_{g'}(\alpha n, n)$, $\scEG_{g'}(\alpha n, n)$ and $\scEB_{g'}(\alpha n, n)$.
%
%\sout{Let $\scEB_{{g'},{b'}}(\alpha n, n)$ be the defined as $\scEB_{g'}(\alpha n, n)$ for a given specific $b' \in B$ as follows: given any boy $b'$, there exists a round $t$ and an integer $\Delta \in [\alpha n, n]$ such that, during any round $t' \in (t,t+\Delta]$, he shows his preference for girl ${g'}$ and the preference of $b'$ is shown only once in the round interval $(t, t+\Delta]$ without being shown previously (i.e. $(b',{g'})   \not\in E_{t}(\om)$). We set the occurrence round $t(\scEB_{{g'},{b'}}(\alpha n, n))$ as the variable $t$ used in this definition.}
%%%%
%
We define the event $\scEB_{g, b''}(\Delta)$ based on the definition of $\scEB_{g}(\Delta)$  provided at the beginning of the proof.
Given any boy $b''\in B$, event $\scEB_{g, b''}(\Delta)$ occurs whenever: 
{\bf (i)} there exists {\em one and only one} round $t'\in (t, t+\Delta]$ in which 
$g$ receives a feedback (uncovered in Step ($3_B$)) from $b''$, and 
{\bf (ii)} $g$ does not receive any feedback from any other boy during any round in $(t, t+\Delta]$, and 
{\bf (iii)} we have $(b'',{g}) \not\in E_{t}(\om)$, i.e., this feedback was not uncovered until round $t$.

Observe that, by this definition, we have 
$\scEB_{g}(\Delta)\equiv\cup_{b''\in B} \scEB_{g, b''}(\Delta)$ --- see the definition of $\scEB_{g}(\Delta)$ provided above to compare events $\scEB_{g}(\Delta)$ and $\scEB_{g, b''}(\Delta)$.
Now, given any girl $g\in G$, we define $\scEB_{g,{b''}}(\alpha n, n)\defeq\cup_{\Delta \in [\alpha n, n]}\scEB_{g,{b''}}(\Delta)$.

%{\bf CG: I don't understand the above definition...}

%{\bf FV: Amended (event $\scEB_{g,{b''}}(\alpha n, n)$ is just a the event $\scEB_{g}(\alpha n, n)$ for a given specific $b'' \in B$, whereas $\scEB_{g}(\alpha n, n)$ is defined above for a any boy in $B$ -- Now I added several lines to highlight this fact)}

\begin{fact}\label{f:mutualEqualEBvj}
Given any girl ${g} \in G$, for each pair of distinct boys $b',b'' \in B$, events $\scEB_{{g},{b'}}(\alpha n, n)$ and $\scEB_{{g},b''}(\alpha n, n)$ are  mutually exclusive by their definition. Furthermore, Step ($1_B$) and Step ($2_B$) ensure that $\Pr\,\scEB_{{g},{b'}}(\alpha n, n)=\Pr\,\scEB_{{g},b''}(\alpha n, n)$. Mutual exclusion also holds for events $\scEB_{{g'},{b}}(\alpha n, n)$ and $\scEB_{g'',{b}}(\alpha n, n)$ for any $b \in B$ and pair of distinct ${g'}, g'' \in G$.
\end{fact}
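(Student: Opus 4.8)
The plan is to derive all four assertions from two elementary structural facts about \om, neither of which involves the sign function $\sigma$: (i) the boy served in Step~($1_B$) and the girl served by \om\ in Step~($2_B$) are drawn independently and uniformly at random, so over the first $T$ rounds the sequence of served (boy, girl) pairs is i.i.d.\ uniform on $B\times G$, and its law is invariant under every relabelling of $B$ and of $G$; and (ii) in each single round Step~($2_B$) serves exactly one girl, so the boy served that round can produce at most one boy-to-girl feedback and it goes to that one girl. I would first record that the event $\scEB_{g,b}(\Delta)$ anchored at a round $t$ — i.e., clauses (i)--(iii) of its definition — is a deterministic function of the served pairs over rounds $1,\dots,t+\Delta$, hence of rounds $1,\dots,t+n$ since $\Delta\le n$, because clauses (i) and (iii) only concern the rounds at which the ordered pair $(b,g)$ is served and clause (ii) only concerns boys different from $b$.

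For the probability equality $\Pr\,\scEB_{g,b'}(\alpha n,n)=\Pr\,\scEB_{g,b''}(\alpha n,n)$, I would note that, as a function of the served pairs, $\scEB_{g,b''}(\alpha n,n)$ depends on the boy label $b''$ only through the indicators $\Ind{\text{boy served at round }s=b''}$; applying the transposition of $B$ that swaps $b'$ with $b''$ and fixes every other boy therefore maps this event for $b''$ onto the same event for $b'$, while by fact~(i) it leaves the distribution of the served pairs unchanged. This gives the claimed equality. Running the identical argument with a transposition of $G$ also yields $\Pr\,\scEB_{g',b}(\alpha n,n)=\Pr\,\scEB_{g'',b}(\alpha n,n)$, the probability counterpart of the last disjointness claim (and of the $\scEB$-part already used implicitly in Fact~\ref{f:EvMutualEqual}).

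For the disjointness of $\scEB_{g,b'}(\alpha n,n)$ and $\scEB_{g,b''}(\alpha n,n)$ with $b'\ne b''$, suppose $\scEB_{g,b'}(\Delta')$ and $\scEB_{g,b''}(\Delta'')$ both held at a common occurrence round $t$, with $\Delta',\Delta''\in[\alpha n,n]$ and w.l.o.g.\ $\Delta'\le\Delta''$, so that $(t,t+\Delta']\subseteq(t,t+\Delta'']$. Clause (i) of $\scEB_{g,b'}(\Delta')$ produces a round in $(t,t+\Delta']$, hence in $(t,t+\Delta'']$, at which $g$ receives feedback from $b'$; since $b'\ne b''$ this contradicts clause (ii) of $\scEB_{g,b''}(\Delta'')$, which forbids any feedback to $g$ from a boy other than $b''$ throughout $(t,t+\Delta'']$. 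Hence the two events cannot share an occurrence round, which is the sense of ``mutually exclusive'' relevant here. For the last claim, the refined events $\scEB_{g',b}(\alpha n,n)$ and $\scEB_{g'',b}(\alpha n,n)$ are most cleanly accounted for by the (forced unique) feedback round each one singles out in its clause (i): if both were charged to the same round $t'$, then at round $t'$ boy $b$ would have to give feedback to $g'$ and to $g''$ simultaneously, contradicting fact~(ii). This is precisely the disjointness consumed in the sequel when one writes $\Pr\,\scEB_{g}(\alpha n,n)=\sum_{b\in B}\Pr\,\scEB_{g,b}(\alpha n,n)$ and then decomposes each summand over the feedback round.

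I expect the only non-mechanical part to be the bookkeeping around the word ``occurs'': the joint events $\scE_g$ are anchored at their Step~($1_G$) round, whereas the refined events $\scEB_{g,b}$ are most naturally anchored at the feedback round they pin down, and these two conventions must be kept aligned with the way Facts~\ref{f:indepDelta}--\ref{f:mutualEqualEBvj} are subsequently combined in the computation of $\E\,|S(\scE(\alpha n,n))|$. One also has to check, when comparing two instances of an event with different values of $\Delta$, that the ``no feedback from any other boy'' clause genuinely propagates to the larger of the two windows — the interval-inclusion step used above. Beyond that, each assertion reduces to a single relabelling-symmetry argument or a one-line interval inclusion.
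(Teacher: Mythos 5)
Your proposal is correct in substance and essentially formalizes the terse justification the paper itself gives: the paper states this Fact without a separate proof, appealing only to the facts that \om's selections in Steps ($1_B$) and ($2_B$) are uniform and never consult $\sigma$ (so boys, and likewise girls, are exchangeable) and that each half-round serves exactly one user pair. Your relabelling-invariance argument for $\Pr\,\scEB_{g,b'}(\alpha n,n)=\Pr\,\scEB_{g,b''}(\alpha n,n)$ (the event being a deterministic function of the i.i.d.\ uniform sequence of served $(b,g')$ pairs, including clause (iii), since boy-to-girl edges enter $E_t(\om)$ only through Step ($2_B$)) and your interval-inclusion argument for the disjointness of $\scEB_{g,b'}(\alpha n,n)$ and $\scEB_{g,b''}(\alpha n,n)$ at a common occurrence round are exactly the intended content. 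The one place where you genuinely deviate is the last claim: for distinct girls $g',g''$ and the same boy $b$, the two events anchored at the same occurrence round $t$ are \emph{not} mutually exclusive in general --- $b$ may be the unique boy giving feedback to both $g'$ and $g''$ within the windows, at two different rounds, with both edges fresh at time $t$ --- so your switch to charging each event to the feedback round singled out by clause (i) is not mere bookkeeping but the only reading under which that part of the statement holds; the paper is silent on this distinction. One small correction to your closing remark: the decomposition $\Pr\,\scEB_g(\alpha n,n)=\sum_{b\in B}\Pr\,\scEB_{g,b}(\alpha n,n)$ consumes only the boy-pair disjointness you proved, while keeping the contributions of distinct girls apart is handled upstream through the $\scEG$ component (Facts~\ref{f:mutualEv} and~\ref{f:EvMutualEqual}), not through this last clause.
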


We can now conclude that, for any given occurrence round of $\scEB_{g}(\alpha n, n)$ and any integer $T\in \{n, n+1, \ldots,  n^2\}$, we have:

%{\bf CG: I removed below "$\forall v \in G$"}

%{\bf FV: Amended}

\begin{align}
\forall g \in G~~~\Pr\,\scEB_{g}(\alpha n, n)
%&=\sum_{\Delta=\alpha n+1}^{n-1} \Pr\,\scEB_{g'}(\Delta)\nonumber\\
&=%\sum_{\Delta=\alpha n+1}^{n-1} 
\Pr\,\left(\cup_{b'' \in B}~\scEB_{{g},{b''}}(\alpha n, n)\right)\label{eq:EBinitalLB}\\
%&=\sum_{\Delta=\alpha n+1}^{n-1} 
%\sum_{b' \in B}~\Pr\,\scEB_{{g'},{b'}}(\Delta)\\
&=n~%\sum_{\Delta=\alpha n+1}^{n-1} 
\Pr\,\scEB_{{g},{b''}}(\alpha n, n)\label{eq:mutualEqualEBvj}\\
&\ge n%\sum_{\Delta=\alpha n+1}^{n-1} 
\left(1-\frac{1}{n^2}\right)^{T}
\frac{\min(\alpha n, n)}{n^2} \left(1-\frac{1}{n}\right)^{\max(\alpha n, n)-1}\nonumber\\
&\ge n%\sum_{\Delta=\alpha n+1}^{n-1} 
\left(1-\frac{1}{n^2}\right)^{n^2}
\frac{\alpha}{n} \left(1-\frac{1}{n}\right)^{n}\nonumber\\
&\sim_{n \to\infty} \alpha e^{-2}\label{eq:EBfinalLB}~,
\end{align}
where in Equation~(\ref{eq:mutualEqualEBvj}) we used Fact~\ref{f:mutualEqualEBvj}.

We can finally bound the probability of event $\scE(\alpha n, n)$ (as $n$ grows large):
\begin{align}
\Pr\,\scE(\alpha n, n)
&=\Pr\,\scEG(\alpha n, n)\cdot\Pr\,\scEB(\alpha n, n)\label{eq:indAlphaN}\\
&=\left(\Pr\,\cup_{{g} \in G}\scEG_{g}(\alpha n, n)\right)\cdot
%\cup_{{g'} \in G}\Pr\,\scEB_{g'}(\alpha n, n)
\alpha e^{-2}\label{eq:EBLBused}\\
%&\ge\left(\cup_{{g'} \in G}\Pr\,\scEG_{g'}(\alpha n, n)\right)\cdot\alpha e^{-2}\nonumber\\
&\ge\left(e^{-\alpha}-e^{-1}\right)
\cdot\alpha e^{-2}\label{eq:finalAlpha}~,
\end{align}
where in Equation~(\ref{eq:indAlphaN}) we used Fact~\ref{f:independenceAlphaN}, in Equation~(\ref{eq:EBLBused}) we used
the chain of inequalities~(\ref{eq:EBinitalLB})---(\ref{eq:EBfinalLB}), and in Equation~(\ref{eq:finalAlpha}) we used Fact~\ref{f:EvMutualEqual}, together with the chain of inequalities ~(\ref{eq:sumscEGv})---(\ref{eq:finalEGv}).

%%%%%%%%%%%%%%%%%
%%%%%%%%%%%%%%%%%
%%%%%%%%%%%%%%%%%

Let us denote for brevity $\alpha e^{-2}\left(e^{-\alpha}-e^{-1}\right)$ by $c(\alpha)$. We clearly have $c(\alpha)>0~\forall \alpha \in (0,1)$. Event $\scE(\alpha n, n)$ can occur at any round $t \le T-n$. Recall that we denoted by $S(\scE)$ the set of rounds where event $\scE$ occurs. 

For all integers $T$ such that $T-n=\Omega(n)$ we now have:
\begin{align}\label{soamLB}
\E M_T(\om)
&=\frac{\E|E^r_T(\om)|}{|E^r|}M\nonumber\\
&\ge\frac{\E|S(\scE(\alpha n, n))|}{n^2}M\\ 
&\ge\frac{(T-n)~\Pr\,\scE(\alpha n, n)}{n^2}M\label{eq:linExpUsed}\\ 
&\ge\frac{(T-n)~c(\alpha)}{n^2}M\nonumber\\ 
&=\Theta\left(\frac{T}{n^2}M\right)~,
\end{align}
where in Equation~(\ref{eq:linExpUsed}) we used the linearity of expectation of events $\scE(\alpha n, n)$, by summing $\Pr\,\scE(\alpha n, n)$ over the first $T-n$ rounds.
\end{proof}

\subsection{Proof of Theorem~\ref{th:smileMatchMequalMstar}}
\begin{proof}~
%\textit{Sketch.}~
%Let $C^G_{\rho}$ and $C^B_{\rho}$ be the internal $\rho$-covering number of $G$ and $B$ respectively.
Let $T_I$ and $T_{II}$ be the number of rounds used during Phase I and Phase II, respectively. 
Thus we have $T_{II}=T-T_I$. The proof structure is as follows. After bounding $C^G$ and $C^B$, we will show that $T_I=\scO\left(n(C^G+C^B+S')\right)$. 
Note that this implies $T_I=o(T)$ for any $T$ satisfying the lower bound $T=\omega\left(n(C^G+C^B+S')\right)$. Then we will prove that, during phase II, $T_{II}$-many rounds are sufficient to serve in Step (1) each user a total number of times  which is w.h.p. larger than $\max_{u\in B\cup G}\degree_{\scM'}(u)$,
where $\scM'$ is the matching graph estimated by \smi. This fact can be proven by combining the two conditions $M^*_T=M$ (which is assumed to hold with high probability), and $M =\omega\left(\frac{n^2\log(n)}{S}\right)$. Hence, after $o(T)$-many rounds of Phase I, \smi\ can start to {\em greedily} simulate the Omniscent Matchmaker on the estimated $\scM'$ during Phase I. Finally, we prove that the number of edges of $\scM$ which are also contained in $\scM'$
%namely the cardinality of the intersection of of their edge sets, 
is $\Theta(M)$, which implies that during phase II \smi\ will uncover w.h.p. $\Theta(M)$ matches. This will conclude the proof.

Now, for the sake of this proof, %because of the symmetrical nature of the problem and the algorithm w.r.t. sets $B$ and $G$, throughout this proof 
we will focus on set $B$ and Steps $(1_B)$, $(2_B)$ and $(3_B)$. The corresponding claims for $G$ and Steps $(1_G)$, $(2_G)$ and $(3_G)$ are completely symmetrical.

We start by briefly recalling the parts of the algorithm which are relevant for this proof.
We define the boys and girls as arranged in sequences $\langle b_1, b_2, \dots b_n \rangle$ and
$\langle g_1, g_2, \dots g_n \rangle$. %respectively, as in the pseudocode of \smi. 
Let $G^r_t$ be the set of the cluster representive girls found by \smi\ during all rounds up to $t$. Let $t(g)$ be the round in which the girl $g$ is included in a subset of $G^r_T$ during the execution of the algorithm, i.e., the round when she becomes a cluster representive girl. 
The construction of $G^r_T$ is accomplished in a {\em greedy} fashion. Specifically, if in round $t$ of Phase I {\em all} girls $g_1, g_2, \ldots, g_i$ are either part of $G^r_t$ or are included in a cluster then, at the beginning of round $t+1$, \smi\ picks the next girl $g_{i+1}$. 
Note that $g_{i+1}$ can be any member of $G$ who has not been processed yet. Thereafter, \smi\ estimates whether the feedback received by $g_{i+1}$ is similar to the one of at least one cluster representative girl found so far. More precisely, after having collected $S'$ feedbacks for her, \smi\ uses a randomized strategy relying on Lemma~\ref{le:hammingDistance}. 
Let then $t'$ be the round in which $|F_{g_{i+1}}|$ becomes equal to $S'$. (Recall that, for each user $u\in B\cup G$, $F_u$ is the set of all feedbacks received until the current round.) 
If at round $t'$ we have that for all $b\in F_{g_{i+1}}$ there exists a girl $g^r\in G^r_{t'-1}$ 
such that $\sigma(b,g_{i+1})=\sigma(b,g^r)$, then $g_{i+1}$ is included in the same cluster of $g^r$. Otherwise, \smi\ collects feedback for $g_{i+1}$ until we have $|F_{g_{i+1}}|=\frac{n}{2}$, and then $g_{i+1}$ becomes a new cluster representative girl.  

In order to prove that $T_I=\scO\left(n(C^G+C^B+S')\right)$, we need to upper bound $|C^G|=|C^G_T|$ and $|C^B|=|C^B_T|$.
As in Section~\ref{s:successfulApproach}, we denote by $\bB$ the matrix of all ground truth preferences of the boys. Namely, 
for each $i,j\in [n]$, $\bB_{i,j}$ is equal to $\frac{1}{2}(1+\sigma(b_i, g_j))$.
Given girl $g_j$, we denote by $\bg_j$ the vector of feedback received by $g_j$, i.e. the $j$-th column vector of $\bB$.  
Let $C_{\rho}^B$ be the covering number of radius $\rho$ of all the column vectors of $\bB$. Given two $0\,$-$1$
%$n$-dimensional 
vectors $\bv$ and $\bv'$, we denote by  $d(\bv,\bv')$ the Hamming distance between them. Given any non-negative integer 
$\rho$, let $\scB_{\rho}(g)$ be the set of 
%$0\,$-$1$ 
$\bv$ such that $d(\bg,\bv)\le\rho$, i.e. the {\em ball} centered at $\bg$.
Finally, let $G^{r,\rho}_T\subseteq G^r_T$ be the set of all girls $g$ included by \smi\ in $G^r_T$ while there exists at least one girl $g^r\in G^r_{t(g)-1}$ such that $\bg$ belongs to ball $\scB_{\rho}(g^r)$ centered at $\bg^r$.

%\sout{From the perspective of building a $\rho$-convering of the columns of $\bB$, $G^{r,\rho}_T$ can be viewed as the set of girls deriving from a mistake made by \smi, in that these girls could have been included in the cluster of a  previously found representative girl. However, we point out that the definition of $G^{r,\rho}_T$ is useful only for the purpose of this proof, and \smi\ is oblivious to the definition of $\rho$-convering.}

In this proof, we single out subset $G^{r,\rho}_T\subset G^r_T$ since, in order to upper bound $|G^r_T|$, it is convenient to bound $|G^{r,\rho}_T|$ and 
$|G^r_T\setminus G^{r,\rho}_T|$ separately, and then use the sum of these two bounds to limit $|G^r_T|$.
Notice that by its very definition, $G^{r,\rho}_T$ can be seen as containing all girl representative members $g$ of $G^r_T$ satisfying the following property: Given {\em any} radius $\rho$, there exists at least one girl $g^r\in G^r_{t(g)-1}$ such that $\bg$ belongs to the ball $\scB_{\rho}(g^r)$ centered at $\bg^r$. This property states that, given any radius $\rho$, \smi\ creates a {\em new} representative girl $g$ instead of including $g$ into the cluster of $\bg^r$. In fact, after round $t(g)$, both $\bg\in\scB_{\rho}(g^r)$ and $\bg^r\in\scB_{\rho}(g)$ will simultaneously hold because $d(\bg,\bg^r)\le\rho$. 
%It is  worth explaining why 
This event may happen because while \smi\ is looking for a cluster including $g$, there exists at least one boy $b''\in B(g,S)\cap F_{g^r}$ (see Section~\ref{ss:smile} -- Phase I) such that $\sigma(g,b'')\neq\sigma(g^r,b'')$. Clearly, the larger the considered $\rho$, the more frequent this event is. Since \smi\ operates without considering any specific radius $\rho$, this fact holds for {\em all} values of $\rho$.

%{\bf CG: what does the above mean ?? Why is it mentioned ?}

%{\bf FV: Amended. It is central for finding an upper bound of the number of girl representative members there are at round $T$, which in turn is necessary to upper bound the number rounds used by \smi\ for phase I}

Taking into account the greedy way \smi\ constructs $G^r_T$, we have $|G^r_T\setminus G^{r,\rho}_T|\le C^G_{\rho/2}$. 
In fact, given any optimal\footnote
{
By \enquote{optimal} we mean here a covering having a number of balls exactly equal to the covering number.
}
$\frac{\rho}{2}$-covering $\scC^B_{\rho/2}$,
%and any ball $\scB_{\rho/2}$ of $\scC^B_{\rho/2}$, let $\bB(\scB_{\rho/2})$ be the set of column vectors of $\bB$ covered by $\scB_{\rho/2}$. It is immediate to see that for each girl $g$ and any ball 
%$\scB_{\rho/2}$ of $\scC^B_{\rho/2}$, if $\bg\in\scB_{\rho/2}$, then 
%we have
%$\bG(\scB)\subseteq \scB_{\rho}(g)$. Furthemore,
by the definition of $G^{r,\rho}_T$, we know that
{\em at most one} girl of $G^r_T\setminus G^{r,\rho}_T$ can be included in any ball of $\scC^B_{\rho/2}$.
Now, since we know that $|G^r_T\setminus G^{r,\rho}_T|\le C^G_{\rho/2}$, in order to upper bound $|G^r_T|$ in terms of $C^G_{\rho/2}$, we can bound $|G^{r,\rho}_T|$. 
A union bound shows that the probability that any girl $g$ belongs to $G^{r,\rho}_T$ is upper bounded by $\frac{\rho S'}{n}$. In fact, from the definition of $G^{r,\rho}_T$, we know that there is already at least one girl $g^r$ in $G^r_{t(g)-1}$ such that $\bg\in\scB_{\rho}(g^r)$. 

Let $F_{S',g}$ be the set of feedbacks received by $g$ when $|F_g|$ becomes equal to $S'$ and \smi\ verifies whether $g$ can be part of a previously discovered cluster. For each boy $b\in F_{S',g}$, the probability that $\sigma(b,g)\neq\sigma(b,g^r)$ is at most $\frac{\rho}{n}$. The probability that $\sigma(b,g)\neq\sigma(b,g^r)$ holds for {\em all} $b\in F_{S',g}$ can therefore be bounded from above by 
$|F_{S',g}|\frac{\rho}{n} = \frac{\rho S'}{n}$. % by applying a Union Bound. 
Since $|G|=n$, the cardinality of $G^{r,\rho}_T$ is therefore upper bounded by $\rho S'$ in expectation. Applying now a Chernoff bound, and taking into account that $S'> S \ge \log n$ and that the radius $\rho$ is at least $1$ when it is not null, we obtain that the upper bound 
\[
|G^{r,\rho}_T|\le \rho S' + 2\sqrt{S'\rho\log n}
\] 
holds w.h.p. %, by applying a Chernooff bound.
Hence, we conclude that %taking into account that $S\ge\log(n)$ and that  $\rho\ge 1$ when it is not null, we have  
\[
C^G=|G^r_T|\le C^G_{\rho/2} + \rho S'+2\sqrt{\rho S'\log n}\le C^G_{\rho/2} + 3\rho S'~,
\]
holds w.h.p. {\em for all} non-negative values of the radius $\rho$. Since $C^G$ is clearly upper bounded by $n$, we can finally write
\[
C^G\le\min\left\{\min_{\rho\ge 0}\left(C^G_{\rho/2} + 3\rho S'\right),n\right\}~.
\]
By symmetry, we can use the same arguments as above for bounding $C^B$. This concludes the first part of the proof.

We now prove that $T_I=\scO\left(n(C^G+C^B+S')\right)$.
Let $T_I^B$ be the number of rounds during which \smi\ asks for feedback to boys in Phase I. $T_I^B$ is bounded by the sum of the number of rounds used to obtain $\frac{n}{2}$ feedbacks for each girl in $G^r_T$, and the number of rounds to obtain $S'$ feedbacks for each girl in $G\setminus G^r_T$. These two quantities are upper bounded w.h.p. by $\scO(n |G^r_T|)$ and $\scO(S' |G\setminus G^r_T|)=\scO(S' n)$, respectively.
Hence, the total number of rounds \smi\ takes for asking all feedbacks from boys during Phase I is upper bounded w.h.p. by 
%\[
%\scO\left(n\cdot\min_{\rho\ge 0}\left(C^G_{\rho/2} + 3\rho S'\right)+S'n\right)=
%\]
\[
\scO\left(n(C^G+S')\right)~.
\]
%\[
%\scO\left(\min_{\rho\ge 0}\left(n C_{\rho/2} + n (\rho+1) S\right)\right)~,
%\] 
%because $S'=\Theta(S)$, $S\ge \log(n)$ and the upper bound for $|G^r_T|$ we showed holds w.h.p. for any radius
%$\rho\ge 0$.
Since $T_I\le T^G_I+T^B_I$ and $T^B_I=\scO\left(n C^B+S n\right)$, we conclude that 
\begin{equation}\label{e:boundT1}
T_I=\scO\left(n(C^G+C^B+S')\right)~.
\end{equation}
We now show that under the assumptions of the theorem, the strategy of Phase II yields w.h.p. to match $\Theta(M)$ users.
For each cluster representative member, the number of feedbacks obtained by selecting uniformly at random users from the other side during Step $(1)$ is equal to $\frac{n}{2}$. Hence, if we disregarded the number of mispredicted matches, \smi\ would recover w.h.p. at least $\frac{1}{4}M-\scO\left(\sqrt{M\log n}\right)$ matches selected uniformly at random from $E_{\scM}$. The number of mispredicted matches quantified by Lemma~\ref{le:hammingDistance} is equal to 
$\scO\left(\frac{n\log(n)}{S}\right)$ per user, which are caused by the fact that $\scM$ is not recovered exactly by \smi, but only in an approximate manner.
Denote by $\scM$'s the approximation to $\scM$s computed by Phase II. Using a Chernoff bound and the conditions
$M=\omega\left(\frac{n^2\log(n)}{S}\right)$ and $S<n$ (which together imply $M=\omega\left(n\log(n)\right)$ as $n\to\infty$),
we have that the total number $|E_{\scM}\triangle E_{\scM'}|$ of mispredicted matches satisfies w.h.p.
\[
|E_{\scM}\triangle E_{\scM'}|\le\frac{3}{4}M+\scO\left(\sqrt{M\log n}\right)+\scO\left(\frac{n^2\log(n)}{S}\right)=\frac{3}{4}M+o\left(M\right)~,
\]
where $E_{\scM}\triangle E_{\scM'}$ is the symmetric difference between the edge sets of $\scM$ and $\scM'$.

Set for brevity  $d_{\max} = \max_{u \in V}\degree_{\scM}(u)$. 
%and $\max_{u \in V}\degree_{\scM'}(u)$ 
%In order to conclude the proof, 
We now claim that
\begin{equation}\label{e:claim}
d_{\max}-o(d_{\max})\ge\degree_{\scM'}(u)
\end{equation}
holds w.h.p. for each user $u\in B\cup G$.
%and 
%$d'_{\max}$. 
%respectively.
The operations performed by \smi\ guarantee that w.h.p. $\degree_{\scM'}(u)-\degree_{\scM}(u)=\scO\left(\frac{n\log n}{S}\right)$ holds for all $u\in B\cup G$. In fact, for each user $u\in B\cup G$, the total number of users $u'$ on the other side who dislike $u$ and are adjacent to $u$ in $\scM'$, is upper bounded w.h.p. by $\scO\left(\frac{n\log n}{S}\right)$, as Lemma~\ref{le:hammingDistance} guarantees. 
%The match lower bound of the theorem statement 
%$M=\omega\left(\frac{n^2\log(n)}{S}\right)$ implies that
%\smi\ recoveries w.h.p. $\frac{1}{4}M-o(M)$ edges of $\scM$. 
%
%Furthermore, we also know that, for each user $u\in B\cup G$, we have w.h.p. $d_{\max}-o(d_{\max})\ge \degree_{\scM'}(u)$. In fact,
%
Now, we have w.h.p.
\begin{align*}
\degree_{\scM'}(u) 
&\le \frac{1}{4}d_{\max}+\scO\left(\sqrt{d_{\max}\log(n)}\right)+\scO\left(\frac{n\log n}{S}\right)\\
&=\frac{1}{4}d_{\max}+o(d_{\max})+o\left(\frac{M}{n}\right)\\
&=\frac{1}{4}d_{\max}+o(d_{\max})\\
&\le d_{\max}-o(d_{\max}),
\end{align*}
where the term $\scO\left(\sqrt{d_{\max}\log(n)}\right)$ arises from the application of a Chernoff bound and we took into account that 
$M=\omega\left(\frac{n^2\log(n)}{S}\right)$, combined with $S<n$, implies $d_{\max}=\omega\left(\log(n)\right)$. This concludes the proof of (\ref{e:claim}).

%for each user $u \in V$
%Furthermore, $d'_{\max}-d_{\max}=(d_{\max})$ must hold with high probability.

% then we have always $d_{\max}=\omega(\sqrt{n})$ and, when $d'_{\max}-d_{\max}>0$,
%$d'_{\max}-d_{\max}=o(d_{\max})$ must hold with high probability.

%Note also that a fraction equal to 
%$\left(\frac{1}{4}-o(1)\right)$ of the edges of $\scM$ must be detected at the end of phase I, and the number of edges in $M_{\triangle}$ must be equal to $o(M)$. 

%In the following part of the proof, we can exploit the fact that $d_{\max}-o(d_{\max})\ge\degree_{\scM'}(u)$ holds w.h.p. for each user $u\in B\cup G$.
During phase II, \smi\ matches pairs of users corresponding to $E_{\scM'}$ in a greedy way. If we can show that each user $u$ is served w.h.p. at least $\degree_{\scM'}(u)$ times then we are done.
Now, since $M^*_T=M$ w.h.p., the Omniscient Matchmaker must be able to match w.h.p. all the users corresponding to $E_{\scM}$ in $T$ rounds. This implies that in Steps $(2_B)$ and $(2_G)$ each user $u \in B\cup G$ is served w.h.p. at least $d_{\max}$ times during the $T$ rounds. Hence, 
%there exists a value $\alpha>\frac{1}{2}$ such that 
each user is served w.h.p. at least $d_{\max}-o(d_{\max})$ times during the last $T-T_I=(1-o(1))T$ rounds, where $T_I$ is the time used by phase I. 
Recalling now (\ref{e:claim}) and (\ref{e:boundT1}),
%that $d_{\max}-o(d_{\max})\ge\degree_{\scM'}(u)$ w.h.p. for each user $u\in B\cup G$,
% and that the number of rounds $T_I$ used by \smi\ to estimate 
%and realize 
% $\Theta(M)$ matches during phase I is bounded w.h.p. by 
% $\scO\left(n(C^G+C^B+S')\right)$,
%and $\frac{1}{4}M+\frac{6n^2\log n}{S}=\frac{1}{4}M+o(M)$ respectively.
we conclude that $T=\omega\left(n(C^G+C^B+S')\right)$ rounds are always sufficient to serve each user $u$ at least $\degree_{\scM'}(u)$ times, thereby completing the proof.
%On the other hand, \smi\ can simulate the Omniscient Algorithm after at most $T_I$ algorithm on $\scM'$.
%Taking into account the above observations, we can state that 
%$T=\omega\left(nC+nS\right)$ rounds are always sufficient to serve each user $u$
%the maximum number of edges of  $E_{\scM'}$ corresponding to pairs of users that \smi\ {\em cannot} match, is upper bounded by 
%$\degree_{\scM'}(u)$ times, which concludes the proof. 
%$(1-\alpha)\degree_{\scM'}(u)$ 
%for each user $u \in V$, i.e. 
%$2(1-\alpha)|E_{\scM'}|$ in total. 
%Recalling that 
%$|E_{\scM}\triangle E_{\scM'}|=o(M)$ and $\alpha>\frac{1}{2}$,
%we can finally conclude that $M_T(\smi)\ge (2\alpha-1)(1-o(1))M=\Theta(M)$.
%
%and it is well known that a greedy approach for the one-to-one Maximum Bipartite Matching problem yields to a $\frac{1}{2}$-approximation. Extending this results to the many-to-many Maximum Bipartite Matching problem we still easily obtain a $\frac{1}{2}$-approximation factor. However, the time lower bound in the statement is equal to twice the upper bound $T_I$ for the number of rounds sufficient to collect all the information for running \smi\ in phase III. Hence, at most half of the total number of rounds $T$ is used during phase I and II of \smi. This implies that phase III of \smi\ is equivalent to a greedy $\frac{1}{2}$-approximation of the Omniscient Algorithm on $\scM'$ having at our disposal at least $\frac{T}{2}$ rounds. 
\end{proof}

\subsection{Proof of Theorem~\ref{th:smileTimeNeeded}}
\begin{proof}~[\textit{Sketch.}]~
Term $M$ in the lower bound clearly derives from the fact that we need to match $\Theta(M)$ users. When $M$ is the dominant term, the bound is therefore trivially true. In the sequel, we thus focus on the case $M=o\left(n(C^G_0+C^B_0)\right)$, i.e., when the dominant term is $n(C^G_0+C^B_0)$.

We show how to build a sign function $\sigma$ such that the number of rounds needed to uncover $\Theta(M)$ matches is $\Omega\left(n(C^G_0+C^B_0)\right)$.
First of all, we set $\sigma(g,b)=1$ for all $g \in G$ and all $b \in B$. This implies $C^B_0=1$. The matches depend therefore solely on the boy preference matrix $\bB$.  We create an instance of $\bB$ where, for $\rho=0$, the number of girls belonging to each cluster of the columns of $\bB$ is equal to $\frac{n}{C^G_0}$, i.e., all these clusters of girls have the same size $\frac{n}{C^G_0}$. Let $d$ be any divisor of $n$.
Without loss of generality, consider $\bB$ after having rearranged its columns in such a way that all column indices are grouped according to the girl clustering. More precisely, given any $i\in \{0,1,\ldots, d-1\}$, the column indices of $\bB$ in the range $\left[1+i\frac{n}{d}, (i+1)\frac{n}{d}\right]$ belong to the same girl cluster. We obtain this way a block matrix $\bB$ made up of $(n d)$-many blocks, where each block is a submatrix having $1$ row and $\frac{n}{d}$ columns. 
We then choose uniformly at random 
%$\left\lfloor\frac{m}{C^G_0}\right\rfloor$
$\left\lfloor\frac{m d}{n}\right\rfloor$ blocks, and set equal to $1$ all entries in each selected block. Finally, we set all the remaining entries of $\bB$ to $0$. With this random assignment, we have that in expectation $C^G_0$ equals $d$. In fact, since $m\in\left(n\log(n),n^2-n\log(n)\right)$, we can always select at least $d\log(d)$-many blocks. By using a classical Coupon Collector argument, we see that in expectation we have at least one block of entries equal to $1$ (and one block of entries equal to $0$, both selected uniformly at random) per set of $\frac{n}{d}$ columns grouped together as explained above.  Note also that this way we have $m-\frac{n}{C^G_0}< M\le m$, which is equivalent to $m-\frac{n}{C^G_0+C^B_0-1}< M\le m$, since $C^B_0=1$. 

Assume now $T=o\left(n(C^G_0+C^B_0)\right)$, which is equal to $o\left(n(C^G_0)\right)$ in our specific construction. In this case, for any matchmaking algorithm $A$, the number of feedbacks from boys revealed in Steps ($3_G$) and ($3_B$) must be $o\left(n(C^G_0+C^B_0)\right)=o\left(n(C^G_0)\right)$. This implies that, in expectation, the fraction of matches that are {\em not} covered by $A$ is asymptotically equal to $1$ as $n\to\infty$. Hence, our construction of $\sigma$ shows that in order to uncover $\Theta(M)$ matches in expectation, it is necessary to have $T=\Omega\left(n(C^G_0+C^B_0)+M\right)$, as claimed.
\end{proof}

\subsection{Proof of Theorem~\ref{th:smileCompComplexity}}
\begin{proof}~[\textit{Sketch.}]~
We describe an efficient implementation of \smi\, analyzing step by step the time and space complexity of the algorithm. 
Without loss of generality, we focus on $B$ and the operations performed on matrix $\bB$. Similar operations can be performed on $G$ and $\bG$, so that
%Because of the symmetrical nature of the problem and the algorithm w.r.t. boys and girls, in order to obtain 
the total time and space complexity of the algorithm will be obtained by simply doubling the complexities computed within this proof (this will not affect the final results because of the big-Oh notation).

We create a balanced tree $\scT$ whose records contain the feedbacks collected for all cluster representative members of $G^r$ during Phase I. More precisely, $\scT$ 
contains all ordered sets of indices of $\bB$'s columns according to their lexicographic order. We insert each column one by one reading all its binary digits.
% and each $\scT$'s record contains a pointer to another balanced tree storing the relative indices of the $\bB$'s column vectors inserted. 
This way, we can quickly insert new elements while maintaining them sorted even within each node of $\scT$. 
At the end of this process, we will have $C^G$ records. %which represent a partitioning of the $\Theta(\sqrt{n\,C}\,\log C)$ indices. 
The resulting time complexity is $\scO(n\,C^G \log C^G)$; the space complexity is $\scO(n\,C^G)$.

%In phase II, for each girl $g$ in $G\setminus G'$, we generate the boy subset $B(g,S)$ formed by $???4S=\Theta(\sqrt{n}\log n)$ boys sampled from $B$, and ask for their feedback about $g$. 

Each time we collect $S'$ feedbacks for a girl $g$, we check whether we can put her in a cluster based on the available information. We look for one girl $g^r \in G^r$ such that we have $\sigma(b,g^r)=\sigma(b,g)$ for all $b \in F_{g^r}\cap F_g$. If do not find any such girl, we continue to collect feedback for $g$ until $|F_g|=\frac{n}{2}$, and thereafter we insert $g$ in $G^r$. This operation is repeated for all girls except for the first one. This is the computational bottleneck of the whole implementation. Overall, it 
%when $S=\omega(\log n)$. 
takes $\scO(n) \cdot \scO(S') \cdot \scO(C^G)=\scO(n\,C^G\,S)$ time.
%because $\Theta(S')=S$. 
The space complexity is still $\scO(n\,C^G)$, because of the use of tree $\scT$.

At the end of this phase, we create a matrix $\wt{\bB} \in \{0,1\}^{n\times C^G}$ containing all the columns in $\scT$ in the same order. We also create two
other ancillary data-structures: \textbf{(i)}  An $n$-dimensional array $A_B$ where each record contains an integer in $\{1,\ldots,C^B\}$, representing the estimated cluster of each boy. Array $A_B$ allows us to get in constant time the estimated cluster of each boy. \textbf{(ii)} A $C^G$-dimensional array $A'_B$, where each record represents a distinct cluster of girls. 
%\sout{Each records of $A'_B$ contains the ordered list of the indices of all girls belonging to the estimated relative cluster.} 
The $j$-th entry $A'_B[j]$ of $A'_B$ contains the ordered list of the indices of all girls belonging to the $j$-the estimated cluster.

Symmetrically, for the girl preference matrix $\bG$, we will have matrix $\wt{\bG}$ and arrays $A_G$ and $A'_G$.
Finally, we create a $C^B$-by-$C^G$ matrix $\bM$ which can be exploited in Phase II to match users according to the information collected during Phase I. Matrix $\bM$ represents, in a very compact form, the approximation to the matching graph $\scM$ computed by Phase I. 
%i.e. it is a compact representation of all the matches estimated by \smi. 
Specifically, entry $\bM_{i,j}$ contains two ordered lists of user indices, $L_B(i,j)$ and $L_G(i,j)$. The integers in $L_B(i,j)$ correspond to {\em all} boy indices that belong to the $i$-th cluster of $B$ and that, according to what the algorithm estimates, are matching girls in the $j$-th cluster of girls. Symmetrically, $L_G(i,j)$ contains {\em all} the indices of the girls belonging to the $j$-th cluster of girls matching boys of the $i$-th cluster. It is not difficult to see that using the data-structures described so far, this matching matrix $\bM$ can be generated by reading all elements of $\wt{\bB}$ and $\wt{\bG}$ only once, and its construction thus requires only $\scO\left(n\,(C^G+C^B)\right)$ time. The space complexity of the matching matrix $\bM$ is again $\scO\left(n\,(C^G+C^B)\right)$.
To see why, first observe that the number of entries of $\bM$ is $C^G\cdot C^B< n(C^G+C^B)$. As for the space needed to store the boy and girl lists contained in the entries of $\bM$, consider the following. Let us focus on boys only, a similar argument can be made for girls. List $L_B(i,j)$, stored in $\bB_{i,j}$, must be a 
%(not necessarily proper) 
subset of the $i$-th estimated cluster of $B$. Since $B$ is partitioned by \smi\ into $C^B$-many estimated clusters, call these clusters $B_1, \ldots, B_{C^B}$, we have that 
% Let now $B_i$ be the set formed by all boys belonging to the $i$-th boy estimated cluster. 
the total number of items contained in all the lists of the $i$-th row of $\bM$ can be upper bounded by $|B_i|\cdot C^G$. Thus, the total number of items contained in the lists of boys in $\bM$ can in turn be upper bounded by 
\[
\sum_{1\le i \le C^B} |B_i|\cdot C^G = |B|\cdot C^G=n\cdot C^G~.
\] 
Hence, the space needed to store $\bM$ is bounded by 
\[
\scO\left(C^G\,C^B+n\,C^G+n\,C^B\right)=\scO\left(n\,(C^G+C^B)\right)~,
\] 
as claimed.

During Phase II, we match users according to the information obtained from Phase I. 
%\sout{The procedure is greedy, and can be efficiently implemented by maintaining, for each $b \in B$, a pointer that can only move forward in the corresponding  row of $\bM$, i.e., if $i$ is the index of the cluster $b$ belongs to, $b$'s pointer moves only within the $i$-th row. When record $\bM_{i,j}$,} 
%
The procedure is greedy, and can be efficiently implemented by maintaining, for each $b \in B$, a pointer $p_b$ that can only move forward to the corresponding  
row of $\bM$. More precisely, $p_b$ scans the estimated matches for $b$ contained in the corresponding row of $\bM$. Without loss of generality, assume $b$ is contained in the $i$-th estimated cluster of boys, and that $L_B(i,j)$ contains $b$.
%, and that we are currently processing entry $\bM_{i,j}$. 
During each round where boy $b$ is selected (in some Step ($1_B$)), pointer $p_b$ moves forward in the list $L_G(i,j)$, where $\bM_{i,j}$ is the current entry processed by \smi\ during Phase II for $b$. If during the last round where $b$ was selected, $p_b$ was pointing to the last element of $L_G(i,j)$, then we continue to increment $j$ until we find an entry $\bM_{i,j'}$ such that the associated list of boys $L_B(i,j')$ contains $b$. In order to find such entry $j'$, we perform $(j'-j)$-many binary searches over the $j'-j$ lists $L_B(i,j+1), L_B(i,j+2), \ldots, L_B(i,j')$. Thereafter, we make $p_b$ point to the first girl in list $L_G(i,j')$.
%
%\sout{first of a dichotomic search in $L_B(i,j)$ is performed the verify whether $b$ belongs to this list. If this is not the case, the pointer moves forward to $\bM_{i,j+1}$, otherwise it moves in the list $L_G(i,j)$ and a new girl is selected for $b$ in Step ($2_B$).}
%whenever we found a $1$ on its row, say at column index $g$, by using $A'_B$ we retrieve the ordered list of all girls having the $g$-th description category. The $b$'s pointer moves then forward in that list looking for a girl $g'$ who is believed to like $b$. We are able to check whether she likes $b$ in constant time by retrieving  $A_B[b]$, that is the description category of $b$, and verifying whether $\wt{\bG}_{g',A_B[b]}$ is equal to $1$, which implies that \smi\ predicts $g'$ likes $b$, or equal to $0$. 
\iffalse
%Let $\scM'$ be the matching graph predicted by \smi. 
%Since the $b$'s pointer moves only forward, it can change position at most $\deg_{\scM'}(b)$ times in total during all rounds. As it is possible to verify with Lemma~\ref{le:hammingDistance}, for all user $u \in V$, the inequality $\deg_{\scM'}(u)\le\deg_{\scM}(u)+\Theta(\sqrt{n})$ holds w.h.p. by setting the parameter $S$ as equal to $\sqrt{n}\log n$.
\fi
When $p_b$ reaches the end of the list of girls $L_G(i,C^G)$ of the last column of $\bM$, \smi\ predicts arbitrarily in all subsequent rounds where $b$ is selected.

%\sout{The worst-case time complexity per round during Phase II is therefore $\scO\left(n+(C^G+C^B)\,\log n\right)$, where the term $\scO\left((C^G+C^B)\,\log n\right)$ comes from the dichotomic searches performed in the lists of $\bM$.}

The total running time for Phase II 
%for matching all users according to the \smi's estimated clusters,  
is $\scO\left(T+n\,(C^G+C^B)\,\log n\right)$, where term $\scO\left((C^G+C^B)\,\log n\right)$ is due to the dichotomic searches performed in the lists of $\bM$ for each user of $B\cup G$.
To see why, let us refer to a specific boy $b$: The number of operations performed during Phase II is either constant, when $p_b$ moves forward inside a list of girls of $\bM$, or $\scO\left((j'-j)\log(n)\right)$, when \smi\ is looking for the next list of boys $L_B(i,j')$ containing $b$ 
%on the $i$-th row of $\bM$ 
starting from the lists of entry $\bM_{i,j}$. Hence the overall time complexity becomes
\[
\scO(T+n\,(C^G+C^B)\,\log n+n\,S\,(C^G+C^B))=\scO(T+n\,(C^G+C^B)\,S)
\] 
where we used $S\ge \log(n)$. 

As for the amortized time per round, when $T=\omega\left(n(C^G+C^B)+\frac{n^3\log(n)}{M}\right)$ this can be calculated as follows. 
%\sout{Since in this case $S=\Theta\left(\frac{n^2\log(n)}{M}\right)$, disregarding the term $n(C^G+C^B)$ in the bound $T=\omega\left(n(C^G+C^B)+\frac{n^3\log(n)}{M}\right)$, and dividing the term $n\,(C^G+C^B)\,S$ in the bound $\scO(T+n\,(C^G+C^B)\,S)$ by $\omega\left(\frac{n^3\log(n)}{M}\right)$, we obtain $o(C^G+C^B)$. Adding the two values $\Theta(1)$ and $o(C^G+C^B)$,} 
Since $S=\Theta\left(\frac{n^2\log(n)}{M}\right)$, the overall time complexity becomes 
$\scO\left(T+(C^G+C^B)\frac{n^3\log(n)}{M}\right)$. Dividing by $T=\omega\left(n(C^G+C^B)+\frac{n^3\log(n)}{M}\right)$, 
%if we disregard the term $n(C^G+C^B)$ in the denominator, 
we immediately obtain
\begin{align*}
&\scO\left(\frac{T+(C^G+C^B)n^3\log(n)/M}{T}\right)\\
&=\scO\left(1+\frac{(C^G+C^B)n^3\log(n)/M}{\omega\left(n(C^G+C^B)+n^3\log(n)/M\right)}\right)\\
&=\scO\left(1+\frac{(C^G+C^B)n^3\log(n)/M}{\omega\left(n^3\log(n)/M\right)}\right)\\
&=\Theta(1)+o(C^G+C^B)~,\\
\end{align*}
which is the claimed amortized time per round. This concludes the proof.
\end{proof}

\section{Supplementary material on the experiments}\label{s:exp-app}

\paragraph{Implementation of \smi.}
As we mention in Section \ref{s:exp}, our variant \ismi{}: 
%\textit{(i)} incorporate the fact that the real-world datasets are not perfectly clusterable, 
\textit{(i)} deals with the cases where the input datasets is uniformly random, \textit{(ii)} avoids asking arbitrary queries if more valuable queries are available, and \textit{(iii)} discovers matches during the exploration phase of the algorithm.

To achieve all these goals, we adapted the implementation of \smi{} along different axes. 

First, we combined Phase I and Phase II of \smi{}.
The high-level idea of this modification is to start exploiting immediately the clusters once some clusters are identified, without waiting to estimate all of them. 
We only describe the process of serving recommendations to boys, the process for girls being symmetric.
We maintain for each $b\in B$ a set of girl 
%\sout{description} 
clusters $C_{\textrm{to-ask}}(b)$ for which we do not yet know the preference of $b$, and a set of girl clusters $C_{\textrm{verified}}(b)$ which we already know $b$ likes. Whenever $b$ logs in, if $C_{\textrm{verified}}(b)\not= \emptyset$ we pick a cluster $C\in C_{\textrm{verified}}(b)$ and a girl $g\in C$, and ask $b$ about $g$. If $C_{\textrm{verified}}(b)= \emptyset$ and $C_{\textrm{to-ask}}(b)\not= \emptyset$ we pick a cluster $C\in C_{\textrm{to-ask}}(b)$, ask $b$ his preference for any girl in $C$, remove $C$ from $C_{\textrm{to-ask}}(b)$, update the preference of $b$ for cluster $C$ accordingly, and finally add $C$ into $C_{\textrm{verified}}(b)$ if $b$ likes cluster $C$.
If, on the other hand, $C_{\textrm{verified}}(b)= C_{\textrm{to-ask}}(b)= \emptyset$, and there are no prioritized queries for $b$ (see second modification), we proceed as we would in Phase I of \smi{} (asking $b$ for feedback that helps estimating the clusters).
Whenever the exploration phase discovers a new girl cluster $C$ represented by $g$, we add $C$ into $C_{\textrm{verified}}(b)$ if $\sigma(b,g)=+1$, and into $C_{\textrm{to-ask}}(b)$ if $b$ was not asked about $g$. Whenever a girl $g$ is classified into an existing girl cluster $C$, for the boys $b'$ that provided feedback for $g$ and $C \in C_{\textrm{verified}}(b')$ we remove $C$ from $C_{\textrm{verified}}(b')$ as we now know whether $b'$ likes cluster $C$ or not.

Second, whenever we discover a positive feedback from $b$ to $g$, we prioritize for $g$ the feedback to $b$.
The feedback received by such queries is taken into account when classifying users into clusters.

Third, instead of having Phase II choose girl $g$ arbitrarily (``else" branch in the pseudocode), we let \ismi{} choose girl $g'$ who likes $b$, and if no such $g'$ exists, we select $g''$ for whom we have not yet discovered whether she likes or dislikes $b$. If no such girls exists for $b$, then we serve an arbitrary girl to $b$.
%\sout{Finally, whenever we compare the feedback array received for two users to determine if they will be in the same cluster, we allow Hamming distance $n/\log(n)$.} 

Finally, whenever we compare the feedbacks received by two users, say girl $g$ and $g^r\in G^r$, in order to determine whether $g$ belongs to the cluster of $g^r$, we amended as follows. We insert $g$ into the cluster of $g^r$ by requiring that $\sigma(b,g)=\sigma(b,g^r)$ holds {\em at least} for $\left(|F_{g}\cap F_{g^r}|(1-\frac{1}{\log(n)})\right)$-many boys in $F_{g}\cap F_{g^r}$, in place of {\em all} boys belonging to $F_{g}\cap F_{g^r}$.
This modification aims to cope with the problem of clustering similar users into different clusters due to a very small value in $|F_{g} \triangle F_{g'}|$, that is, the number of boys that like only one out of $g$ and $g'$. In the real-world dataset that we use, we noticed that if we allow no boys to disagree on their feedback to two girls, then the number of girl clusters is almost equal to $|G|$, while allowing a small number of disagreements (that is, a fraction $\frac{1}{\log(n)}$ of the total number of boys) the number of girl clusters reduces drastically. Recall the last six columns of Table \ref{tab:characteristics}.
The same holds for clusters over boys when we consider feedback from girls.

\paragraph{Further experimental results.}
In Table \ref{tab:under-curve} we give the \emph{area under the curve} metric, which sums over time the number of matches that are uncovered at each time-step $t$, divided by the total number of time-steps. This metric captures how quickly, over average, the different algorithms disclose matches. Figure \ref{fig:plots2} contains the plots on the remaining datasets described in Section \ref{s:exp}.

\begin{figure}[t!]
	\begin{center}
		\begin{subfigure}{.5\textwidth}
			\centering
			\includegraphics[trim ={0.5cm 0cm 9.5cm 0},width=\linewidth]{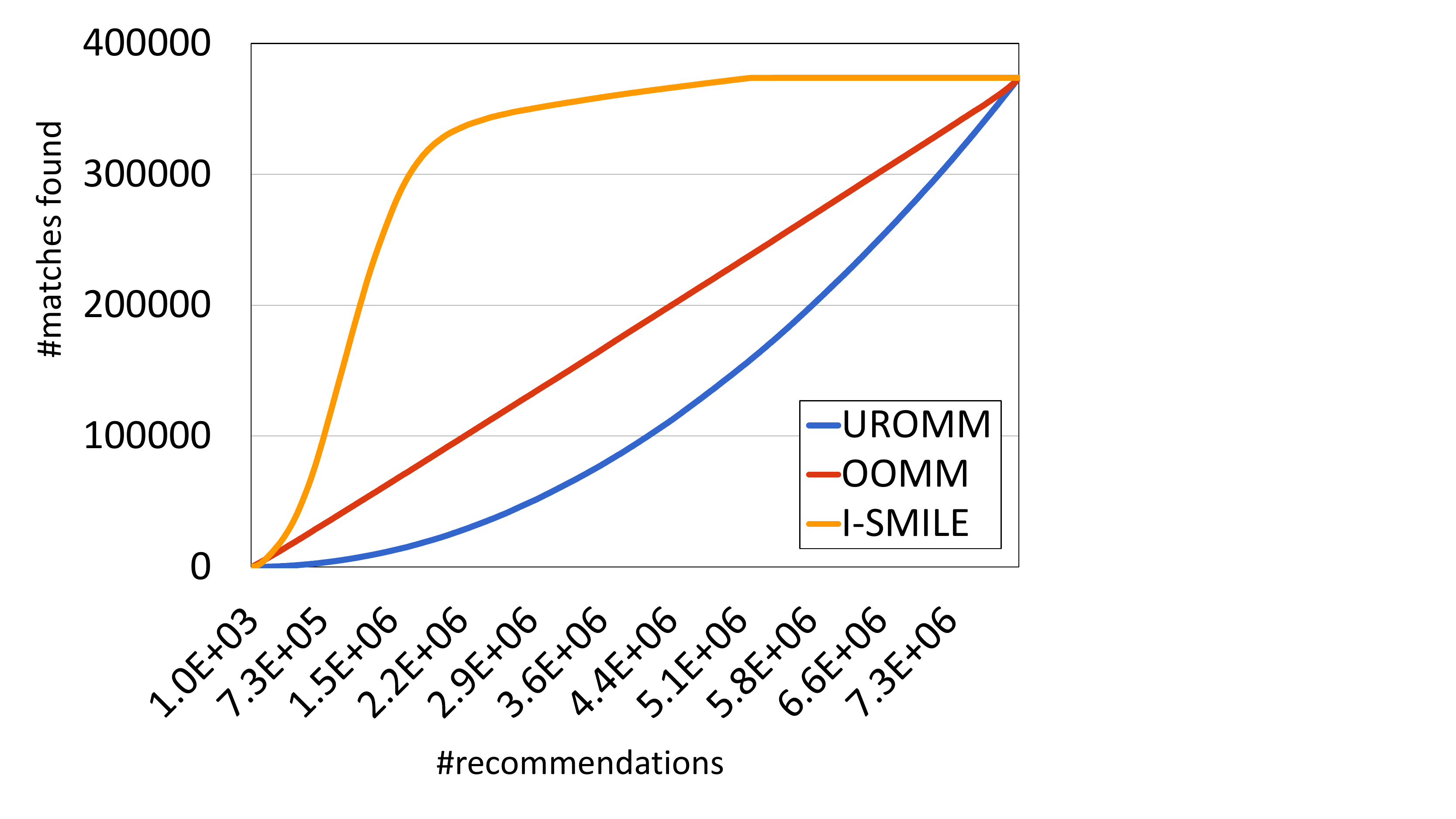}
			\caption{Dataset S-20-23.}
			\label{fig:s-20-23}
		\end{subfigure}%
		\begin{subfigure}{.5\textwidth}
			\centering
			\includegraphics[trim ={0.5cm 0cm 9.5cm 0},width=\linewidth]{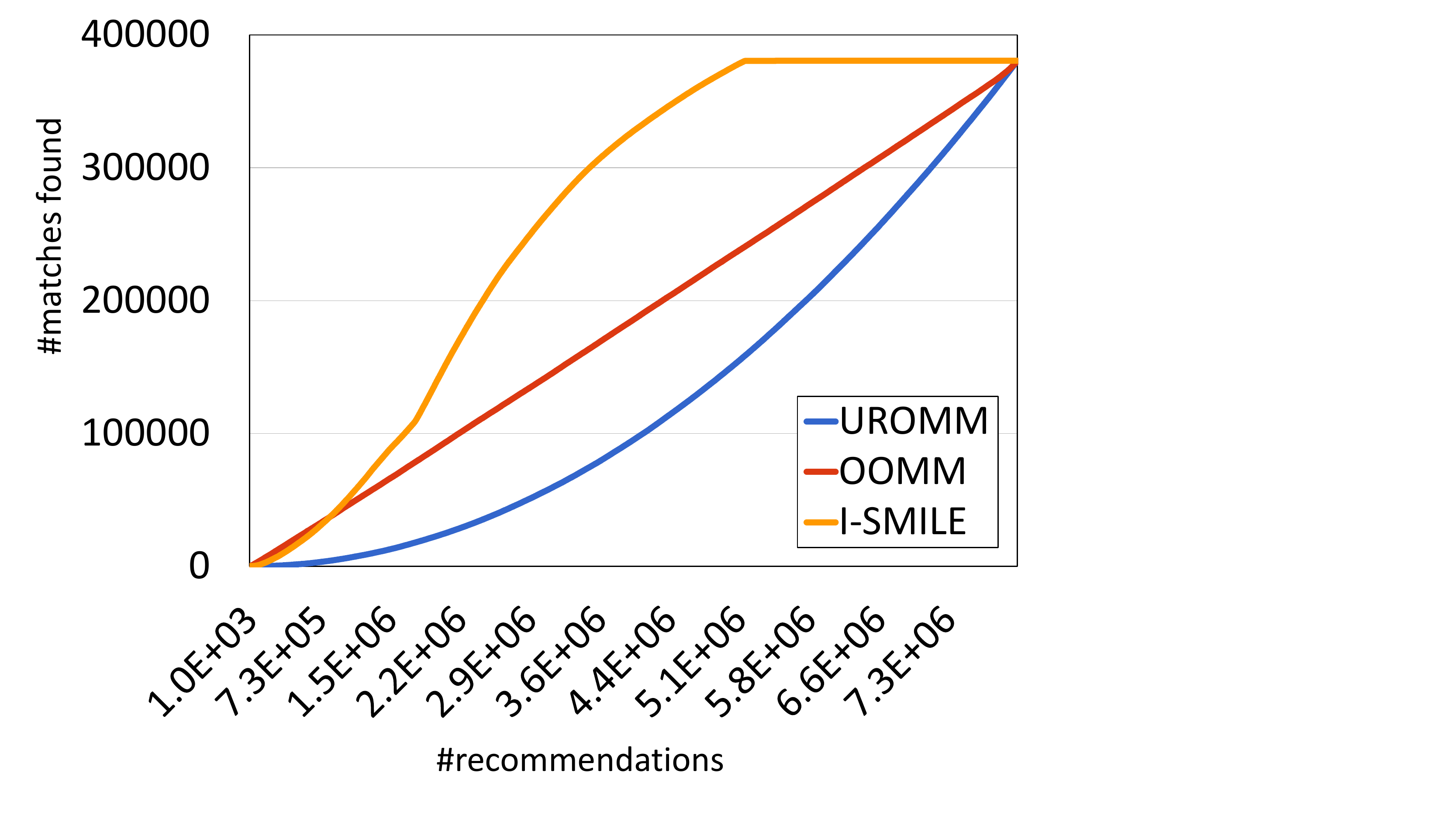}
			\caption{Dataset S-500-480.}
			\label{fig:s-500-480}
		\end{subfigure}
		\begin{subfigure}{.5\textwidth}
			\centering
			\includegraphics[trim ={0.5cm 0cm 9.5cm 0},width=\linewidth]{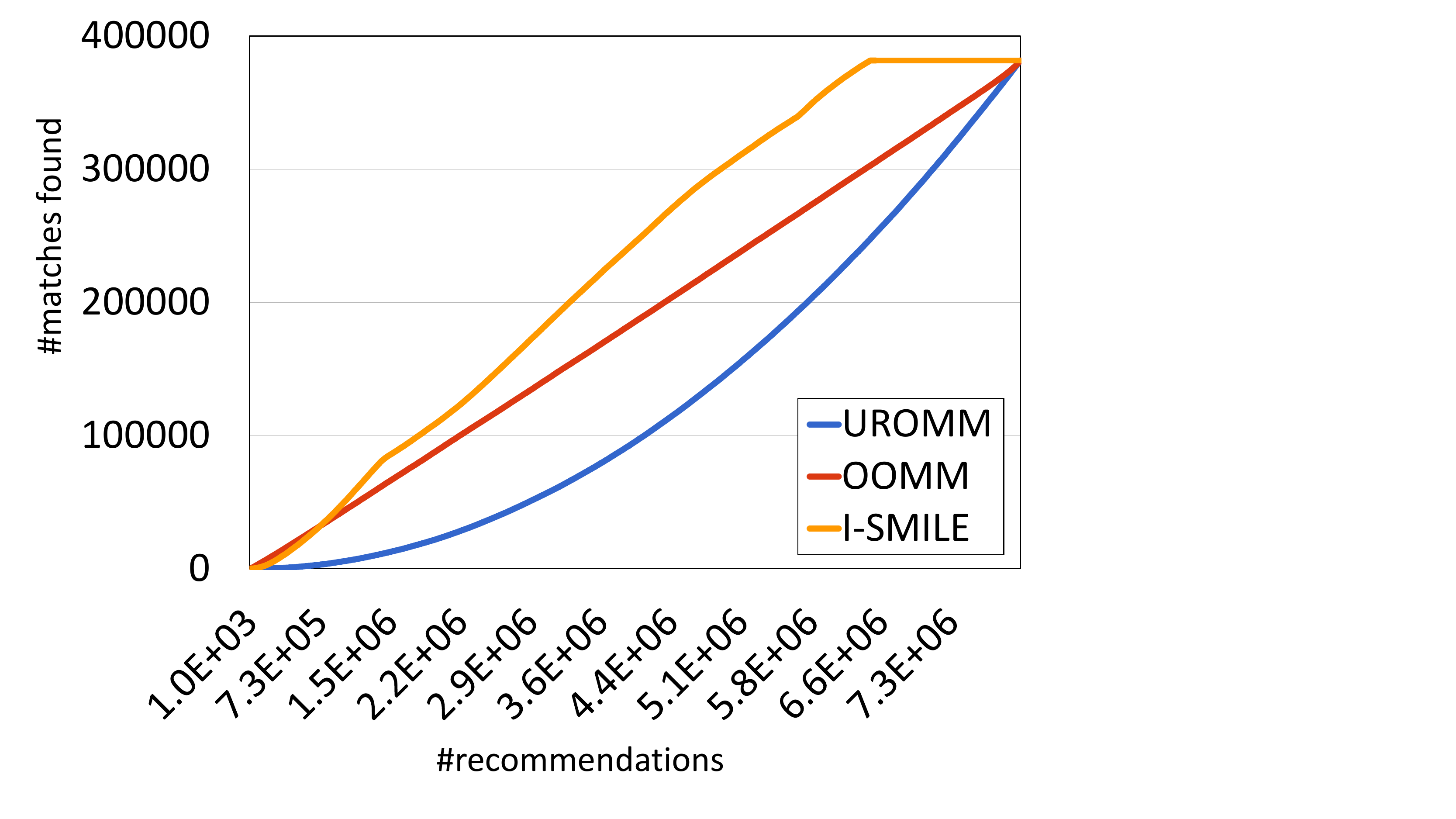}
			\caption{Dataset S-2000-2000.}
			\label{fig:s-uniform}
		\end{subfigure}%
		\begin{subfigure}{.5\textwidth}
			\centering
			\includegraphics[trim ={0.5cm 0cm 9.5cm 0},width=1\linewidth]{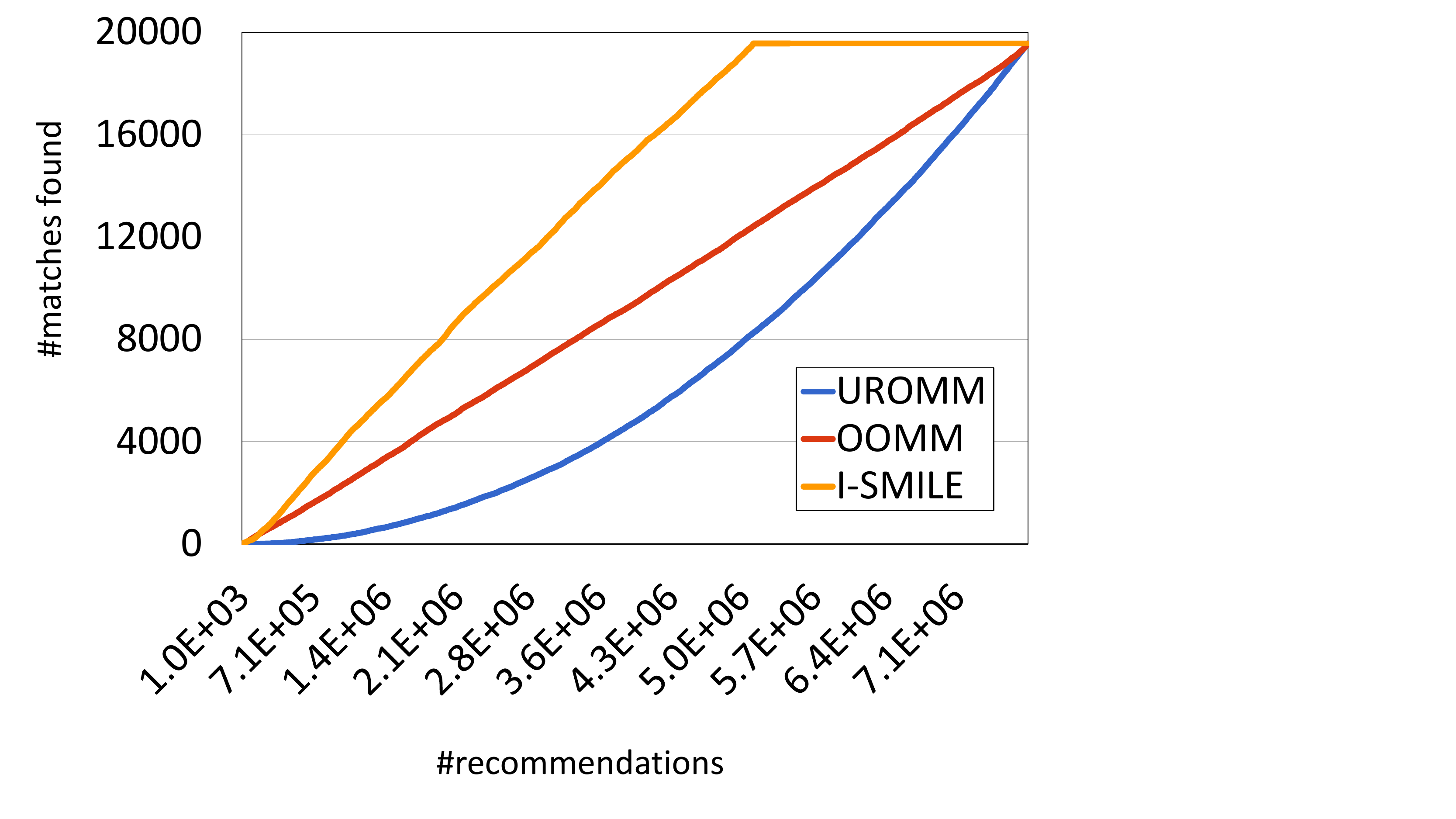}
			\caption{Dataset RW-1007-1286.}
			\label{fig:rw-1007-1286}
		\end{subfigure}
%		\begin{subfigure}{.5\textwidth}
%			\centering
%			\includegraphics[width=1\linewidth]{real-world-par=2.pdf}
%			\caption{Dataset RW-2265-3939.}
%			\label{fig:rw-2265-3939}
%		\end{subfigure}
	\end{center}
	\caption {Empirical comparison of the three algorithms \ismi{}, \om, and \uromm\, on the remaining datasets considered in this paper. Each plot reports number of disclosed matches vs. time (no. of recommendations).\label{fig:plots2}}
\end{figure}

\begin{table}[t!]
	\centering
	\renewcommand{\arraystretch}{1.2}
	\small
	\begin{tabular}{|l|rrrrrrr|}
		\hline
		Algorithm & \multicolumn{1}{l|}{S-20-22} & \multicolumn{1}{l|}{S-95-100} & \multicolumn{1}{l|}{S-500-480} & \multicolumn{1}{l|}{S-2000-2000} & \multicolumn{1}{l|}{RW-1007-} & \multicolumn{1}{l|}{RW-1526-} & RW-2265- \\ \hline
		\uromm{}  & $125K$ & $126K$ & $127K$ & $127K$ & $4.69K$ & $6.42K$ &  $8.35K$\\ \hline
		\om{}     & $183K$ & $184K$ & $186K$ & $187K$ & $6.75K$ & $9.55K$ &  $12.21K$\\ \hline
		I-\smi{}  & $312K$ & $296K$ & $263K$ & $225K$ & $9.79K$ & $13.92K$ &  $17.36K$ \\ \hline
	\end{tabular}
	\vspace{.25cm}
	\caption{Area under the curve values of all algorithms running on all datasets in Table \ref{tab:characteristics}.}
	\label{tab:under-curve}
	\vspace{-.5cm}
\end{table}


\begin{thebibliography}{10}

\bibitem{Akehurst2011}
Joshua Akehurst, Irena Koprinska, Kalina Yacef, Luiz~Augusto Pizzato, Judy Kay,
  and Tomasz Rej.
\newblock {CCR - A content-collaborative reciprocal recommender for online
  dating}.
\newblock In {\em IJCAI Int. Jt. Conf. Artif. Intell.}, pages 2199--2204, 2011.

\bibitem{Akehurst2012}
Joshua Akehurst, Irena Koprinska, Kalina Yacef, Luiz~Augusto Pizzato, Judy Kay,
  and Tomasz Rej.
\newblock {Explicit and Implicit User Preferences in Online Dating}.
\newblock {\em New Front. Appl. Data Min.}, pages 15--27, 2012.

\bibitem{Alanazi}
Ammar Alanazi and Michael Bain.
\newblock {A Scalable People-to-People Hybrid Reciprocal Recommender Using
  Hidden Markov Models}.
\newblock In {\em 2nd Int. Work. Mach. Learn. Methods Recomm. Syst.}, 2016.

\bibitem{brozovsky07recommender}
Lukas Brozovsky and Vaclav Petricek.
\newblock Recommender system for online dating service.
\newblock In {\em Proceedings of Znalosti 2007 Conference}, Ostrava, 2007. VSB.

\bibitem{ct10}
J.~Emmanuel Candes and Terence Tao.
\newblock The power of convex relaxation: Near-optimal matrix completion.
\newblock {\em IEEE Transactions on Information Theory}, 56(5):2053--2080,
  2010.

\bibitem{Christiano:2014:OLL:2591796.2591880}
Paul Christiano.
\newblock Online local learning via semidefinite programming.
\newblock In {\em Proceedings of the Forty-sixth Annual ACM Symposium on Theory
  of Computing}, STOC '14, pages 468--474, 2014.

\bibitem{dma10}
F.~Diaz, D.~Metzler, and S.~Amer-Yahia.
\newblock Relevance and ranking in online dating systems.
\newblock In {\em 33rd ACM conf. on Research and development in information
  retrieval, SIGIR’10}, pages 66--73, 2010.

\bibitem{ghp13}
C.~Gentile, M.~Herbster, and S.~Pasteris.
\newblock Online similarity prediction of networked data from known and unknown
  graphs.
\newblock In {\em Proceedings of the 23rd Conference on Learning Theory (26th
  COLT)}, 2013.

\bibitem{ha+12}
E.~Hazan, S.~Kale, and S.~Shalev-Shwartz.
\newblock Near-optimal algorithms for online matrix prediction.
\newblock In {\em Proceedings of the 25th Annual Conference on Learning Theory
  (COLT'12)}, 2012.

\bibitem{DBLP:conf/nips/HerbsterPP16}
M.~Herbster, S.~Pasteris, and M.~Pontil.
\newblock Mistake bounds for binary matrix completion.
\newblock In {\em NIPS 29}, pages 3954--3962, 2016.

\bibitem{hzs13}
Wenxing Hong, Siting Zheng, Huan Wang, and Jianchao Shi.
\newblock A job recommender system based on user clustering.
\newblock {\em Journal of Computers}, 8(8):1960--1967, 2013.

\bibitem{klt16}
V.~Koltchinskii, K.~Lounici, and A.~Tsybakov.
\newblock Nuclear norm penalization and optimal rates for noisy matrix
  completion.
\newblock In {\em arXiv:1011.6256v4}, 2016.

\bibitem{kgg12}
J.~Kunegis, G.~Gröner, and T.~Gottron.
\newblock Online dating recommender systems: The split-complex number approach.
\newblock In {\em 4th ACM RecSys workshop on Recommender systems and the social
  web}, 2012.

\bibitem{Li2012}
Lei Li and Tao Li.
\newblock {MEET: A Generalized Framework for Reciprocal Recommender Systems}.
\newblock In {\em Proc. 21st ACM Int. Conf. Inf. Knowl. Manag. (CIKM '12)},
  pages 35--44, 2012.

\bibitem{mm18}
Saket Maheshwary and Hemant Misra.
\newblock Matching resumes to jobs via deep siamese network.
\newblock In {\em Companion Proceedings of the The Web Conference 2018}, WWW
  '18, pages 87--88, 2018.

\bibitem{pt09}
Istvan Pilaszy and Domonkos Tikk.
\newblock Movies: Even a few ratings are more valuable than metadata.
\newblock In {\em In Proceedings of the 3rd ACM Conference on Recommender
  Systems (RecSys)}, 2009.

\bibitem{Pizzato2013}
Luiz~Augusto Pizzato, Tomasz Rej, Joshua Akehurst, Irena Koprinska, Kalina
  Yacef, and Judy Kay.
\newblock {Recommending people to people: the nature of reciprocal recommenders
  with a case study in online dating}.
\newblock {\em User Model. User-adapt. Interact.}, 23(5):447--488, 2013.

\bibitem{SSN04}
S.~Shalev-Shwartz, Y.~Singer, and A.~Ng.
\newblock Online and batch learning of pseudo-metrics.
\newblock In {\em Proceedings of the twenty-first international conference on
  Machine learning}, ICML 2004. ACM, 2004.

\bibitem{tsu+05}
K.~Tsuda, G.~R{\"a}tsch, and M.~K. Warmuth.
\newblock Matrix exponentiated gradient updates for on-line learning and
  bregman projections.
\newblock {\em Journal of Machine Learning Research}, 6:995--1018, 2005.

\bibitem{wa07}
M.~K. Warmuth.
\newblock Winnowing subspaces.
\newblock In {\em Proceedings of the 24th International Conference on Machine
  Learning}, pages 999--1006, 2007.

\bibitem{Xia2015}
Peng Xia, Benyuan Liu, Yizhou Sun, and Cindy Chen.
\newblock {Reciprocal Recommendation System for Online Dating}.
\newblock In {\em Proc. 2015 IEEE/ACM Int. Conf. Adv. Soc. Networks Anal. Min.
  2015 - ASONAM '15}, pages 234--241. ACM Press, 2015.

\end{thebibliography}
\end{document}